\def\eqref#1{equation~\ref{#1}}
\def\1{\bm{1}}
\DeclareMathAlphabet{\mathsfit}{\encodingdefault}{\sfdefault}{m}{sl}
\SetMathAlphabet{\mathsfit}{bold}{\encodingdefault}{\sfdefault}{bx}{n}
\theoremstyle{plain}
\newtheorem{theorem}{Theorem}[section]
\newtheorem{lemma}[theorem]{Lemma}
\newtheorem{corollary}[theorem]{Corollary}
\theoremstyle{definition}
\newtheorem{definition}{Definition}[section]
\newtheorem{example}[theorem]{Example}
\theoremstyle{remark}
\author{Stepan Manukhov\thanks{Equal contribution.} \\Faculty of Physics, MSU\thanks{Moscow State University}, Moscow, Russia\\
Applied AI Institute, Moscow, Russia\\
  \texttt{manukhov2000akk@gmail.com}  \\
   \And
   Alexander Kolesov* \\
  Applied AI Institute, Moscow, Russia\\
  AXXX, Moscow, Russia\\
  \texttt{alexander.kolesov@gmail.com } \\
   \And
  Vladimir V. Palyulin \\
  Applied AI Institute, Moscow, Russia\\
  \texttt{v.palyulin@gmail.com} \\
   \And
   \hspace{11mm}Alexander Korotin \\
  \hspace{11mm}Applied AI Institute, Moscow, Russia\\
  \hspace{11mm}AXXX, Moscow, Russia\\
  \hspace{11mm}\texttt{iamalexkorotin@gmail.com} \\
}
\title{Interaction Field Matching:
Overcoming\\ Limitations of Electrostatic Models}
\begin{document}

\maketitle

\begin{abstract}
\vspace{-4mm}
  Electrostatic field matching (EFM) has recently appeared as a novel physics-inspired paradigm for data generation and transfer using the idea of an electric capacitor. However, it requires modeling electrostatic fields using neural networks, which is non-trivial because of the necessity to take into account the complex field outside the capacitor plates. In this paper, we propose Interaction Field Matching (IFM), a generalization of EFM which allows using general interaction fields beyond the electrostatic one. Furthermore, inspired by strong interactions between quarks and antiquarks in physics, we design a particular interaction field realization which solves the problems which arise when modeling electrostatic fields in EFM. We show the performance on a series of toy and image data transfer problems. \textcolor{black}{Our code is available at \url{https://github.com/justkolesov/InteractionFieldMatching}}.
\end{abstract}

\begin{figure}[!h]
\vspace{-3mm}
\centering
\begin{subfigure}[b]{0.45\linewidth}
\includegraphics[width=0.995\linewidth]{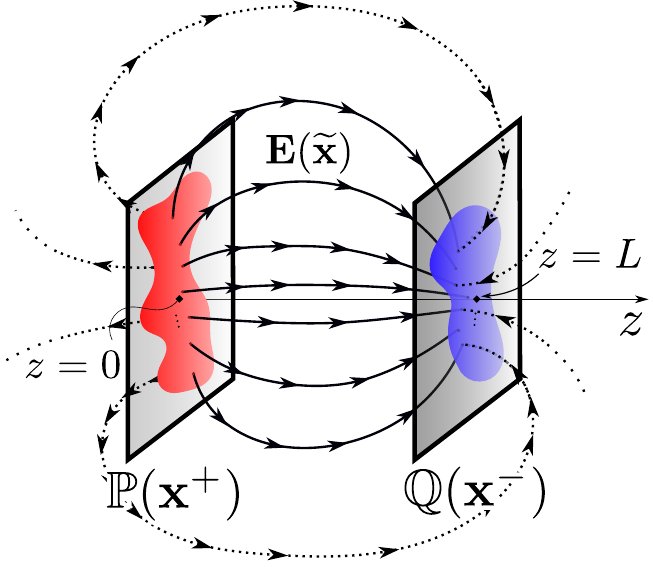}
\caption{\centering Electrostatic Field Matching\\ \hspace{5mm}\citep[EFM]{kolesov2025field}.}
\label{fig:EFM}
\end{subfigure}
\begin{subfigure}[b]{0.45\linewidth}
\centering
\includegraphics[width=0.995\linewidth]{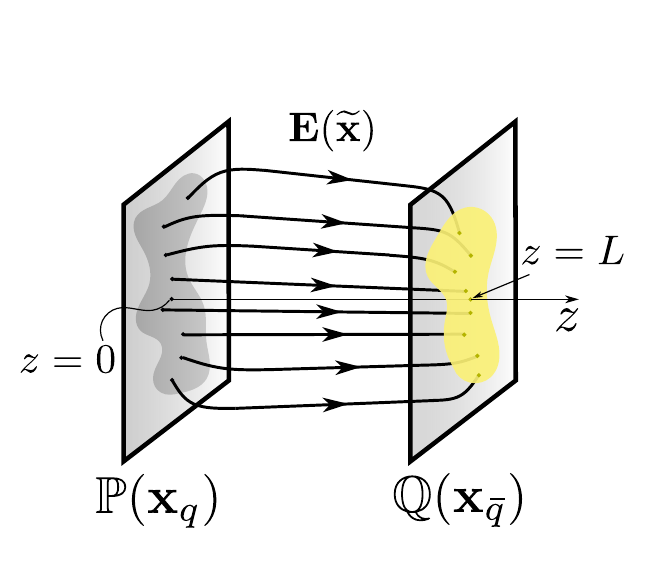}
\caption{\centering Interaction Field Matching\\ 
\hspace{5mm} (IFM, \textbf{ours}).}
\label{fig:IFM}
\end{subfigure}
\caption{Electrostatic Field Matching \citep[EFM]{kolesov2025field}  and our Interaction Field Matching (IFM) concepts. Two $D$-dimensional distributions $\mathbb{P}(\cdot)$, $\mathbb{Q}(\cdot)$ are placed in $\mathbb{R}^{D+1}$ at $z=0$ and $z=L$ \textbf{(a)} In EFM, the distributions are interpreted as charges creating a capacitor-like electric field. Movement along these field lines transfers the distributions, but requires consideration of all directions of the field lines. \textbf{(b)} Our IFM  is a generalization of the EFM to arbitrary interactions between charges. One possible realization of IFM is motivated by the strong interaction between quarks. This realization does not have backward-oriented lines and has a  smaller curvature of lines. }
\label{fig:EFM_IFM_concepts}
\end{figure}


\vspace{-5mm}\section{Introduction}
\label{introduction}


\vspace{-1mm}
While diffusion \citep{sohl2015deep,ho2020denoising} and flow matching \citep{liu2022flow,lipmanflow,albergo2023building} models dominate current research in deep generative modeling, a new paradigm grounded in Coulomb electrostatics has emerged \citep{xu2022poisson,kolesov2025field,cao2024pfgm++,cao2024sr,xu2023pfgm++}. Early work in this direction introduced Poisson Flow Generative Models  \citep[PFGM]{xu2022poisson,xu2023pfgm++}, focusing on noise-to-data generation. More recently, Electrostatic Field Matching   \citep[EFM]{kolesov2025field} generalized this framework, enabling electrostatic models to solve data-to-data transfer problems.

Electrostatic Field Matching (EFM) draws inspiration from electric capacitors, modeling input and target distributions as positive and negative electrostatic charges, respectively. The method performs distribution transfer by following electrostatic field lines (Fig. \ref{fig:EFM}). While conceptually simple, EFM faces significant practical challenges: it requires accounting for all field lines—including \textit{backward-oriented} ones (dotted lines in Fig. \ref{fig:EFM})—which exhibit high curvature and span the entire space. This makes them difficult to model, as the necessary training volume becomes unbounded.

In this paper, we tackle the limitations of EFM (\wasyparagraph\ref{EFM_limitations}) and deliver the following \textbf{main contributions}:
\begin{enumerate}[leftmargin=*]
    \item \textbf{Theory.} We propose Interaction Field Matching (IFM), a generic paradigm for distribution transfer rooted in pairwise interactions between particles from input and target distributions (\wasyparagraph\ref{main_theorem}, \ref{prac_implementation}). Compared to EFM  that relies on the electrostatic field, our approach allows us to leverage \textit{ general interaction fields} (beyond the Coloumb electrostatics) that satisfy certain physics-inspired properties such as the flux conservation and the \textit{generalized} superposition principle (\wasyparagraph\ref{lines_properties}).
    
    \item \textbf{Methodology \& practice.} Inspired by the \textit{strong interaction} of quarks and antiquarks in physics (\wasyparagraph\ref{Strong_interactions}), we design a particular realization of the interaction field (\wasyparagraph\ref{SFM_realization}) which has several preferable properties compared to the electrostatic field: \textbf{(a)} the field lines have almost straight segments, \textbf{(b)} the field vanishes outside the area between particles and \textbf{(c)} it allows using the Minibatch Optimal Transport \cite{pooladian2023multisample} to enforce certain properties on the transfer map.
\end{enumerate}

We showcase the performance of IFM on a series of toy and image data transfer problems (\wasyparagraph\ref{sec:experiments}).


\vspace{-2mm}
\section{Background and Related Works}
\label{background} 

\vspace{-1mm}
In this section, we first recall the concepts of the basic high-dimensional electrostatic (\wasyparagraph\ref{Electrostatics}). Then we discuss its application to generative modeling and data transfer problems using the example of EFM (\wasyparagraph\ref{EFM}). Finally, in \wasyparagraph\ref{EFM_limitations}, we discuss the limitations of EFM which motivated our study.

\vspace{-2mm}
\subsection{Electrostatics}
\label{Electrostatics}

We recall the fundamental principles of electrostatics necessary for understanding electrostatic-based generative models. A detailed treatment of three-dimensional electrostatics can be found in any standard electricity and magnetism textbook, e.g., \citep[Chapter 5]{LandauLifshitz2}. The generalization of electrostatics to high-dimensional spaces is discussed in \citep{caruso2023still}.




\textbf{The electrostatic field. } Let $q:\mathbb{R}^D\to\mathbb{R}$ be the density of a charge distribution on $\mathbb{R}^{D}$. The distribution may contain both positive and negative charges and is assumed to have finite total charge ($\int |q(\mathbf{x})|d\mathbf{x}<\infty$). At a point $\mathbf{x}\in\mathbb{R}^D$ it produces the electrostatic field $\mathbf{E}:\mathbb{R}^{D}\rightarrow\mathbb{R}^{D}$:
\begin{equation}
\label{3D_superposition}
        \mathbf{E}(\mathbf{x}) = \int \frac{1}{S_{D-1}}\frac{(\mathbf{x} - \mathbf{x}')}{||\mathbf{x} - \mathbf{x}'||^D_{2}}q(\mathbf{x}')d\mathbf{x}', 
\end{equation}

where $S_{D-1}$ is the surface area of an $(D-1)$-dimensional sphere with unit radius. That is, the field at $\mathbf{x}$ is a weighted sum of fields from all charges $\mathbf{x}'$, where closer charges yield stronger field.

\textbf{Electric field strength lines.} 
An electric field strength line is a curve $\mathbf{x}(\tau)\in\mathbb{R}^D,\;\tau\in[a,b]\subset\mathbb{R}$ whose tangent to each point is parallel to the electric field at that point. In other words:
    \begin{equation}
        \frac{d\mathbf{x}(\tau)}{d\tau} = \mathbf{E}(\mathbf{x}).
        \label{3D_electric_line}
    \end{equation}
Electric field lines are a key concept for electrostatic generative models such as PFGM and EFM.

\vspace{-2mm}
\subsection{Electrostatic Field Matching (EFM)}
\label{EFM}

The first application of electrostatics to generative modeling problems was carried out in the works of \cite[PFGM]{xu2022poisson,xu2023pfgm++}, where the authors proposed a model applicable to noise-to-data generative problems. Electrostatic Field Matching  (EFM) extends the application of electrostatics to the case of data-to-data transfer, and uses previously unconsidered properties of electric field lines. We describe here EFM since it is more general than PFGM, and our work is built upon it.

EFM works with two data distributions $\mathbb{P}(\textbf{x}^{+})$ and $\mathbb{Q}(\textbf{x}^-), \textbf{x}^\pm\in\mathbb{R}^D$. The first distribution is assigned a positive charge, while the second distribution is assigned a negative charge. The distributions are placed in the \textit{extended space} $\mathbb{R}^{D+1}$ on the planes $z=0$ and $z=L$, respectively (see Fig. \ref{fig:EFM}). One can think of it as a $(D+1)$-dimensional \textbf{capacitor}. A point in this space has the form $(x_1, x_2,...,x_D,z) = (\textbf{x}, z) = \widetilde{\textbf{x}} \in \mathbb{R}^{D+1}.$ The field is found from the superposition principle:
\begin{equation}
    \textbf{E}(\widetilde{\textbf{x}}) = \textbf{E}_{+}(\widetilde{\textbf{x}}) + \textbf{E}_{-}(\widetilde{\textbf{x}}),
    \label{main_field_bridge}
\end{equation}
where $\textbf{E}_{+}(\textbf{x})$ and $\textbf{E}_{-}(\widetilde{\textbf{x}})$ are the fields created by $\mathbb{P}(\widetilde{\textbf{x}}^+)$ and $\mathbb{Q}(\widetilde{\textbf{x}}^-)$, respectively. 

Then, as proved in the original paper, the \textbf{movement along the field lines} $d\widetilde{\textbf{x}} = \textbf{E}(\widetilde{\textbf{x}})d\tau$ \textbf{performs the transfer} between the distributions $\mathbb{P}(\widetilde{\textbf{x}}^+)$ and $\mathbb{Q}(\widetilde{\textbf{x}}^-)$. This fact has opened the possibility to use electrostatics both in data generation and transfer problems. Indeed, to move between data distributions, it is sufficient to follow the electric field lines. 

To obtain a distribution transfer model, one trains a neural network $f_{\theta}(\cdot): \mathbb{R}^{D+1} \to \mathbb{R}^{D+1}$ to recover the normalized electric field $\frac{\mathbf{E}(\widetilde{\mathbf{x}})}{||\mathbf{E}(\widetilde{\mathbf{x}})||_{2}}$, e.g., by using a loss function 
\begin{equation}\mathbb{E}_{\widetilde{\mathbf{x}}}|| f_\theta(\widetilde{\mathbf{x}})-\frac{\mathbf{E}(\widetilde{\mathbf{x}})}{||\mathbf{E}(\widetilde{\mathbf{x}})||_{2}}||_{2}\to\min_{\theta}. 
\label{regression-efm}\end{equation}
Here, $\mathbf{E}(\widetilde{\mathbf{x}})$ is calculated with (\ref{main_field_bridge}), where $\mathbf{E}^{\pm}(\widetilde{\mathbf{x}})$ is approximated by empirical samples of $\mathbb{P}(\widetilde{\textbf{x}}^+)$ and $\mathbb{Q}(\widetilde{\textbf{x}}^-)$, i.e., data. Monte Carlo averaging $\mathbb{E}_{\widetilde{\mathbf{x}}}$ is done on the points $\widetilde{\mathbf{x}}$ around the plates. This set of points if called the \textbf{training volume}; its selection is crucial but highly non-trivial \citep{xu2022poisson}.

\vspace{-1mm}
\subsection{Limitations of EFM}
\label{EFM_limitations}

\begin{figure}[!b]
\vspace{-6mm}
  \centering
  \begin{subfigure}[b]{0.48\linewidth}
    \includegraphics[width=\linewidth]{EFMMixtureLines.pdf}
    \vspace{-7mm}
    \caption{EFM}
    \label{fig:EFMMixtureLines}
  \end{subfigure}
  \hfill
  \begin{subfigure}[b]{0.48\linewidth}
    \includegraphics[width=\linewidth]{SFMMixtureLines__1_.pdf}
    
    \caption{Our IFM (independent plan)}
    \label{fig:SFMMixtureLines}
  \end{subfigure}
  \vspace{-2mm}
  \caption{Limitations of the EFM \& comparison with IFM. \textbf{(a)} The toy experiment ($1\rightarrow 2$ Gaussians) shows that even some \textit{forward-oriented} field lines can leave $z>L$. These trajectories have increased length and curvature. Moreover, the transfer along only the forward-oriented lines does not cover the target distribution (green point cloud does not coincide with the red one). \textbf{(b)} Our realization of IFM (\wasyparagraph\ref{SFM_realization}) does not have the above mentioned problems: the field lines between the planes are almost straight, they do not extend beyond $z\!>\!L$ and are enough to cover the entire target distribution.}
  \label{fig:Dippole}
  \vspace{-7mm}
\end{figure}


Despite its performance, EFM has a few weak spots coming from the properties of electrostatic fields:

\textbf{1. Backward-oriented field lines.} Each plate produces two sets of electric field lines (Fig. \ref{fig:EFM}). The first set (\textit{forward-oriented} lines) is directed toward the second plate. The second set (\textit{backward-oriented} lines) starts from the first plate in the opposite direction. In practice, the forward-oriented set of lines is chosen because it requires less training volume and because these lines are less curved than the lines of the backward-oriented series. However, backward-oriented lines play a critical role for the full coverage of the target distribution. The use of only forward-oriented lines is not sufficient to fully cover the distribution of $\mathbb{Q}(\cdot)$, see the illustration in Fig.\ref{fig:EFMMixtureLines}.

\vspace{0.5em} 
\textbf{2. Line termination problem.} Even some forward-oriented field lines can pass the boundary $z=L$ before reaching the second distribution. In such a case, the field line enters the region $z>L$ (see Fig. \ref{fig:EFMMixtureLines}) and requires further integration to come back to target distribution at $z=L$. This problem complicates the data transfer procedure. Indeed, one has to design some criterion to decide whether the line terminates at $z=L$ or should be integrated further.

\vspace{0.5em}
\textbf{3. Training volume selection.} From the first two problems follows the challenge of choosing the training volume, i.e., points $\widetilde{\mathbf{x}}$ in \eqref{regression-efm} for learning the field. For the correct transport between $\mathbb{P}(\cdot)$ and $\mathbb{Q}(\cdot)$, it is necessary to know not only the field between the plates ($0<z<L$), but also beyond the plates ($z>L$ for lines leaving the boundary and $z<0$ for backward-oriented lines). Therefore, it is necessary to choose a large training volume for learning of the neural network. Moreover, the size of the required volume is initially unknown.

Below we propose a generalization of EFM which aims to ease the above-mentioned problems.

\vspace{-3mm}
\section{Interaction Field Matching (IFM)}
\label{main}

\vspace{-2mm}
This section describes our proposed Interaction Field Mathcing (IFM), the generalization of the electrostatic paradigm in generative models. In \wasyparagraph\ref{Strong_interactions}, we start by motivating the IFM with the strong interaction between subnuclear particles (quarks) in physics. In \wasyparagraph\ref{lines_properties}, we present the necessary requirements for an interaction field required for our ideas to work. In \wasyparagraph\ref{main_theorem}, we formulate the main theorem devoted to transfer of distributions into each other by means of a proper interaction field. The \wasyparagraph\ref{SFM_realization} describes a particular realization of the field inspired by strong interactions. In \wasyparagraph\ref{prac_implementation}, we report the learning and inference algorithms. The \underline{proofs} are in Appendix \ref{app1}.

\vspace{-2mm}
\subsection{Motivation: strong interaction in physics}
\label{Strong_interactions}


\begin{wrapfigure}[15]{r}{65mm}
\raggedleft
\vspace{-10mm}
\includegraphics[width=60mm]{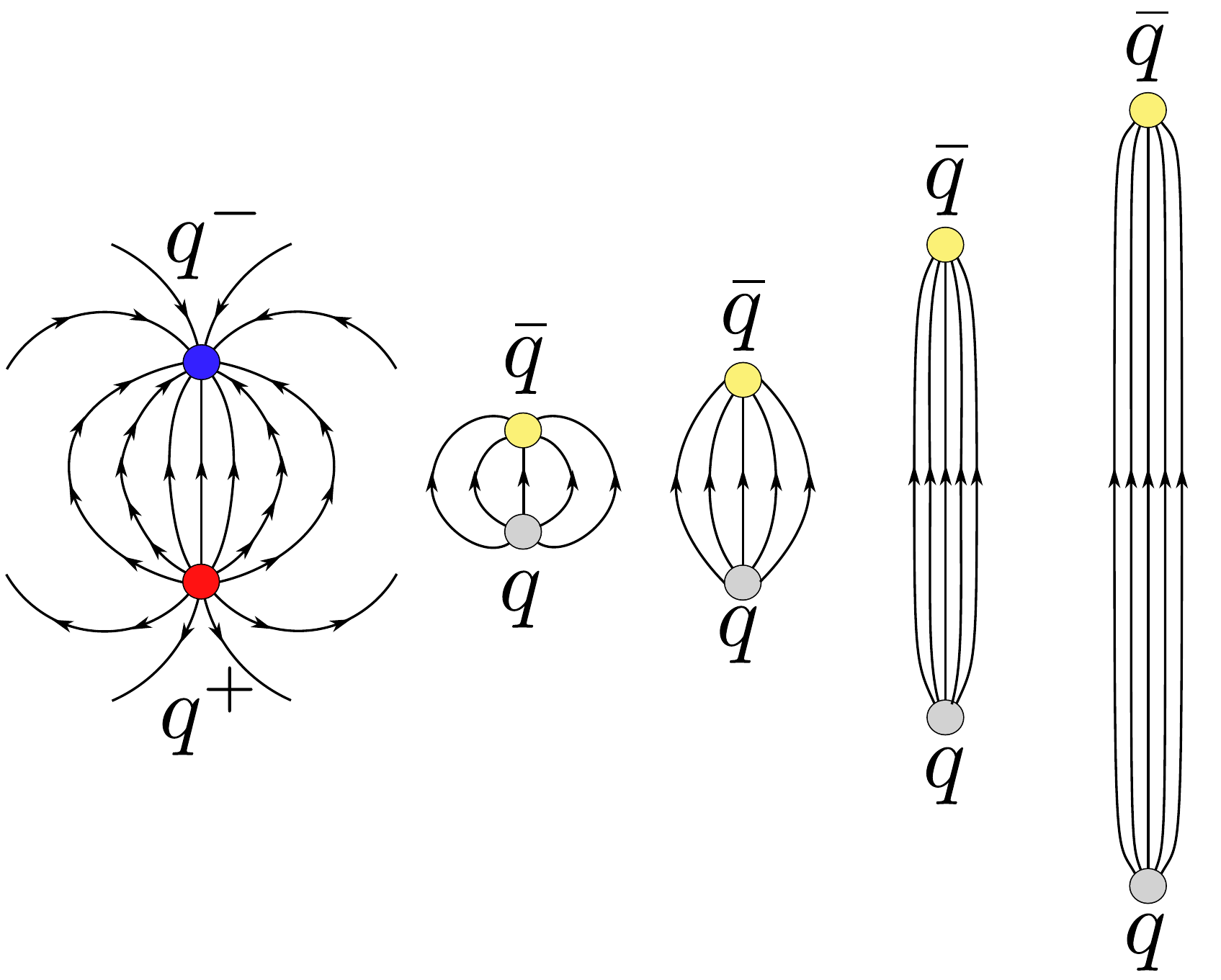}
\vspace{-3.5mm}
\caption{Comparison of electrostatic interaction between charges $q^\pm$ (left) and strong interaction between quarks $q, \bar{q}$ (right). At small distances, the strong interaction resembles the electromagnetic interaction, but as quarks separate, the field lines straighten into a string.}
\label{fig:electro_strong_comparison}
\vspace{-51mm}
\end{wrapfigure}

\vspace{-0.5mm}
To address EFM challenges, we propose utilizing the \textbf{strong interaction} \citep[\wasyparagraph 7.4]{Quevedo:2024kmy}— a fundamental force binding subnuclear particles. The smallest particles involved in this interaction are called \textbf{quarks}.

A typical configuration of the strong field is shown in Fig.~\ref{fig:electro_strong_comparison}, highlighting key contrasts with electromagnetic fields. At small distances, quark $q$ and antiquark $\overline{q}$ interact similarly to charged particles $q_{\pm}$. However, as the separation increases, the strong field lines become considerably straighter. 

Unfortunately, strong field strength calculation requires complex quantum-mechanical computations. Although our work is motivated by quark interactions, unlike EFM and PFGM, \textit{our setting uses modified physical interactions}.

\vspace{-2mm}
\subsection{Properties of proper interaction fields}
\label{lines_properties}

\vspace{-0.5mm}
Here we list the most general requirements for the interaction field $\mathbf{E}(\widetilde{\mathbf{x}})$ which are sufficient to perform data transfer. These requirements allow for broad flexibility in the field design. In particular, it could be an electrostatic field (see Example \ref{example-efm-interaction} below). Nevertheless, to preserve the concept of strong interaction, we will still refer to particles as quarks and antiquarks.

Suppose that a quark $q$ is located at the point $\widetilde{\mathbf{x}}_{q} \in \mathbb{R}^{D+1}$ and an antiquark $\bar{q}$ at the point $\widetilde{\mathbf{x}}_{{\bar{q}}} \in \mathbb{R}^{D+1}$ and produce interation field $\mathbf{E}(\widetilde{\mathbf{x}})= E(\widetilde{\mathbf{x}})\cdot \mathbf{n}(\widetilde{\mathbf{x}})$, where $\mathbf{n}(\widetilde{\mathbf{x}})$ is the unit vector tangent to the field line and $E(\widetilde{\mathbf{x}})$ is the magnitude.  We require the following properties of the interaction field $\mathbf{E}(\widetilde{\mathbf{x}})$:

\vspace{-1mm}
\textbf{1. The start and the termination of lines at (anti)quarks.} For $q\bar{q}$-pair with equal charges, the interaction field  line must start at the quark and end at the antiquark:
\begin{equation}
    \begin{cases}
    \frac{d\widetilde{\mathbf{x}}(\tau)}{d\tau} = \mathbf{n}\big(\widetilde{\mathbf{x}}(\tau)\big), \\
    \widetilde{\mathbf{x}}(\tau_s) = \widetilde{\mathbf{x}}_{q},\quad\widetilde{\mathbf{x}}(\tau_f) = \widetilde{\mathbf{x}}_{\bar{q}},
\end{cases}
\end{equation}

where $\tau_s,\tau_f$ correspond to the initial and final points of the field line. 

\begin{wrapfigure}[8]{r}{40mm}
\raggedleft
\vspace{-12.5mm}
\includegraphics[width=40mm]{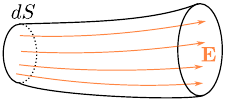}
\caption{An illustration of the flux conservation.}
\label{fig:flux}
\vspace{-5mm}
\end{wrapfigure}

\textbf{2. Flux conservation.} For a $q\bar{q}$-pair with equal charges, an interaction field must maintain the following property along the stream tube
    \begin{equation}
        \mathbf{E}(\widetilde{\mathbf{x}}) \cdot \mathbf{dS} = \text{const},
    \end{equation}
    where \textbf{dS} is a \textit{vector} with a length equal to the area $dS$ of the surface, where the considered stream tube rests. The direction of the vector is orthogonal to the surface. In turn, $\mathbf{E}\cdot\mathbf{dS} = E dS \cos\alpha = E_1dS_1 + ... + E_DdS_D$ denotes the inner product between the vectors $\mathbf{E}$ and $\mathbf{dS}$. Informally, this property means that the number of field lines along the stream surface is constant, see Fig. \ref{fig:flux}. We additionally assume that the \textit{total} flux between quark-antiquark pair is proportional to the charge of the quark $q$ that creates this field, and does not depend on the relative position of the quark-antiquark pair.

\textbf{3. Generalized superposition principle w.r.t. a transport plan}. Consider two continuous distributions $q(\cdot),\overline{q}(\cdot)$ of quarks and antiquarks, respectively, and assume they have the same total charge. Let $\pi(\cdot,\cdot)$ be a transport plan between these distributions, i.e., it satisfies the non-negativity property $\pi(\mathbf{x}_q, \mathbf{x}_{\bar{q}})\geq0$ and the marginal constraints $\int\pi(\mathbf{x}_q,\mathbf{x}_{\bar{q}})d\mathbf{x}_{\bar{q}} = q(\mathbf{x}_q), \int\pi(\mathbf{x}_q,\mathbf{x}_{\bar{q}})d\mathbf{x}_{q} = \overline{q}(\mathbf{x}_{\bar{q}})$. Let $\mathbf{E}_{\mathbf{x}_q,\mathbf{x}_{\bar{q}}}(\widetilde{\mathbf{x}})$ denote the field produced by a pair of a unit quark and a unit antiquark located at $\mathbf{x}_q,\mathbf{x}_{\bar{q}}$. Then the field of the system of quarks $q$ and antiquarks $\overline{q}$ w.r.t. $\pi$ is given by:
\begin{equation}
\label{Strong_superposition}
    \mathbf{E}_{\pi}(\widetilde{\mathbf{x}}) =\iint \pi(\mathbf{x}_q,\mathbf{x}_{\bar{q}}) \mathbf{E}_{\mathbf{x}_q,\mathbf{x}_{\bar{q}}}(\widetilde{\mathbf{x}})d\mathbf{x}_q d\mathbf{x}_{\bar{q}}.
\end{equation}
Physically, it means that $\mathbf{E}$ is the average of fields of interacting pairs $(\mathbf{x}_q,\mathbf{x}_{\bar{q}})$, and $\pi(\mathbf{x}_q,\mathbf{x}_{\bar{q}})$ characterizes the strength of pairing between quarks $\mathbf{x}_q$ and $\mathbf{x}_{\bar{q}}$. For completeness, we note that the superposition principle can be analogously stated for the discrete systems of quarks.


Properties 1-2 are formulated for a quark-antiquark pair. However, if these properties are true for the $q\bar{q}$-pair, they remain valid for more complex discrete and continuous systems.

\vspace{1mm}
\begin{lemma}[On the field lines]
    Let $q(\cdot)$ and $\overline{q}(\cdot)$ be two compactly supported (discrete or continuous) distributions of quarks and antiquarks. Let them satisfy $\int q(\mathbf{x})d\mathbf{x}_q = \int\overline{q}(\mathbf{x}_{\bar{q}})d\mathbf{x}_{\bar{q}}$. Let the field of the quark-antiquark pair $\mathbf{E}_{\mathbf{x}_q,\mathbf{x}_{\bar{q}}}(\widetilde{\mathbf{x}})$ start at $\mathbf{x}_q$ and terminate at $\mathbf{x}_{\bar{q}}$ (Property 1), and conserve flux along the current tube (Property 2). Then the total field (\ref{Strong_superposition}) from all quarks and antiquarks
    
    (a) Start at $\text{supp}(\mathbb{P})$ and end at $\text{supp}(\mathbb{Q})$, except perhaps for the number of lines of zero flux. 
    
    (b) Conserve flux along the current tubes.
\end{lemma}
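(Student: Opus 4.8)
The plan is to reduce both claims to a single computation of the divergence of the superposed field $\mathbf{E}_\pi$, exploiting the fact that flux conservation (Property 2) is equivalent to the field being solenoidal away from the charges. First I would translate Properties 1 and 2 for a single unit pair into the weak (distributional) identity
\[
\nabla\cdot\mathbf{E}_{\mathbf{x}_q,\mathbf{x}_{\bar{q}}}(\widetilde{\mathbf{x}}) = \delta(\widetilde{\mathbf{x}}-\mathbf{x}_q) - \delta(\widetilde{\mathbf{x}}-\mathbf{x}_{\bar{q}}),
\]
i.e.\ each pair field has a unit source at the quark $\mathbf{x}_q$, a unit sink at the antiquark $\mathbf{x}_{\bar{q}}$, and is divergence-free elsewhere. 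Property 2 (constant flux along any stream tube, whose lateral surface carries no flux by construction) forces the divergence to vanish off the endpoints via the divergence theorem; Property 1 (a line runs from $\mathbf{x}_q$ to $\mathbf{x}_{\bar{q}}$) together with the unit-charge normalization fixes the sign and weight of the two point singularities, since the total outward flux around $\mathbf{x}_q$ equals the sourced charge.

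Next I would apply the generalized superposition principle \eqref{Strong_superposition} and pull the divergence through the integral by linearity, then collapse the two $\delta$-integrals against the marginals of $\pi$, which are by hypothesis exactly $q$ and $\overline{q}$:
\[
\nabla\cdot\mathbf{E}_\pi(\widetilde{\mathbf{x}}) = \iint \pi(\mathbf{x}_q,\mathbf{x}_{\bar{q}})\big[\delta(\widetilde{\mathbf{x}}-\mathbf{x}_q)-\delta(\widetilde{\mathbf{x}}-\mathbf{x}_{\bar{q}})\big]\,d\mathbf{x}_q\,d\mathbf{x}_{\bar{q}} = q(\widetilde{\mathbf{x}})-\overline{q}(\widetilde{\mathbf{x}}).
\]
Part (b) is then immediate: on the complement of $\mathrm{supp}(q)\cup\mathrm{supp}(\overline{q})$ the total field is solenoidal, so the divergence theorem applied to any stream tube segment (lateral flux zero, equal and opposite end fluxes) yields $\mathbf{E}_\pi\cdot\mathbf{dS}=\mathrm{const}$ along the tube, which is exactly flux conservation for the aggregate field.

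For part (a) I would argue that a stream tube carrying nonzero flux cannot be created or destroyed in the source-free region: by part (b) its flux is constant, so it can only originate where $\nabla\cdot\mathbf{E}_\pi>0$ and terminate where $\nabla\cdot\mathbf{E}_\pi<0$, that is, on $\mathrm{supp}(q)=\mathbb{P}$ and $\mathrm{supp}(\overline{q})=\mathbb{Q}$ respectively. Hence every flux-carrying field line of $\mathbf{E}_\pi$ runs from $\mathbb{P}$ to $\mathbb{Q}$, establishing the claim for all but the exceptional lines.

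The main obstacle is precisely the caveat ``except perhaps for the lines of zero flux.'' Superposing many source-to-sink pair fields can generate critical points where $\mathbf{E}_\pi=0$ and the unit tangent $\mathbf{n}$ is undefined; through such points pass separatrices and possibly closed or non-terminating lines that need not connect $\mathbb{P}$ to $\mathbb{Q}$. The argument must show these lines carry zero flux and therefore form a negligible set: any stream tube enclosing such a line has vanishing cross-sectional flux, so shrinking the tube drives its flux to $0$, and the transport is genuinely realized only along the positive-flux lines handled above. Making this rigorous requires a careful definition of a stream tube in a neighbourhood of field zeros and a flux-decomposition (coarea-type) argument certifying that the zero-flux remainder is measure-zero with respect to the flux measure; this is the delicate step, whereas the divergence identity and part (b) are essentially bookkeeping once the single-pair case is set up.
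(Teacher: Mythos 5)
Your route is genuinely different from the paper's and is, where it works, a cleaner packaging: you recast Property 2 as solenoidality and derive $\nabla\cdot\mathbf{E}_\pi = q-\overline{q}$ by pushing the divergence through \eqref{Strong_superposition} and collapsing against the marginals of $\pi$. This identity is exactly the differential form of the paper's generalized Gauss theorem (Lemma \ref{generalized_gauss}), which the paper proves by the same marginalization trick but keeps in integral form; and your part (b), the divergence theorem applied to a tube segment with zero lateral flux, is a rigorous rendering of the paper's one-line linearity computation $\int\mathbf{E}\cdot\mathbf{dS}=\int\pi_{q\bar q}\,d\mathbf{x}_q d\mathbf{x}_{\bar q}\cdot\mathrm{const}=\mathrm{const}$. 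One small correction: the paper never normalizes the pair flux to one, so your delta identity should carry the constant $\Phi_0=\iint_{\partial M}\mathbf{E}_{q\bar q}\cdot\mathbf{dS}$, i.e. $\nabla\cdot\mathbf{E}_{\mathbf{x}_q,\mathbf{x}_{\bar q}}=\Phi_0\left(\delta(\widetilde{\mathbf{x}}-\mathbf{x}_q)-\delta(\widetilde{\mathbf{x}}-\mathbf{x}_{\bar q})\right)$; this matters downstream, where $\Phi_0$ appears explicitly in Corollary \ref{corollary_flux} and the main theorem's proof.

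The genuine gap is in part (a). Your statement that a flux-carrying tube "can only originate where $\nabla\cdot\mathbf{E}_\pi>0$ and terminate where $\nabla\cdot\mathbf{E}_\pi<0$" tacitly assumes every such tube originates and terminates at all. Solenoidality does not exclude flux escaping to infinity: a constant field is divergence-free, yet all of its lines run from infinity to infinity carrying flux. This is precisely the case on which the paper spends half its proof: since each pair field is continuous and its lines connect two finite points, $\int_\Sigma \mathbf{E}_{q\bar q}\cdot\mathbf{dS}\to 0$ over surfaces receding to infinity, and compactness of the supports lets this vanishing pass through the $\pi$-average, so lines going to infinity carry zero flux. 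Your proposal never invokes compact support or decay, so this case is simply missing. The second exceptional family, lines stopping at an interior point, the paper dispatches in one line ($\mathbf{E}=0$ at any stopping point, hence zero flux), which matches your separatrix discussion; but your proposed shrinking-tube certification for closed or wandering lines does not work as stated, since any individual line has vanishing flux under a shrinking tube, and a divergence-free field can carry nonzero flux density along closed lines (e.g. an azimuthal vortex field $(-y,x,0)/(x^2+y^2)$, whose lines are circles) — so that case needs structural input beyond the divergence identity, a point on which the paper's own proof is also silent. Restoring the infinity argument along the paper's lines and keeping $\Phi_0$ explicit would bring your proof to the paper's level of completeness.
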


\vspace{2mm}
\begin{example} [The electrostatic field]
    The electrostatic field (\ref{main_field_bridge}) is a special case of the interaction field satisfying properties 1-3 for an \textbf{arbitrary} transport plan. Indeed, property 1 corresponds to the dipole field, property 2 follows from Gauss's theorem \citep[\wasyparagraph2.1]{kolesov2025field}, and property 3 is easy to check:
    \label{example-efm-interaction}
\begin{equation}
    \begin{split}
        & \mathbf{E}_{\pi}(\widetilde{\mathbf{x}}) = \int \left(\mathbf{E}_{q^+}(\widetilde{\mathbf{x}},\widetilde{\mathbf{x}}^+)+\mathbf{E}_{q^-}(\widetilde{\mathbf{x}},\widetilde{\mathbf{x}}^-)\right)\pi(\widetilde{\mathbf{x}}^+,\widetilde{\mathbf{x}}^-)d\widetilde{\mathbf{x}}^+d\widetilde{\mathbf{x}}^- = \\
        & \hspace{-5mm}= \int \mathbf{E}_{q^+}(\widetilde{\mathbf{x}},\widetilde{\mathbf{x}}^+)q^{+}(\widetilde{\mathbf{x}}^+)d\widetilde{\mathbf{x}}^++\int \mathbf{E}_{q^-}(\widetilde{\mathbf{x}},\widetilde{\mathbf{x}}^-)|q^{-}|(\widetilde{\mathbf{x}}^-)d\widetilde{\mathbf{x}}^- = \mathbf{E}^+(\widetilde{\mathbf{x}})+\mathbf{E}^-(\widetilde{\mathbf{x}})=(\ref{main_field_bridge}),
    \end{split}
\end{equation}
\vspace{-4mm}

    where $\mathbf{E}_{q}(\widetilde{\mathbf{x}},\widetilde{\mathbf{x}}^\pm)$ is the electric field at a point $\widetilde{\mathbf{x}}$ produced by a point charge $q=\pm1$ located at a point $\widetilde{\mathbf{x}}^\pm$. \textit{Electrostatic fields are independent} of the transport plan $\pi$, i.e., changing the plan does not alter the field as the total field of two charges itself is a sum of two separate fields by these charges.
\end{example}


\vspace{-3mm}
\subsection{Main theorem}
\label{main_theorem}

\vspace{-1mm}
 Let $\mathbb{P}(\mathbf{x}_q)$ and $\mathbb{Q}(\mathbf{x}_{\bar{q}})$ be two $D$-dimensional data distributions. Similarly to EFM, we put these distributions in the \textit{extended space} $\mathbb{R}^{D+1}$ on the planes $z=0$ and $z=L$, respectively, see Fig. \ref{fig:IFM}. 
Now assume that $\mathbb{P}$ and $\mathbb{Q}$ are the distributions of quarks $q$ and antiquarks $\overline{q}$, respectively. Fix a transport plan $\pi$ between them, e.g., set the independent one $\pi=q\times \overline{q}=\mathbb{P}\times\mathbb{Q}$. Let $\mathbf{E}_{\pi}(\widetilde{\mathbf{x}})$ be a proper interaction field, i.e., satisfying properties 1-3 of \wasyparagraph\ref{lines_properties}. 
\begin{wrapfigure}[15]{r}{40mm}
\vspace{10mm}
\raggedleft
\vspace{-11mm}
\includegraphics[width=40mm]{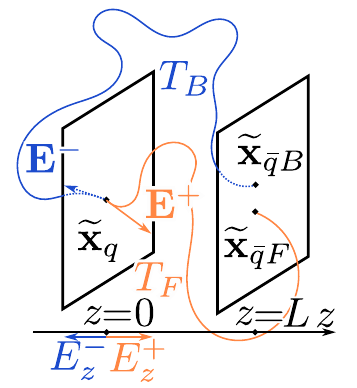}
\vspace{-7mm}
\caption{Illustration of\\ forward, backward  lines.}
\vspace{-50mm}
\label{fig:T_FB_maps}
\end{wrapfigure}
For transport between the distributions, we define a map $T:\text{supp}(\mathbb{P})\to\text{supp}(\mathbb{Q})$ that moves along the field lines by integrating $d\widetilde{\textbf{x}} = \textbf{E}(\widetilde{\textbf{x}})d\tau$, where $\textbf{E}(\widetilde{\textbf{x}})$ is defined by (\ref{Strong_superposition}).
Field lines starting on the distribution $\mathbb{P}(\mathbf{x}_q)$ can emanate in two directions: forward-oriented, directed toward $\mathbb{Q}(\mathbf{x}_{\bar{q}})$, and backward-oriented, pointed initially in the opposite direction (see Fig. \ref{fig:T_FB_maps}). The choice between these directions of motion must be made stochastically. A \underline{definition of the stochastic map $T$} is provided in Appendix \ref{T_def}.

Then, for this map $T(\mathbf{x}_q)$, we prove the following key theorem:

\vspace{1mm}
\begin{theorem}[Interaction Field Matching]
    Let $\mathbb{P}(\textbf{x}_q)$ and $\mathbb{Q}(\textbf{x}_{\bar{q}})$ be two continuous data distributions that have compact support. Let $\textbf{x}_q$ be a random variable distributed as $\mathbb{P}(\textbf{x}_q)$. Then $\textbf{x}_{\bar{q}}=T(\textbf{x}_q)$ is a random variable distributed as $\mathbb{Q}(\textbf{x}_{\bar{q}})$:
    \begin{equation}
    \label{SFM_main_theorem}
        \textbf{x}_q\sim \mathbb{P}(\textbf{x}_q) \Rightarrow T(\textbf{x}_q) = \textbf{x}_{\bar{q}}\sim \mathbb{Q}(\textbf{x}_{\bar{q}}).
    \end{equation}
    \vspace{-5mm}
\end{theorem}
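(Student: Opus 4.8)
The plan is to reduce the statement to a flux-conservation (Gauss-type) argument combined with a change of variables along the field lines. The guiding intuition is that a proper interaction field transports probability mass exactly as it transports flux: the density of lines crossing the source plane near $\mathbf{x}_q$ is proportional to $\mathbb{P}(\mathbf{x}_q)$, the density crossing the target plane near $\mathbf{x}_{\bar q}$ is proportional to $\mathbb{Q}(\mathbf{x}_{\bar q})$, and the stochastic map $T$ merely relabels each line by its two endpoints. Concretely, I would establish the two flux identities $E_z^+(\mathbf{x}_q)+|E_z^-(\mathbf{x}_q)| = c\,\mathbb{P}(\mathbf{x}_q)$ on $z=0$ and $E_z(\mathbf{x}_{\bar q},L) = c\,\mathbb{Q}(\mathbf{x}_{\bar q})$ on $z=L$ for a common constant $c>0$, and then show that pushing $\mathbb{P}$ through $T$ reproduces the arriving flux on the target plane.

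First I would localize at the source plane. Applying flux conservation (Property 2) together with the Gauss theorem to an infinitesimal pillbox straddling the charged sheet at $(\mathbf{x}_q,0)$, as in the electrostatic case of Example \ref{example-efm-interaction}, the total outgoing flux equals the enclosed charge $\mathbb{P}(\mathbf{x}_q)\,d\mathbf{x}_q$ up to the normalization $S_{D-1}$. The part leaving upward is $E_z^+\,d\mathbf{x}_q$ and the part leaving downward is $|E_z^-|\,d\mathbf{x}_q$, so the forward and backward lines carry fractions $\tfrac{E_z^+}{E_z^++|E_z^-|}$ and $\tfrac{|E_z^-|}{E_z^++|E_z^-|}$ of the mass at $\mathbf{x}_q$. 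This is precisely the definition of $\mu(\mathbf{x}_q)$ in (\ref{mu_def}), including the degenerate cases $E_z^+\le 0$ (all mass must travel backward) and $E_z^-\ge 0$ (all mass must travel forward). Hence the source measures routed through $T_F$ and $T_B$ have densities $\mu\,\mathbb{P}\propto E_z^+$ and $(1-\mu)\,\mathbb{P}\propto|E_z^-|$, i.e. they equal the forward and backward outgoing flux densities.

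Next I would transport these along the lines. By the Lemma, every line of positive flux starts on $\mathbb{P}$ and ends on $\mathbb{Q}$, and flux is conserved along each stream tube. A thin tube of forward lines connects a patch at $\mathbf{x}_q$ to its image $T_F(\mathbf{x}_q)$ on the target, and flux conservation reads $E_z^+(\mathbf{x}_q) = E_z(T_F(\mathbf{x}_q),L)\,|\det DT_F(\mathbf{x}_q)|$, which is exactly the change-of-variables identity stating that $(T_F)_\#$ of the density $E_z^+$ is the forward contribution to the arriving flux $E_z(\cdot,L)$; likewise $(T_B)_\#$ of $|E_z^-|$ is the backward contribution. Since every arriving line is either forward or backward, these two contributions sum to the full arriving flux $c\,\mathbb{Q}$. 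Combining with the previous paragraph,
\[
T_\#\mathbb{P} = (T_F)_\#(\mu\,\mathbb{P}) + (T_B)_\#\big((1-\mu)\,\mathbb{P}\big) \propto E_z(\cdot,L) \propto \mathbb{Q},
\]
and the equal-total-charge hypothesis $\int\mathbb{P}=\int\mathbb{Q}$ fixes the constant to $1$, giving $T_\#\mathbb{P}=\mathbb{Q}$.

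The main obstacle I expect is making the change-of-variables step rigorous: one must argue that $T_F$ and $T_B$ are well defined $\mathbb{P}$-almost everywhere (field lines do not cross, actually reach $z=L$, and the endpoint depends measurably and a.e. differentiably on $\mathbf{x}_q$), and that the Jacobian appearing in the flux balance coincides with the pushforward Jacobian. This requires showing that the exceptional set of zero-flux lines flagged in the Lemma is $\mathbb{P}$-negligible, and handling the field's singularities at the charges, where the pillbox identity must be read as the limit $\varepsilon\to 0^\pm$ defining $E_z^\pm$. Once flux conservation and these regularity facts are granted, the remaining steps are bookkeeping.
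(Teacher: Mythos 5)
Your skeleton (pillbox/Gauss at the plates, flux conservation along tubes, pushforward bookkeeping over forward/backward routes) is the same one the paper uses, but there is a genuine gap in the middle step, and it is exactly the point where the paper has to do real work. You assume each field line runs \emph{directly} from the source plate to the target plate inside a single stream tube, so that $(T_F)_\#$ of $E_z^+$ equals the forward part of the flux arriving at $z=L$. Under Properties 1--3 alone this is false: lines may cross the planes $z=0$ and $z=L$ several times, may pass \emph{through} a charged plane, and may only partially terminate on it. Concretely, your source-plane identity ``forward-routed density $\mu\,\mathbb{P}\propto E_z^+$'' holds only in the opposite-sign case $E_z^-\le 0\le E_z^+$. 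The pillbox argument (the paper's Corollary \ref{corollary_flux}) gives $E_z^+-E_z^-=\Phi_0\,\mathbb{P}$, so when $E_z^\pm$ share a sign (say both positive, $\mu=1$) the density emitted at the patch is $(E_z^+-E_z^-)/\Phi_0$, strictly less than $E_z^+/\Phi_0$: the excess flux $E_z^-$ belongs to lines arriving from $z<0$ (backward lines re-entering) that merely pass through. Symmetrically, at $z=L$ the terminating flux is the \emph{decrease} $d\Phi_{\text{before}}-d\Phi_{\text{after}}$, not the full arriving $E_z(\cdot,L)$, so your claim that the forward and backward contributions ``sum to the full arriving flux $c\,\mathbb{Q}$'' breaks whenever lines cross $z=L$ and come back.

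The paper's proof is structured around precisely this bookkeeping: a Second lemma on the flow for the same-sign crossings, an explicit \emph{Line Termination assumption} prescribing what fraction of lines stops at $z=L$ when $E_z^+(L)$ and $E_z^-(L)$ share a sign (note this is an assumption, not a consequence of Properties 1--3, so your plan could not have derived it), and a telescoping flux balance $\sum_{i=0}^{N}(-1)^{f_i}dn_i=0$ over the whole sequence of crossings $\mathbf{x}_0\to\cdots\to\mathbf{x}_N$ of a tube traversed backward from the target patch, which collapses by tube-wise flux conservation ($d\Phi_{\text{after},i}=d\Phi_{\text{before},i-1}$) to $dn_F'/n\to d\Phi_F'/\Phi_0$. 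Your proposal handles correctly the generic single-crossing configuration and identifies the right regularity caveats (zero-flux lines, measurability, the $\varepsilon\to 0^\pm$ limits), but without the multiple-crossing telescoping and a termination rule at $z=L$, the change-of-variables identity you rely on double-counts or loses exactly the through-flux mass. A secondary, stylistic difference: the paper argues via empirical samples and the strong law of large numbers ($\hat{\mathbb{Q}}_n\to\mathbb{Q}$ a.s.) rather than your deterministic Jacobian computation, which sidesteps having to prove a.e.\ differentiability of $T_F,T_B$ — worth adopting, since that regularity is not supplied by Properties 1--3 either.
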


In other words, the movement along interaction field lines provably transfers $\mathbb{P}(\textbf{x}_q)$ to $\mathbb{Q}(\textbf{x}_{\bar{q}})$. The \underline{proof} of the theorem is given in Appendix \ref{app2}.

 \subsection{Realization of the interaction field}
\label{SFM_realization}

\vspace{-2mm}
\begin{figure}[!h]
\centering
\begin{subfigure}[b]{0.45\linewidth}
  \centering
  \includegraphics[width=0.93\linewidth]{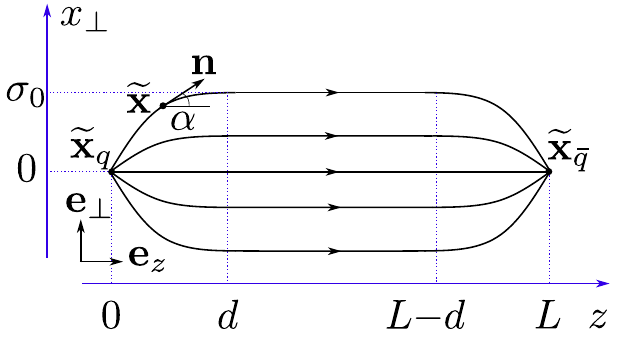}
  \vspace{-2mm}
  \caption{Symmetric case.}
  \label{fig:IFM_realization_sym}
\end{subfigure}
\hfill
\begin{subfigure}[b]{0.45\linewidth}
  \centering
  \includegraphics[width=0.93\linewidth]{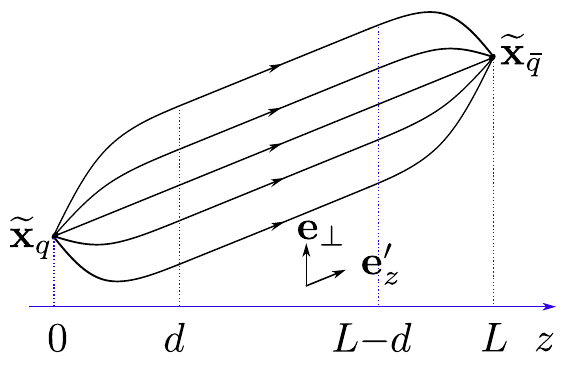}
  \vspace{-2mm}
  \caption{Shifted case.}
  \label{fig:IFM_realization_shifted}
\end{subfigure}
\vspace{-1mm}
\caption{Realization of the field between two quarks. In the range $z\in[0,d]$ and $z\in[L-d,L]$ the field lines curve toward the quarks, in the middle range $z\in[d,L-d]$ the field lines are straight. The string has an effective width $\sigma_0$, beyond which the field value exponentially decreases. \textbf{(a)} Symmetric case. \textbf{(b)} The shifted case is obtained by a proportional shift along the plane $z=L$}
\label{fig:IFM_realization}
\end{figure}

\vspace{-2mm}

We present an interaction field realization that meets Properties 1-3 (\wasyparagraph\ref{lines_properties}), motivated by quark interactions (\wasyparagraph\ref{Strong_interactions}). This design eliminates backward-oriented lines and prevents field lines from going to $z>L$ (Fig. \ref{fig:SFMMixtureLines}). A schematic is shown in Fig. \ref{fig:IFM_realization}, while the detailed algorithm for \underline{calculating the field} is formulated in Appendix \ref{app3}.




\vspace{2mm}
\begin{theorem}[Properties of our interaction field]
\label{thm:field_properties}
Our realization of the interaction field $\mathbf{E}(\widetilde{\mathbf{x}})$ satisfies the fundamental Properties 1-3 in \wasyparagraph\ref{lines_properties}, with additional characteristics:

\begin{itemize}[leftmargin=*]
    \item Field lines never extend beyond $z > L$.
    \item No backward-oriented lines exist.
\end{itemize}
\end{theorem}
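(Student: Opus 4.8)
The plan is to reduce everything to a single quark–antiquark pair and then verify the four claims directly against the explicit field constructed in Appendix~\ref{app3}. By the Lemma on the field lines (\wasyparagraph\ref{lines_properties}), once Properties 1--2 hold for one $q\bar q$-pair, they propagate to the whole (discrete or continuous) system through the generalized superposition (\ref{Strong_superposition}); moreover, if each single-pair field is supported in the closed slab $0\le z\le L$, then the averaged field $\mathbf{E}_{\pi}$ is supported there as well, which will take care of the two ``additional characteristics'' at the level of the full system. Hence I would first establish all statements for the elementary field $\mathbf{E}_{\mathbf{x}_q,\mathbf{x}_{\bar q}}$ of a pair with quark at $(\mathbf{x}_q,0)$ and antiquark at $(\mathbf{x}_{\bar q},L)$.

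For Property 1, I would read off the field-line parametrization from the three-segment construction (Fig.~\ref{fig:IFM_realization}): radial fanning near the quark on $z\in[0,d]$, a straight string on $z\in[d,L-d]$, and radial focusing into the antiquark on $z\in[L-d,L]$. The task is to check that the unit tangent $\mathbf{n}(\widetilde{\mathbf{x}})$ is continuous across the junctions $z=d$ and $z=L-d$, that $z$ increases monotonically along every line, and that integrating $d\widetilde{\mathbf{x}}/d\tau=\mathbf{n}$ carries the curve from $\widetilde{\mathbf{x}}_q$ to $\widetilde{\mathbf{x}}_{\bar q}$. This is essentially verification ``by design,'' the only bookkeeping being continuity of $\mathbf{n}$ at the segment boundaries and the shifted configuration of Fig.~\ref{fig:IFM_realization_shifted}, which I would reduce to the symmetric case via the stated proportional reparametrization along $z=L$.

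Property 2 is where I expect the real work. The cleanest route is to prove that $\mathbf{E}_{\mathbf{x}_q,\mathbf{x}_{\bar q}}$ is divergence-free on the open slab with the two point charges removed; flux conservation along any stream tube then follows from the divergence theorem, since the lateral wall of a stream tube carries zero flux (the field is tangent to it by definition) and only the two end caps contribute. Concretely, writing $\mathbf{E}=E\,\mathbf{n}$ with the prescribed Gaussian-type transverse profile of width $\sigma_0$ and a $z$-dependent amplitude, I would impose $\nabla\cdot(E\,\mathbf{n})=0$. The delicate point is the curved regions near the (anti)quark, where the transverse fanning enlarges the tube cross-section and the amplitude must decay at exactly the compensating rate; I would verify that the normalization of Appendix~\ref{app3} is precisely the one annihilating the divergence, and that matching at the junctions $z=d,\,L-d$ introduces no spurious source.

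Finally, the two additional characteristics come almost for free once the above is in place. Because the realization makes the field vanish outside the region between the particles, the single-pair field is supported in $0\le z\le L$; hence no field line can reach $z>L$ except to terminate at the antiquark at $z=L$ (using Property 1 together with the monotonicity of $z$ just established), and no line can enter $z<0$. Equivalently, at every quark the downward limit satisfies $E_z^-\ge 0$, so $\mu(\mathbf{x}_q)=1$ in (\ref{mu_def}) and the transport collapses to the deterministic forward map $T=T_F$: no backward-oriented lines exist. I would then invoke the superposition-and-support argument of the first paragraph to lift both conclusions from the single pair to the full system.
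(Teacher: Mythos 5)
Your proposal is correct and shares the paper's overall architecture — establish everything for a single $q\bar q$ string and lift to the full system via superposition (\ref{Strong_superposition}) and the lemma on field lines — but it takes a genuinely different route on the central step, Property 2. The paper never passes through a divergence-free statement: it verifies flux conservation by a direct change of variables. Along a stream tube the construction fixes $\kappa = x_\perp/\sigma(z)$, and through a cross-section orthogonal to the $z$-axis one has $d\Phi = E\,dS\cos\alpha$ with $dS \sim x_\perp^{D-1}dx_\perp$; substituting the explicit magnitude (\ref{field_realization_value}), the $\cos\alpha$ factors cancel, $x_\perp^{D-1}dx_\perp = \kappa^{D-1}\sigma(z)^{D}\,d\kappa$ absorbs the $\sigma(z)^{-D}$ normalization, and $d\Phi = \exp(-\kappa^2/2)\,\kappa^{D-1}d\kappa$ is manifestly independent of $z$. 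Your route — show $\nabla\cdot(E\,\mathbf{n})=0$ on the slab minus the charges and apply the divergence theorem to a tube segment, the lateral wall contributing nothing by tangency — is sound and slightly stronger (it yields equality of flux through arbitrary, not merely horizontal, cross-sections), but it defers the crux to ``verify that the normalization annihilates the divergence,'' which in the curved regions $z\in[0,d]\cup[L-d,L]$ is a messy computation in $(D+1)$ dimensions; the $\kappa$-substitution above is exactly how to discharge it cheaply, so you should fold it in rather than attack the divergence head-on. On the remaining points you match the paper: Property 1 is argued ``by construction'' (tangency of $\mathbf{n}$ to the curves $x'_\perp(z')$ running from $\widetilde{\mathbf{x}}_q$ to $\widetilde{\mathbf{x}}_{\bar q}$), and your continuity check at the junctions is a worthwhile refinement the paper omits — indeed $\alpha\to 0$ and $\sigma\to\sigma_0$ as $z\to d$ and $z\to L-d$, so $\mathbf{n}$ is continuous; the two additional characteristics follow from the support statement $\sigma(z)=0$ for $z\notin[0,L]$ exactly as you say (the paper states this tersely, and its ``$\sigma(z<L)=0$'' is evidently a typo for $\sigma(z<0)=0$). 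One small precision: at the quark $E_z^- = 0$ exactly, so $\mu(\mathbf{x}_q)=1$ follows from the third branch of (\ref{mu_def}) with $E_z^+>0$, not from the case $E_z^->0$; your conclusion that the transport collapses to the deterministic forward map $T=T_F$ is nevertheless correct and consistent with the paper's narrative.
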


The combination of these properties saves a given interaction field from EFM problems (\wasyparagraph\ref{EFM_limitations}).We give \underline{proof} and provide additional discussions in App. \ref{app4}.

\subsection{Learning and Inference Algorithm}
\label{prac_implementation}
To move between data distributions, it is sufficient to follow the interaction field lines. The lines can be found from the trained neural net approximating the interaction field $\mathbf{E}(\widetilde{\mathbf{x}})$.


\textbf{Training.} To recover the interaction field $\textbf{E}(\cdot)$ in ${(D+1)}$-dimensional points between the hyperplanes, similarly to EFM, we approximate it with a neural network $f_{\theta}(\cdot): \mathbb{R}^{D+1} \to \mathbb{R}^{D+1}$. To begin with, we need to define the training volume, i.e., the procedure to sample $\widetilde{\mathbf{x}}$ for training.
We use the following sampling scheme:
\begin{equation}
\label{middle_point_sample}
    \widetilde{\textbf{x}} = (1-\frac{z}{L})\widetilde{\textbf{x}}_q + \frac{z}{L}\widetilde{\textbf{x}}_{\bar{q}} + \widetilde{\epsilon}(z),
\end{equation}

\begin{wrapfigure}[17]{r}{0.45\textwidth}
\vspace{-7mm} 
\begin{minipage}{1\linewidth}
\begin{algorithm}[H]
\small
\textbf{Input:} Distributions accessible by samples:\\
\hspace*{5mm} $\mathbb{P}(\textbf{x}_q)\delta(z)$ and $\mathbb{Q}(\textbf{x}_{\bar{q}})\delta(z-L)$;\\ 
\hspace*{5mm} Transport plan $\pi(\mathbf{x}_q,\mathbf{x}_{\bar{q}}): \mathbb{R}^{D}\times\mathbb{R}^{D}\to\mathbb{R}$; \\
\hspace*{5mm} NN approximator $f_{\theta}(\cdot) :\mathbb{R}^{D+1} \to \mathbb{R}^{D+1}$; \\
\textbf{Output:} The learned interaction field $f_{\theta}(\cdot)$\\
{ \textbf{Repeat until converged:} }{\\
    \hspace*{5mm}Sample $|B|$ batch ($\widetilde{\textbf{X}}_q, \widetilde{\textbf{X}}_{\bar{q}}) \sim \pi(\mathbf{x}_q,\mathbf{x}_{\bar{q}})$\\
    \hspace*{5mm}Sample $|B|$ coordinates $z \sim r(z)$ ;\\
    \hspace*{5mm}Compute  noise $\widetilde{\epsilon}(z)$ as $\epsilon\sigma(z)$ (see App.\ref{app3})\\
    \hspace*{5mm}Calculate batch $\widetilde{\textbf{x}} = (1-\frac{z}{L})\widetilde{\textbf{x}}_{q} + \frac{z}{L}\widetilde{\textbf{x}}_{\bar{q}} + \widetilde{\epsilon}(z)$;\\
    \hspace*{5mm}Calculate $\mathbf{E}_{q\bar{q}}(\widetilde{\mathbf{x}})$ for all pairs following \wasyparagraph\ref{SFM_realization};\\
    \hspace*{5mm}Calculate $\textbf{E}({\widetilde{\textbf{x}}})$ with (\ref{Strong_superposition})\\
    \hspace*{5mm}Compute $\mathcal{L}= \mathbb{E}_{\widetilde{\textbf{x}}}|| f_{\theta}({\widetilde{\textbf{x}}}) - \frac{\textbf{E}({\widetilde{\textbf{x}}})}{|| \textbf{E}({\widetilde{\textbf{x}}})||_{2}} ||^{2}_{2}$;\\
    \hspace*{5mm}Optimize $\min_{\theta}\mathcal{L}$;\\ 
    \hspace*{5mm}Update $\theta$ by using $\frac{\partial \mathcal{L}}{\partial \theta}$;\;
}
\caption{IFM Training}
\label{algorithm:EFM}
\end{algorithm}
 
\end{minipage}
\end{wrapfigure}
where $(\widetilde{\textbf{x}}_q,\widetilde{\textbf{x}}_{\bar{q}})\sim \pi$ is sampled from the plan, $z\sim r(z)$ is the schedule distribution on $[0,L]$ and $\widetilde{\epsilon}(z)$  is the amount of noise injected into the linear interpolation of $\widetilde{\textbf{x}}_q$ and   $\widetilde{\textbf{x}}_{\bar{q}}$ at level $z$. This scheme is inspired by related works \cite{kolesov2025field,xu2022poisson}. \textcolor{black}{In generation experiments (\wasyparagraph\ref{generation_exp}), we use a uniform distribution for $r(z)$ and set $\widetilde{\epsilon}(z)$ equal to zero, because the linear interpolation of a data point $\widetilde{\textbf{x}}_{q}$ and a noise sample $\widetilde{\textbf{x}}_{\bar{q}}$ already includes noise. In translation experiments (\wasyparagraph\ref{transfer_exp}), we use also uniform $r(z)$ and $\mathcal{N}(0,\sigma^{2}(z))$ for $\widetilde{\epsilon}(z)$, where the variance $\sigma^{2}(z)$ is $ = \frac{L}{2} - |\frac{L}{2} -z|$}.
This formulation ensures that $\widetilde{\epsilon}(z)=0$ at $z=0$  and $z=L$, with the maximum noise occurring at the middle  $z=\frac{L}{2}$. Also, it is worth noticing that (\ref{middle_point_sample}) is just one of many possible ways to define intermediate points between the plates, and our method does not have direct connection to the training procedures of popular Flow  \cite{lipmanflow,liu2022flow,albergo2023building} or Bridge Matching \cite{shi2023diffusion}.


The ground-truth $\textbf{E}({\widetilde{\textbf{x}}})$ is estimated with Eq. (\ref{Strong_superposition}). Specifically, we approximate the field by Monte Carlo samples from the given transport plan $\pi$. The field between paired quarks in a batch $\mathbf{E}_{\mathbf{x}_q,\mathbf{x}_{\bar{q}}}$ is determined according to the recipe described in \wasyparagraph\ref{SFM_realization}.
Analogously to EFM and PFGM, We learn  $f_{\theta}( \cdot)$ by minimizing the squared error difference between the \textcolor{black}{normalized} ground truth  $\textbf{E}({\widetilde{\textbf{x}}})$ and the predictions  $f_{\theta}({\widetilde{\textbf{x}}})$ over the parameters of the neural network  with SGD, i.e., the learning objective is
\begin{equation}
\label{NN_loss}
\mathbb{E}_{\widetilde{\textbf{x}}}|| f_{\theta}({\widetilde{\textbf{x}}}) -    \frac{\mathbf{E}(\widetilde{\textbf{x}})}{|| \mathbf{E}(\widetilde{\textbf{x}})||_{2}}||_{2}^{2} \to  \min_{\theta}.  
\end{equation}

\begin{wrapfigure}[11]{r}{0.52\textwidth}
\vspace{-0.8cm} 
\begin{minipage}{\linewidth}
\begin{algorithm}[H]
\small
\textbf{Input:} samples $\tilde{\textbf{x}}_{q}$ from $\mathbb{P}(\textbf{x}_q)\delta(z)$; step size $\Delta\tau>0;$\\
\hspace*{5mm}The learned field $f_{\theta}^{*}(\cdot) :\mathbb{R}^{D+1} \to \mathbb{R}^{D+1}$; \\
\textbf{Output:} samples $\tilde{\textbf{x}}_{\bar{q}}$ from $\mathbb{Q}(\textbf{x}_{\bar{q}})\delta(z-L)$\\
Set $\textbf{x}_{0} = \tilde{\textbf{x}}_q$ \\
\textbf{for} $\tau \in \{0, \Delta \tau, 2\Delta \tau,\dots, L-\Delta\tau\}$ \textbf{do}\\
\hspace*{5mm}Calculate $f_{\theta}^{*}(\textbf{x}_{\tau})=(f_{\theta}^{*}(\textbf{x}_{\tau})_{x}, f_{\theta}^{*}(\textbf{x}_{\tau})_{z})$\\
\hspace*{5mm} $\widetilde{\textbf{x}}_{\tau+\Delta\tau}  \leftarrow \big[(\widetilde{\textbf{x}}_{\tau})_{x} + f_{\theta}^{*}(\widetilde{\textbf{x}}_{\tau})_{z}^{-1}f_{\theta}^{*}(\widetilde{\textbf{x}}_{\tau})_x\Delta \tau; \tau+\Delta \tau\big]$\\
$\widetilde{\textbf{x}}^{\bar{q}}\leftarrow \widetilde{\textbf{x}}_{L}$
\caption{IFM Sampling}
\label{algorithm:EFM_sampling}
\end{algorithm}
\end{minipage}
\end{wrapfigure}

\textbf{Inference.} After learning the \textcolor{black}{normalized} vector field $\frac{\textbf{E}(\cdot)}{|| \mathbf{E}(\cdot)||}$ with a neural network $f_{\theta}(\cdot)$, we simulate the movement between hyperplanes to transfer data from $\mathbb{P}(\textbf{x}_q)$ to $\mathbb{Q}(\textbf{x}_{\bar{q}})$. A straightforward approach for this is to run an ODE solver for \eqref{3D_electric_line}. However, one needs a right stopping time for the ODE solver since the arrival time may differ for different field lines. In order to find it, we follow the idea of \citep{xu2022poisson, kolesov2025field} and use an equivalent ODE solver with $\widetilde{\textbf{x}}$ evolving with the extended variable z:
\begin{multline}
\begin{aligned}
\label{augment_ode}
\small \hspace{-10mm}d\widetilde{\textbf{x}}  = \big(\frac{d\textbf{x}}{dt}\frac{dt}{dz},1\big)dz= (\mathbf{E}_{x}(\widetilde{\textbf{x}})\textbf{E}_{z}^{-1}(\widetilde{\textbf{x}}),1)dz  =\\\hspace{-5mm}= (\frac{\mathbf{E}_{x}(\widetilde{\textbf{x}})}{||\mathbf{E}(\widetilde{\textbf{x}})||} \frac{||\mathbf{E}(\widetilde{\textbf{x}})||}{\textbf{E}_{z}(\widetilde{\textbf{x}})},1)dz \approx (\frac{f_{\theta}(\widetilde{\textbf{x}})_{x}} {f_{\theta}(\widetilde{\textbf{x}})_{z}},1)dz,
\end{aligned}
\end{multline}

\begin{wrapfigure}[6]{r}{0.48\textwidth}
\vspace{-35.4mm}
  \centering
  \begin{subfigure}[b]{0.49\linewidth}
    \includegraphics[width=\linewidth]{SFMSwissLinesShort.pdf}
    \vspace{-6mm}
    \caption{\centering \scriptsize $L=6$}
    \label{fig:EFMMixtureLines_}
  \end{subfigure}
  \hfill
  \begin{subfigure}[b]{0.49\linewidth}
    \includegraphics[width=\linewidth]{SFMSwissLinesLong.pdf}
    \vspace{-6mm}
    \caption{\centering \scriptsize $L=40$}
    \label{fig:SFMMixtureLines_}
  \end{subfigure}
  \caption{Interaction field line structure for the Gaussian$\rightarrow$Swiss Roll experiment with $L=6,40$. Our IFM  with minibatch OT plan. }
  \label{fig:Dippole}
\end{wrapfigure}where we denote $f_{\theta}(\widetilde{\textbf{x}})$ as $(f_{\theta}(\widetilde{\textbf{x}})_{x},f_{\theta}(\widetilde{\textbf{x}})_{z})$ and $\textbf{E}(\widetilde{\textbf{x}})$ equals$( \textbf{E}_{x}(\widetilde{\textbf{x}}), \textbf{E}_{z}(\widetilde{\textbf{x}}))$. In the new ODE (\ref{augment_ode}), we replace the time variable $t$ with the physically meaningful variable $z$ setting the explicit start ($z=0$) and the end  ($z=L$) conditions. We start with samples from $\mathbb{P}(\textbf{x}_{q})$, i.e., when $z=0$. Then, we arrive at the data distribution $\mathbb{Q}(\textbf{x}_{\bar{q}})$ when $z$ reaches $L$ during the ODE simulation.

We emphasize, that \textit{in EFM such a movement does not always realize transport between distributions properly} due to the backward-oriented lines and line termination problem (\wasyparagraph\ref{EFM_limitations}). In contrast, in our implementation of IFM (\wasyparagraph\ref{SFM_realization}), such integration theoretically provably translates $\mathbb{P}$ to $\mathbb{Q}$. All the ingredients for training and inference in our method are described in Algorithms \ref{algorithm:EFM} and \ref{algorithm:EFM_sampling}, where we summarize the learning and the inference procedures. 

\vspace{-2mm}
\section{Experimental Illustrations}
\label{sec:experiments}
Here we show the proof-of-concept experiments with our IFM method. We show a 2-dimensional illustrative experiment (\wasyparagraph\ref{3d_exp}), image generation (\wasyparagraph\ref{transfer_exp}) and image-to-image translation experiments (\wasyparagraph\ref{generation_exp}). We give \underline{technical details} of the implementation of the experiments in Appendix \ref{app:experiments_details}. \textcolor{black}{We provide a study of the sensitivity of our model to the choice of \underline{hyperparameters} in Appendix \ref{app:ablation}.}

\vspace{-2mm}
\subsection{Gaussian to Swiss Roll}
\label{3d_exp}

\begin{figure*}[!h]
\centering
\begin{subfigure}[t]{0.255\linewidth} 
\centering
\includegraphics[width=\linewidth, height=3.8cm, keepaspectratio]{SFMSwissMapInput.pdf}
\vspace{0.5em} 
\caption{\centering\scriptsize Samples from $\mathbb{P}(\textbf{x}_{q})$, placed on the left hyperplane $z=0$.}
\label{fig:3d_data_init}
\end{subfigure}
\hfill
\begin{subfigure}[t]{0.24\linewidth}
\centering
\includegraphics[width=\linewidth, height=3.8cm, keepaspectratio]{SFMSwissMapTarget__1_.pdf}
\vspace{0.5em}
\caption{\centering\scriptsize Samples from $\mathbb{Q}(\textbf{x}_{\bar{q}})$, placed on the right hyperplane $z=L$.}
\label{fig:3d_data_target}
\end{subfigure}
\hfill
\begin{subfigure}[t]{0.24\linewidth}
\centering
\includegraphics[width=\linewidth, height=3.8cm, keepaspectratio]{SFMSwissMapShort.pdf}
\vspace{0.5em}
\caption{\centering\scriptsize Mapped samples by $T(\textbf{x}_{q})$ for distance $L=6$.}
\label{fig:3d_mapped_short}
\end{subfigure}
\hfill
\begin{subfigure}[t]{0.24\linewidth}
\centering
\includegraphics[width=\linewidth, height=3.5cm, keepaspectratio]{SFMSwissMapLong.pdf}
\vspace{0.5em}
\caption{\centering\scriptsize Mapped samples by $T(\textbf{x}_{q})$ for distance $L=40$.}
\label{fig:3d_mapped_long}
\end{subfigure}
\caption{\textit{Illustrative 2D Gaussian$\rightarrow$Swiss Roll}: Input and target distributions $\mathbb{P}(\textbf{x}_{q})$ and $\mathbb{Q}(\textbf{x}_{\bar{q}})$ with the transfer results learned by our IFM  (with minibatch OT plan) for distances $L=6$ and $L=40$.}
\label{fig:toy}
\vspace{-4mm}
\end{figure*}

An intuitive initial test to validate the method involves transferring between distributions with visually comparable densities.We use a 2D zero-mean Gaussian distribution with identity covariance matrix as $\mathbb{P}(\mathbf{x}_q)$ and a Swiss Roll distribution as $\mathbb{Q}(\mathbf{x}_{\bar{q}})$. Their respective visualizations appear in Figs.~\ref{fig:3d_data_init} and~\ref{fig:3d_data_target}. In Figs. \ref{fig:3d_mapped_short} and \ref{fig:3d_mapped_long} show the points $T(\mathbf{x}_q)$ obtained by moving along the lines of our interaction field realization with minibatch OT plan \citep{tong2023conditional}. Fig. \ref{fig:3d_mapped_short} corresponds to the plate distance $L=6$, and Fig. \ref{fig:3d_mapped_long} corresponds to $L=40$. It can be seen that there are no significant differences due to the choice of the hyperparameter $L$, while in EFM the choice of large $L$ lead to failure on the same experiment, namely, it failed to accurately map $\mathbb{P}$ to $\mathbb{Q}$ see \citep[\wasyparagraph4.2]{kolesov2025field}. Fig. \ref{fig:Dippole} shows the $3D$ structure of the field lines for $L=6$ and $L=40$. It can be seen that the lines are almost straight over the entire range of values and depend weakly on $L$. Note that in EFM the field lines were significantly curved at large values of $L$ \citep[\wasyparagraph5]{kolesov2025field}, see Fig. \ref{fig:Dippole}. 

\vspace{-3mm}
\subsection{Image Generation}
\label{generation_exp}
\vspace{-0mm}
\vspace{-1mm}
We consider the generative task on the multimodal  $32 \times 32$  CIFAR-10 dataset and the high-dimensional $64 \times 64$ CelebA faces dataset. In this experiment, we place noise images from a $D$-dimensional multivariate normal distribution $\mathcal{N}(0,I_{D\times D})$ on the left hyperplane $z=0$, where $D=3\times 32\times32$ for CIFAR-10 and $D=3\times 64\times64$  for CelebA. Images from the CIFAR-10 and CelebA datasets are placed on the right hyperplane $z=20$. In accordance with our Algorithm 1, we learn the normalized interaction field between the hyperplanes using  independent  transport plans.


\begin{figure}[!b]
\centering
\begin{subfigure}[b]{0.48\linewidth}
  \centering
  \includegraphics[width=1.1\linewidth]{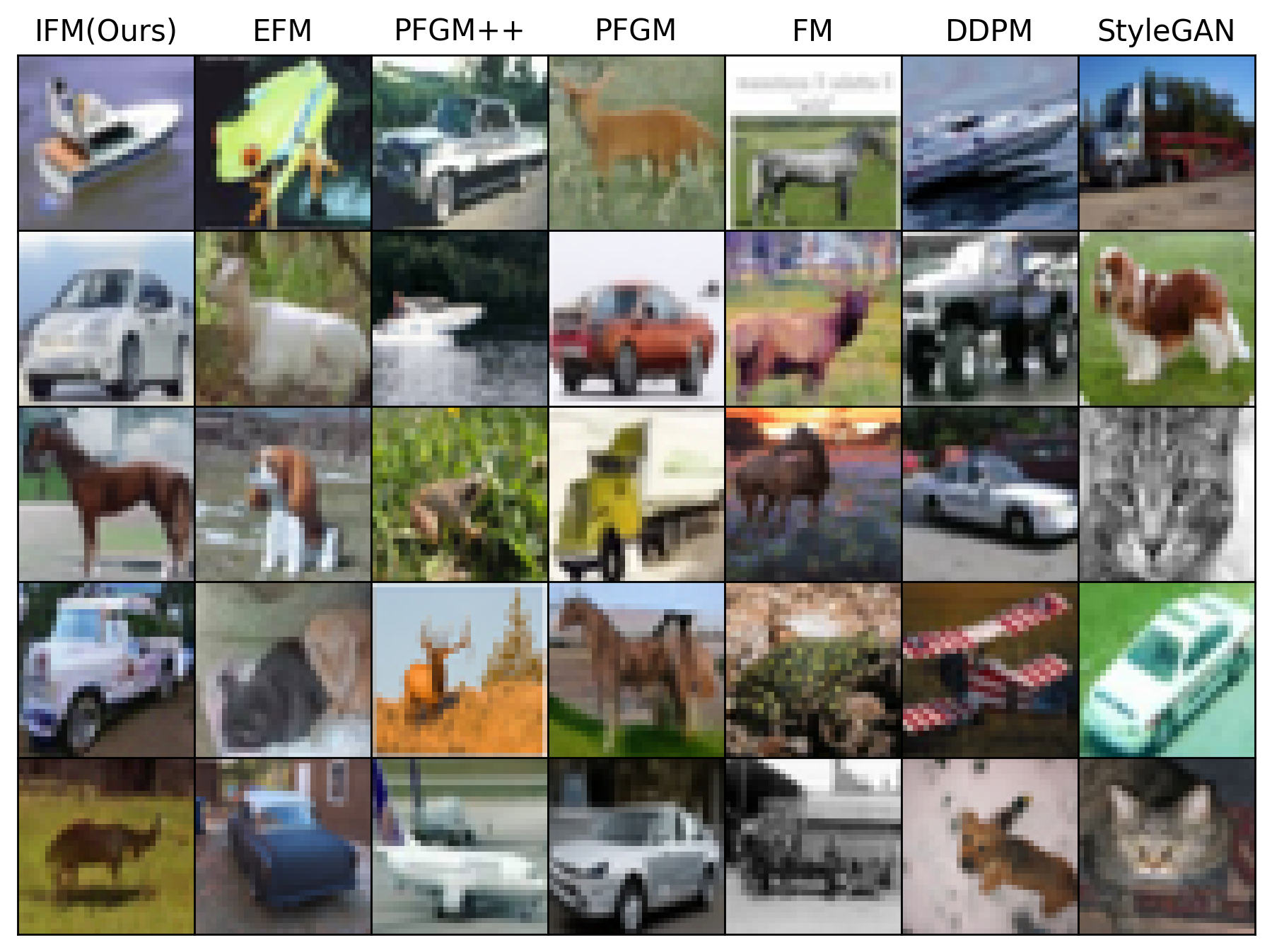}
  \caption{CIFAR-10 32x32.}
  \label{fig:cifar-generation}
\end{subfigure}
\hfill
\begin{subfigure}[b]{0.48\linewidth}
  \centering
  \includegraphics[width=1.1\linewidth]{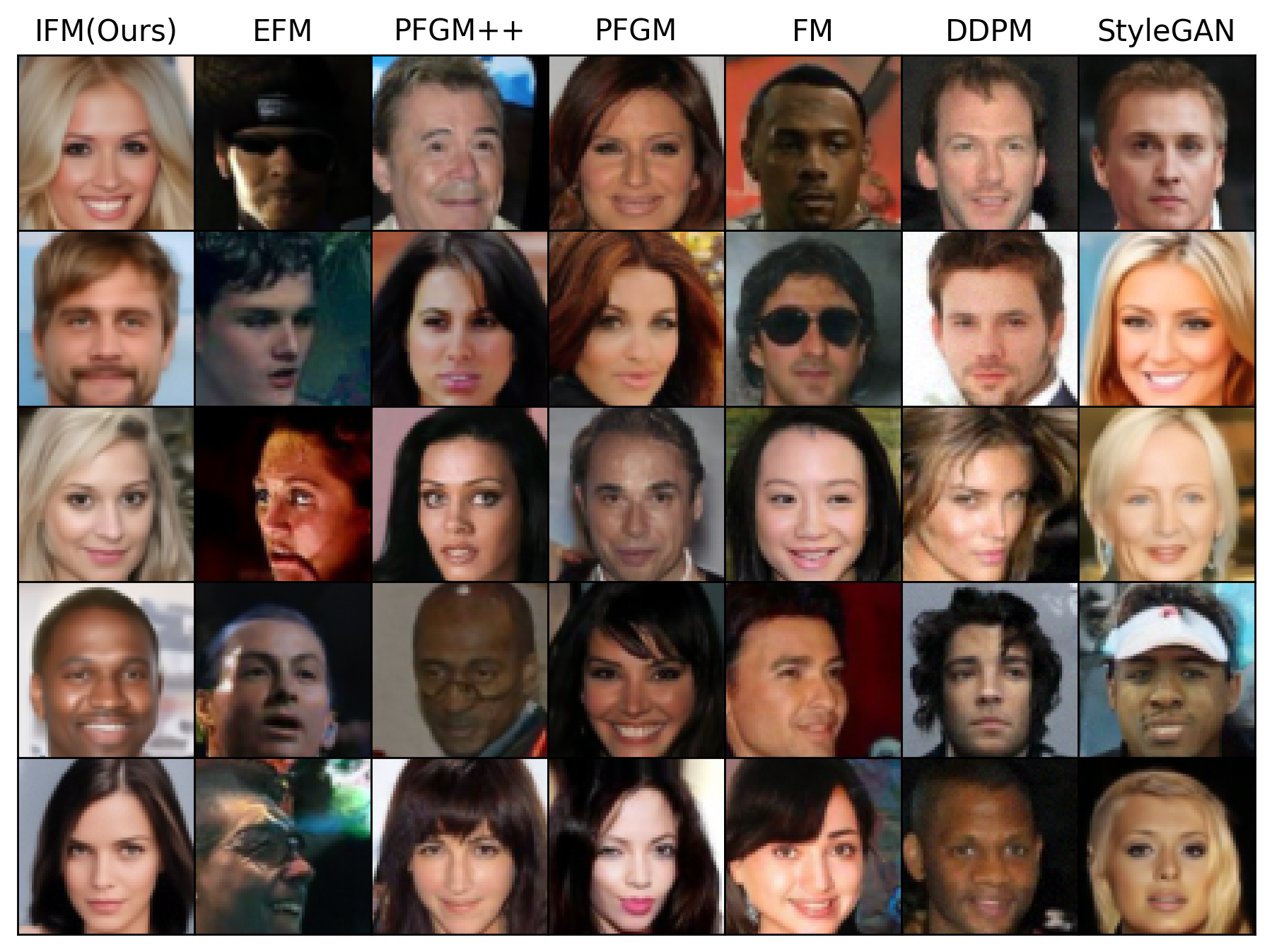}
  \caption{Celeba 64x64.}
  \label{fig:celeba-generation}
\end{subfigure}
\vspace{-2mm}
\caption{\centering \small   \textit{Image Generation:} Samples obtained by \textbf{IFM}(ours) with the independent plan, electrostatic-based approaches \textbf{EFM} and \textbf{PFGM}\&\textbf{PFGM++}, flow-based \textbf{FM}, diffusion-based \textbf{DDPM} and \textbf{StyleGAN}.}
\label{fig:Generation}
\end{figure}
\vspace{-1mm}

For completeness, we compare our approach not only with previous electrostatic-based approaches such as EFM \citep{kolesov2025field}, PFGM\citep{xu2022poisson} and PFGM++ \citep{xu2023pfgm++}, but also with modern flow-based FM \citep{lipmanflow}, diffusion-based DDPM \citep{ho2020denoising}, and adversarial approaches such as StyleGAN \citep{karras2020analyzing}. Our method, IFM, performs competitively with well-known state-of-the-art approaches in terms of qualitative results (see Figs. \ref{fig:cifar-generation} and \ref{fig:celeba-generation}), while EFM fails to generate samples for the $64 \times 64$ CelebA dataset (see Fig. \ref{fig:celeba-generation}). We quantitatively evaluate our method's performance by reporting FID in Table 1.

\begin{table}[!h]
\hspace{-1mm}\label{tabel:FID_generation}
\centering
\begin{tabular}{|l|l|l|l|l|l|l|l|}

\hline
\textbf{Dataset / Method} & \textbf{IFM (our)} & \textbf{EFM} &  \textbf{PFGM++} & \textbf{PFGM} &  \textbf{FM}  & \textbf{DDPM} & \textbf{StyleGAN}  \\ \hline
CIFAR-10 (32x32)          &  2.28 &2.62&    2.15 & 2.76 &   2.99& 3.12&2.48   \\ \hline
Celeba (64x64)           &  3.07   & >100 &  2.89 & 3.95 & 14.45 &12.26    &3.68 \\ \hline
\end{tabular}
\vspace{-1mm} 
\caption{ \textit{Image Generation:} FID$\downarrow$ score on 32$\times$32 CIFAR-10 and 64$\times 64$ Celeba faces  datasets for our \textbf{IFM}, \textbf{EFM}, \textbf{PFGM} \& \textbf{PFGM++}, flow matching (\textbf{FM}), diffusion (\textbf{DDPM})  and \textbf{StyleGAN}.}
\vspace{-3mm}
\end{table}

\textcolor{black}{Additional qualitative results for other image generation tasks (\underline{128×128 CelebA} dataset and \underline{conditional image generation} on CIFAR-10) are provided in Appendices \ref{app:128_celeba} and \ref{app:conditional_cifar}}


\textcolor{black}{\textbf{Computational efficiency.} Training of our IFM takes less than 10 hours on a single NVIDIA A100 GPU (30 GB VRAM) for the 32×32 and 64×64 resolution datasets, and less than 30 hours for the 128×128 resolution dataset. Our IFM shares the same architecture as the closest competitors: EFM, PFGM/PFGM++, DDPM, and FM. We also use the same Euler-based ODE solver and 100 evaluation steps for each method to ensure a fair comparison. Therefore, the inference speed and memory usage is identical for all these methods. For completeness, we report the inference speed to generate different batch sizes of images, including a single image (i.e., batch size=1), see Table \ref{tabl:nf_time_generation}. In particular, the peak GPU memory usage for all methods is approximately 8, 10 and 16 GB during inference with a batch size of 128 for 32x32, 64x64 and 128x128 datasets, respectively.}

\begin{table}[!h]
\centering
\begin{tabular}{|l|l|l|l|l|l|l|}
\hline
\textcolor{black}{\textbf{Dataset / Batch Size}}  &   \textcolor{black}{\textbf{256}} & \textcolor{black}{\textbf{128}} & \textcolor{black}{\textbf{64}} &  \textcolor{black}{\textbf{16}} &  \textcolor{black}{\textbf{1}}   \\ \hline
\textcolor{black}{CIFAR-10 (32x32)} & \textcolor{black}{10.93}  &  \textcolor{black}{5.74}  &  \textcolor{black}{1.63}     &   \textcolor{black}{0.82} &    \textcolor{black}{0.7}     \\ \hline
\textcolor{black}{Celeba (64x64)}   &  \textcolor{black}{36.81}  &  \textcolor{black}{18.45} &   \textcolor{black}{8.5} &   \textcolor{black}{2.93} &    \textcolor{black}{0.97}     \\
\hline
\textcolor{black}{Celeba (128x128)}  &  \textcolor{black}{63.28}  &  \textcolor{black}{32.27} &  \textcolor{black}{14.87} &   \textcolor{black}{4.06} &    \textcolor{black}{1.75}     \\\hline
\end{tabular}
\caption{\centering \textcolor{black}{The inference time (in seconds) of our IFM with different batch sizes $|B|$ in generation.}}
\label{tabl:nf_time_generation}
\vspace{-2mm}
\end{table}

\vspace{0mm}
\subsection{Image-to-Image Translation }
\label{transfer_exp}

Following \citep{zhu2017unpaired, kolesov2025field}, we also consider an unpaired image translation task with two scenarios: translating $32 \times 32$ colored MNIST digits from '2' to '3' (2$\to$3) and translating $64 \times 64$ scenes from Winter to Summer (W$\to$S). The placement of images on the hyperplanes follows the same setup as the \wasyparagraph{\ref{generation_exp}}. We learn the normalized field  between the plates using both independent and mini-batch optimal transport plans. Our IFM and IFM-MB approaches effectively preserve shapes (with IFM-MB performing slightly better due to the use of the transport plan) and changes the styles of the initial images (see Figs. \ref{fig:cm23-generation} and \ref{fig:w2s-generation}). For completeness, we compare IFM and IFM-MB with popular image-to-image translation methods, including Flow Matching, diffusion-based \citep[DDIB]{ho2020denoising}, adversarial \citep[CycleGAN]{zhu2017unpaired}, and EFM. We evaluate the performance of these methods using the CMMD \cite{yan2022cmmd}, as reported in Table \ref{table-translation}. \textcolor{black}{We discuss the \underline{distinction} between our approach and 
FM  in the Appendix \ref{app:comparison_FM}  .}

\begin{figure}[!h]
\centering
\begin{subfigure}[b]{0.48\linewidth}
  \centering
  \includegraphics[width=1.1\linewidth]{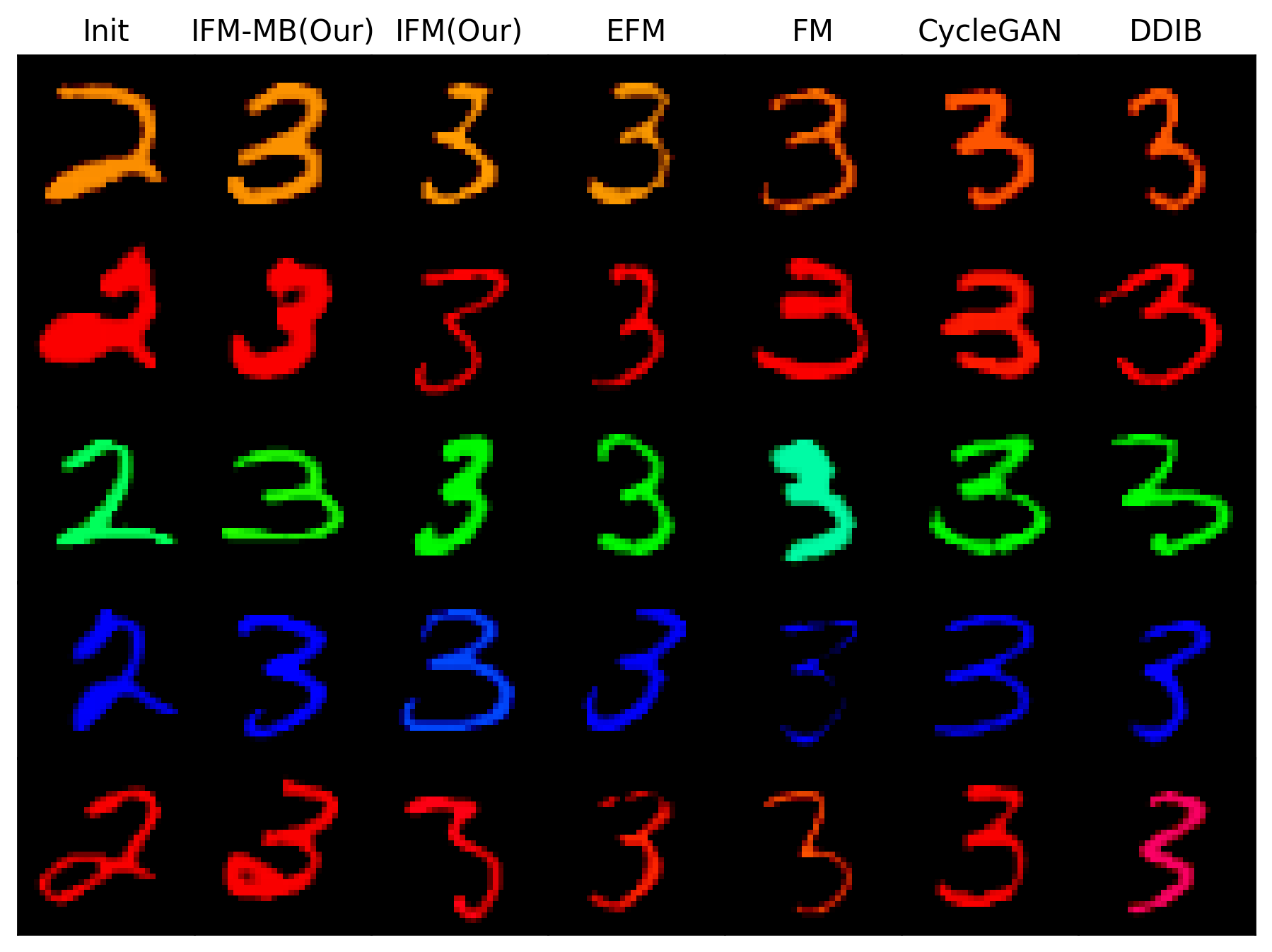}
  \caption{Colored digits '2' $\to$ Colored digits '3'.}
  \label{fig:cm23-generation}
\end{subfigure}
\hfill
\begin{subfigure}[b]{0.48\linewidth}
  \centering
  \includegraphics[width=1.1\linewidth]{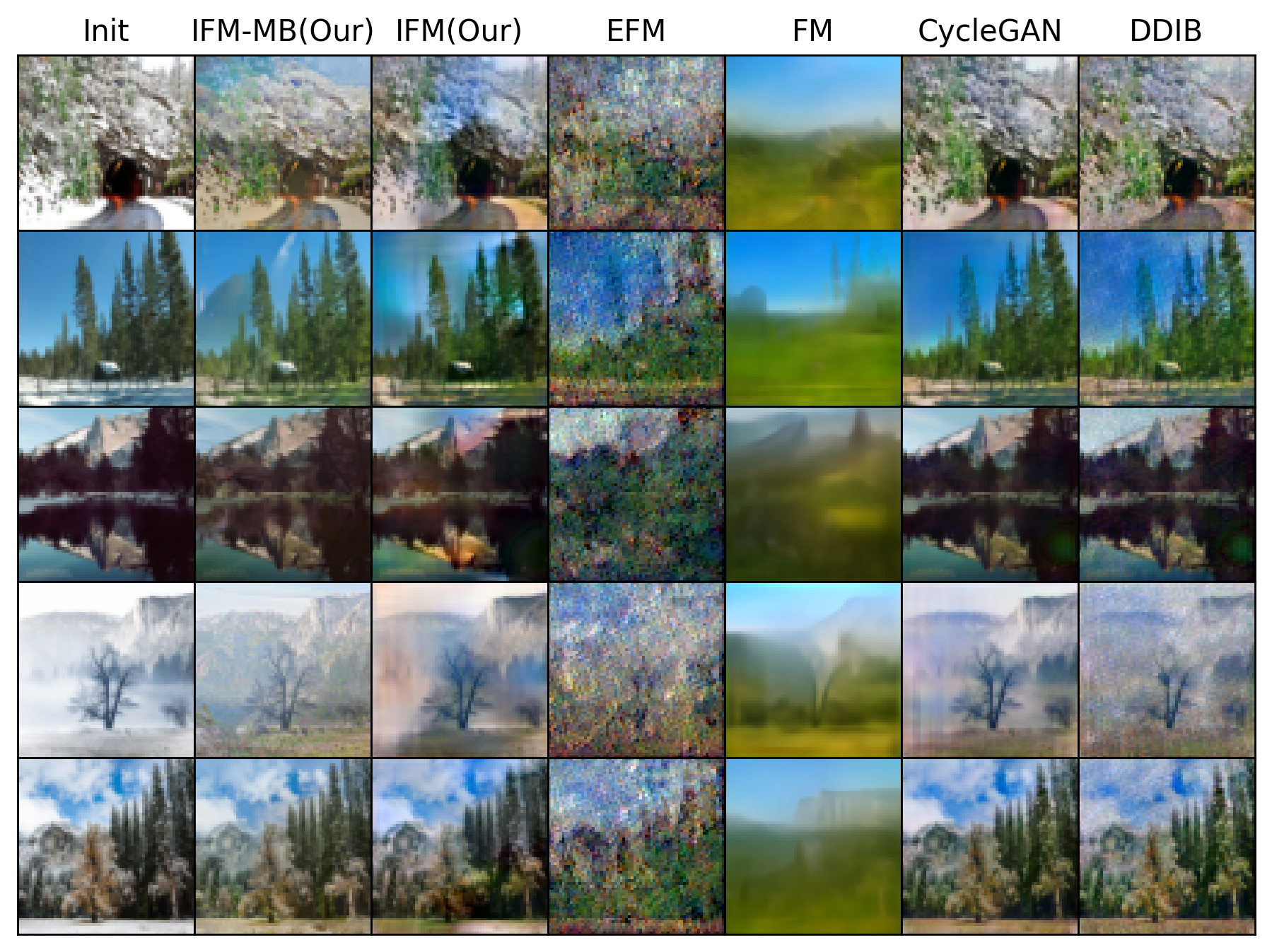}
  \caption{Winter $\to$ Summer.}
  \label{fig:w2s-generation}
\end{subfigure}
\vspace{-3mm}
\caption{\centering \small  \textit{Image Translation:} Samples obtained by \textbf{IFM}(ours) with/without  the minibatch plan, electrostatic-based approach \textbf{EFM}, flow-based \textbf{FM}, diffusion-based \textbf{DDIB}   and adversarial \textbf{CycleGAN}.}
\label{fig:Translation}
\end{figure}
\vspace{-3mm}

\begin{table}[!h]
\centering
\begin{tabular}{|l|l|l|l|l|l|l|}
\hline
\textbf{Dataset / Method} & \textbf{IFM-MB (our)} & \textbf{IFM (our)} & \textbf{EFM} & \textbf{FM} & \textbf{CycleGAN} & \textbf{DDIB} \\ \hline
'2' $\to$ '3' (32x32)          &  0.87 & 0.95&    0.93 &  1.06  & 0.90  &  0.96 \\ \hline
 W$\to$S (64x64)           &  1.13   & 1.25  & $\gg$1 & $\gg$1  & 1.33  & 1.39    \\ \hline
\end{tabular}
\vspace{-2mm} 
\caption{\centering \textit{Unpaired Image Translation:} CMMD$\downarrow$ on W$\to$S and colored difits '2'$\to$'3' \protect\linebreak for our \textbf{IFM}, \textbf{EFM}, \textbf{FM}, \textbf{CycleGAN} and \textbf{DDIB}.}
\vspace{-3mm}
\label{table-translation}
\end{table}


\textcolor{black}{\textbf{Computational efficiency} of our method in translation is almost the same as in generation (\wasyparagraph\ref{generation_exp}). Also, the core implementation components of IFM (artitectures, number of steps in ODE solver, etc.) are the same as in generation and are again identical to the closest ODE-based competitors EFM \& FM to ensure the fair comparison. Therefore, the runtime \& memory usage is the same for all these mentioned approaches. For completeness, we report the inference time of IFM in Table \ref{tabl:nf_time_translation} below.}

\begin{table}[!h]
\centering
\begin{tabular}{|l|l|l|l|l|l|l|}
\hline
\textcolor{black}{\textbf{Dataset / Batch Size}}  &   \textcolor{black}{\textbf{256}} & \textcolor{black}{\textbf{128}} & \textcolor{black}{\textbf{64}} &  \textcolor{black}{\textbf{16}} &  \textcolor{black}{\textbf{1}}   \\ \hline
\textcolor{black}{MNIST '2'$\to$'3' (32x32)}  &  \textcolor{black}{10.95}  &  \textcolor{black}{5.73} &  \textcolor{black}{1.61} &   \textcolor{black}{0.85} &    \textcolor{black}{0.73}     \\\hline
\textcolor{black}{  W $\to$ S (64x64)}  &  \textcolor{black}{36.84}  &  \textcolor{black}{18.44} &  \textcolor{black}{8.53} &   \textcolor{black}{4.05} &    \textcolor{black}{1.76}     \\\hline
\end{tabular}
\vspace{-2mm}
\caption{\centering \textcolor{black}{The inference time (in seconds) of our IFM with different batch sizes $|B|$ in translation.}}
\label{tabl:nf_time_translation}
\end{table}



\vspace{-4mm}

\section{Discussion}
\vspace{-2mm}
Our proposed IFM method is a generalization of EFM allowing using rather general interaction fields for distribution transfer. Our implementation of the particular IFM field is just one of many possible realizations. The search for a more optimal realization is a difficult task and is a promising subject of future research that opens opportunities for the further development of electrostatic-inspired models.



{\color{black}
Our IFM overcomes major limitations of prior EFM method (\wasyparagraph \ref{EFM_limitations}): backward-oriented field lines, line termination problems, and training volume selection issues. Moreover, due to our field's specific structure, we also address the high-dimensionality challenge and the associated numerical instability that affects EFM and PFGM due to the Coulomb factor $1/\|\widetilde{\mathbf{x}} - \widetilde{\mathbf{x}}'\|^D$ (see Appendix \ref{app:ablation}).}


\textbf{Impact statement.} Our paper presents work with a goal to advance ML. There are many potential societal consequences 
of our work, none of which we feel must be specifically highlighted here.

\textbf{Reproducibility.} We provide the experimental details in Appendix \ref{app:experiments_details} and the code to reproduce the conducted experiments in the supplementary materials (see README.md).

\textbf{Acknowledgment.}
\textcolor{black}{The work was supported by the grant for research centers in the field of AI provided by the Ministry of Economic Development of the Russian Federation in accordance with the agreement 000000C313925P4F0002 and the agreement №139-10-2025-033. We thank Kirill Sokolov for providing 
feedback and suggestions for improving the proofs and clarity of our paper.}

\textbf{LLM Usage.} Large Language Models (LLMs) were used only to assist with rephrasing sentences and improving the clarity of the text. All scientific content, results, and interpretations in this paper were developed solely by the authors.

\bibliography{iclr2026_conference}

\begin{thebibliography}{20}
\providecommand{\natexlab}[1]{#1}
\providecommand{\url}[1]{\texttt{#1}}
\expandafter\ifx\csname urlstyle\endcsname\relax
  \providecommand{\doi}[1]{doi: #1}\else
  \providecommand{\doi}{doi: \begingroup \urlstyle{rm}\Url}\fi

\bibitem[Albergo \& Vanden-Eijnden(2023)Albergo and Vanden-Eijnden]{albergo2023building}
Michael~Samuel Albergo and Eric Vanden-Eijnden.
\newblock Building normalizing flows with stochastic interpolants.
\newblock In \emph{The Eleventh International Conference on Learning Representations}, 2023.

\bibitem[Cao \& Zhao(2024)Cao and Zhao]{cao2024pfgm++}
Xiao Cao and Shenghui Zhao.
\newblock Pfgm++ combined with stochastic regeneration for speech enhancement.
\newblock In \emph{2024 9th International Conference on Signal and Image Processing (ICSIP)}, pp.\  267--271. IEEE, 2024.

\bibitem[Cao et~al.(2024)Cao, Zhao, Hu, and Wang]{cao2024sr}
Xiao Cao, Shenghui Zhao, Yajing Hu, and Jing Wang.
\newblock Sr-pfgm++ based consistency model for speech enhancement.
\newblock In \emph{2024 IEEE International Conference on Signal, Information and Data Processing (ICSIDP)}, pp.\  1--5. IEEE, 2024.

\bibitem[Caruso et~al.(2023)Caruso, Oguri, and Silveira]{caruso2023still}
Francisco Caruso, Vitor Oguri, and Felipe Silveira.
\newblock Still learning about space dimensionality: From the description of hydrogen atom by a generalized wave equation for dimensions d $\geq$ 3.
\newblock \emph{American Journal of Physics}, 91\penalty0 (2):\penalty0 153--158, 2023.

\bibitem[Ho et~al.(2020)Ho, Jain, and Abbeel]{ho2020denoising}
Jonathan Ho, Ajay Jain, and Pieter Abbeel.
\newblock Denoising diffusion probabilistic models.
\newblock \emph{Advances in neural information processing systems}, 33:\penalty0 6840--6851, 2020.

\bibitem[Karras et~al.(2020)Karras, Laine, Aittala, Hellsten, Lehtinen, and Aila]{karras2020analyzing}
Tero Karras, Samuli Laine, Miika Aittala, Janne Hellsten, Jaakko Lehtinen, and Timo Aila.
\newblock Analyzing and improving the image quality of stylegan.
\newblock In \emph{Proceedings of the IEEE/CVF conference on computer vision and pattern recognition}, pp.\  8110--8119, 2020.

\bibitem[Kingma \& Ba(2015)Kingma and Ba]{kingma2015adam}
Diederik~P Kingma and Jimmy~Lei Ba.
\newblock Adam: A method for stochastic gradient descent.
\newblock In \emph{ICLR: international conference on learning representations}, pp.\  1--15. ICLR US., 2015.

\bibitem[Kolesov et~al.(2025)Kolesov, Manukhov, Palyulin, and Korotin]{kolesov2025field}
Alexander Kolesov, Stepan Manukhov, Vladimir~V Palyulin, and Alexander Korotin.
\newblock Field matching: an electrostatic paradigm to generate and transfer data.
\newblock In \emph{Forty-second International Conference on Machine Learning}, 2025.
\newblock URL \url{https://openreview.net/forum?id=9dHilxylvC}.

\bibitem[Landau \& Lifshitz(1971)Landau and Lifshitz]{LandauLifshitz2}
L.~D. Landau and E.~M. Lifshitz.
\newblock \emph{The Classical Theory of Fields (Volume 2 of A Course of Theoretical Physics)}.
\newblock Pergamon Press, 1971.

\bibitem[Lipman et~al.(2023)Lipman, Chen, Ben-Hamu, Nickel, and Le]{lipmanflow}
Yaron Lipman, Ricky~TQ Chen, Heli Ben-Hamu, Maximilian Nickel, and Matthew Le.
\newblock Flow matching for generative modeling.
\newblock In \emph{The Eleventh International Conference on Learning Representations}, 2023.

\bibitem[Liu et~al.(2023)Liu, Gong, and qiang liu]{liu2022flow}
Xingchao Liu, Chengyue Gong, and qiang liu.
\newblock Flow straight and fast: Learning to generate and transfer data with rectified flow.
\newblock In \emph{The Eleventh International Conference on Learning Representations}, 2023.

\bibitem[Pooladian et~al.(2023)Pooladian, Ben-Hamu, Domingo-Enrich, Amos, Lipman, and Chen]{pooladian2023multisample}
Aram-Alexandre Pooladian, Heli Ben-Hamu, Carles Domingo-Enrich, Brandon Amos, Yaron Lipman, and Ricky~TQ Chen.
\newblock Multisample flow matching: Straightening flows with minibatch couplings.
\newblock In \emph{International Conference on Machine Learning}, pp.\  28100--28127. PMLR, 2023.

\bibitem[Quevedo \& Schachner(2024)Quevedo and Schachner]{Quevedo:2024kmy}
Fernando Quevedo and Andreas Schachner.
\newblock \emph{Cambridge Lectures on The Standard Model}.
\newblock CERN-TH-2024-106, 2024.

\bibitem[Shi et~al.(2023)Shi, De~Bortoli, Campbell, and Doucet]{shi2023diffusion}
Yuyang Shi, Valentin De~Bortoli, Andrew Campbell, and Arnaud Doucet.
\newblock Diffusion schr{\"o}dinger bridge matching.
\newblock \emph{Advances in Neural Information Processing Systems}, 36:\penalty0 62183--62223, 2023.

\bibitem[Sohl-Dickstein et~al.(2015)Sohl-Dickstein, Weiss, Maheswaranathan, and Ganguli]{sohl2015deep}
Jascha Sohl-Dickstein, Eric Weiss, Niru Maheswaranathan, and Surya Ganguli.
\newblock Deep unsupervised learning using nonequilibrium thermodynamics.
\newblock In \emph{International conference on machine learning}, pp.\  2256--2265. PMLR, 2015.

\bibitem[Tong et~al.(2023)Tong, Malkin, Huguet, Zhang, Rector-Brooks, Fatras, Wolf, and Bengio]{tong2023conditional}
Alexander Tong, Nikolay Malkin, Guillaume Huguet, Yanlei Zhang, Jarrid Rector-Brooks, Kilian Fatras, Guy Wolf, and Yoshua Bengio.
\newblock Conditional flow matching: Simulation-free dynamic optimal transport.
\newblock \emph{arXiv preprint arXiv:2302.00482}, 2\penalty0 (3), 2023.

\bibitem[Xu et~al.(2022)Xu, Liu, Tegmark, and Jaakkola]{xu2022poisson}
Yilun Xu, Ziming Liu, Max Tegmark, and Tommi Jaakkola.
\newblock Poisson flow generative models.
\newblock In \emph{Proceedings of the 36th International Conference on Neural Information Processing Systems}, pp.\  16782--16795, 2022.

\bibitem[Xu et~al.(2023)Xu, Liu, Tian, Tong, Tegmark, and Jaakkola]{xu2023pfgm++}
Yilun Xu, Ziming Liu, Yonglong Tian, Shangyuan Tong, Max Tegmark, and Tommi Jaakkola.
\newblock Pfgm++: Unlocking the potential of physics-inspired generative models.
\newblock In \emph{International Conference on Machine Learning}, pp.\  38566--38591. PMLR, 2023.

\bibitem[Yan et~al.(2022)Yan, Shan, Yang, Xu, Li, Qiu, Tong, and Zhang]{yan2022cmmd}
Shifu Yan, Caihua Shan, Wenyi Yang, Bixiong Xu, Dongsheng Li, Lili Qiu, Jie Tong, and Qi~Zhang.
\newblock Cmmd: Cross-metric multi-dimensional root cause analysis.
\newblock In \emph{Proceedings of the 28th ACM SIGKDD Conference on Knowledge Discovery and Data Mining}, pp.\  4310--4320, 2022.

\bibitem[Zhu et~al.(2017)Zhu, Park, Isola, and Efros]{zhu2017unpaired}
Jun-Yan Zhu, Taesung Park, Phillip Isola, and Alexei~A Efros.
\newblock Unpaired image-to-image translation using cycle-consistent adversarial networks.
\newblock In \emph{Proceedings of the IEEE international conference on computer vision}, pp.\  2223--2232, 2017.

\end{thebibliography}
\bibliographystyle{iclr2026_conference}

\newpage
\appendix

\section{Interaction Field Matching: properties and proofs}
\subsection{Properties of interaction field lines}
\label{app1}

In this section we formulate several properties of the interaction field which are essential for the proof of the main theorem. First of all, let us formulate the notion of field flux which has been introduced intuitively in the main text.

\begin{definition}(Field flux). 
    Consider an element of area $\textbf{dS}$. The field flux $\textbf{E}$ through this element is called $d\Phi = \textbf{E}\cdot\textbf{dS}$. If we need to calculate the field flux through a finite surface $\Sigma$, then the field flux is

\begin{equation}
        \Phi = \int_{\Sigma} d\Phi = \int_\Sigma \mathbf{E}\cdot\mathbf{dS}.
\end{equation}
Intuitively, field flux indicates how many field lines pass through a surface $\Sigma$. The greater the flux, the higher the number of lines passing through a given area.
\end{definition}

\textbf{Remark}. For the closed surfaces, we assume that the normal is always directed outward.

\begin{lemma}[Generalized Gauss theorem]
\label{generalized_gauss}
     Let $\mathbb{P}$ and $\mathbb{Q}$ be two probability distributions on $\mathbb{R}^D$, and let $\text{supp}(\mathbb{P})$ and $\text{supp}(\mathbb{Q})$ denote their respective compact supports. These supports are located on the planes $z=0$ and $z=L$ in the extended space $\mathbb{R}^{D+1}$, and the distributions satisfy $\int_{\text{supp}(\mathbb{P})} \mathbb{P}(\mathbf{x}_q) d\mathbf{x}_q = \int_{\text{supp}(\mathbb{Q})} \mathbb{Q}(\mathbf{x}_{\bar{q}}) d\mathbf{x}_{\bar{q}}$. Let $\mathbf{E}_{q\bar{q}}(\widetilde{\mathbf{x}}) \equiv \mathbf{E}_{\mathbf{x}_q\mathbf{x}_{\bar{q}}} (\widetilde{\mathbf{x}})$ be an interaction field produced by a pair of a unit quark and a unit antiquark satisfying properties 1-3 of 
    \wasyparagraph\ref{lines_properties} with the transport plan $\pi(\mathbf{x}_q,\mathbf{x}_{\bar{q}})$. Let $M \subset \mathbb{R}^{D+1}$ be an open bounded set and let $\partial M$ be its boundary (a piecewise smooth hypersurface) such that $M \cap \text{supp}(\mathbb{P}) \neq \varnothing$ and $M \cap \text{supp}(\mathbb{Q}) = \varnothing$. Then 

    \begin{equation}
        \iint_{\partial M} \mathbf{E}\cdot\mathbf{dS} = \Phi_0\cdot \int_{M}\mathbb{P}(\widetilde{\mathbf{x}})d\widetilde{\mathbf{x}},
    \end{equation}

    where $\Phi_0 = \iint_{\partial M} \mathbf{E}_{q\bar{q}}\cdot\mathbf{dS} $ is the field flux from a single unit $q\bar{q}$-pair\footnote{We assume (\wasyparagraph\ref{lines_properties}) that the flux $\Phi_{q\bar{q}}$ is proportional to the charge of the quark $q$ that creates field $\mathbf{E}_{q\bar{q}}$ and does not depend on the relative position of the quark-antiquark pair. That is, all other things being equal, the replacement e.g. $q\to2q$ will lead to $\Phi_{q\bar{q}}\to2\Phi_{q\bar{q}}$. Therefore, for any pair $q\bar{q}$ with the same unit charge $q=1$ (which create an elementary field $\mathbf{E}_{q\bar{q}}$), the flux $\Phi_{q\bar{q}} \equiv \Phi_{1\bar{1}} = \Phi_0$ will be the same for all pairs}, and $\mathbb{P}(\widetilde{\mathbf{x}}) = \mathbb{P}(\mathbf{x})\delta(z)$.
\end{lemma}
\begin{proof}
    Substituting the explicit expression (\ref{Strong_superposition}) for the interaction field $\mathbf{E}(\widetilde{\mathbf{x}})$ we obtain:
    \begin{equation}\label{App:proof1}
    \begin{split}
       \iint_{\partial M}\mathbf{E}\cdot\mathbf{dS} &  = \iint_{\partial M}\Big(\int_{S} \mathbf{E}_{q\bar{q}}\pi(\mathbf{x}_q,\mathbf{x}_{\bar{q}})d\mathbf{x}_qd\mathbf{x}_{\bar{q}}\Big)\cdot\mathbf{dS} = \\
        & = \int_{S}\pi(\mathbf{x}_q,\mathbf{x}_{\bar{q}})d\mathbf{x}_qd\mathbf{x}_{\bar{q}}\cdot \iint_{\partial M}\mathbf{E}_{q\bar{q}}\cdot\mathbf{dS} = \\
        & = \int_{S} \pi(\mathbf{x}_q,\mathbf{x}_{\bar{q}})d\mathbf{x}_qd\mathbf{x}_{\bar{q}}\cdot \Phi_0 \Theta\left[\mathbf{x}_{q}\in \text{supp}(\mathbb{P})\cap M\right]  = \\
        &= \Phi_0\int_{\text{supp}(\mathbb{P})\cap M}d\mathbf{x}_q\int_{\text{supp}(\mathbb{Q})} d\mathbf{x}_{\bar{q}}\pi(\mathbf{x}_q,\mathbf{x}_{\bar{q}}) = \\
        & = \Phi_0\cdot\int_{M}\mathbb{P}(\widetilde{\mathbf{x}})d\widetilde{\mathbf{x}}.
    \end{split} 
    \end{equation}

where $S = \operatorname{supp}(\mathbb{P}) \times \operatorname{supp}(\mathbb{Q})$ denotes the Cartesian product of the supports, and $\Theta[\mathbf{x}_q \in \operatorname{supp}(\mathbb{P}) \cap M]$ is the indicator function equal to $1$ if $\mathbf{x}_q\in\operatorname{supp}(\mathbb{P})\cap M$, and $0$ otherwise.

The chain of equalities proceeds as follows: first, we substitute the definition of $\mathbf{E}(\widetilde{\mathbf{x}})$ from \eqref{Strong_superposition}; then we interchange the order of integration (justified by the continuity of $\pi$ and $\mathbf{E}{q\bar{q}}$ and the compactness of their supports); next, we use that the flux of a single pair $\mathbf{E}{q\bar{q}}$ through $\partial M$ equals $\Phi_0$ precisely when the quark lies inside $M$, which is expressed by the indicator; finally, we integrate over $\mathbf{x}_{\bar{q}}$ to obtain the required result $\Phi_0\cdot\int_{M}\mathbb{P}(\widetilde{\mathbf{x}})d\widetilde{\mathbf{x}}$
    
\end{proof}

\textbf{Remark}. If $M$ contains a part of the distribution $\mathbb{Q}(\cdot)$ but does not contain $\mathbb{P}(\cdot)$, then the statement of the theorem will be written as follows:

\begin{equation}
        \iint_{\partial M} \mathbf{E}\cdot\mathbf{dS} = \Phi_0\cdot \int_{M}\mathbb{Q}(\widetilde{\mathbf{x}})d\widetilde{\mathbf{x}}.
\end{equation}

\begin{corollary}
\label{corollary_flux}
For any point $\widetilde{\mathbf{x}}$ in the support of distribution $\mathbb{P}(\cdot)$:

    \begin{equation}
        E_z^+(\widetilde{\textbf{x}}) - E_z^-(\widetilde{\textbf{x}}) = \Phi_0\cdot\mathbb{P}(\textbf{x}).
    \end{equation}
where $E_z^\pm(\widetilde{\textbf{x}}) = E_z(\textbf{x},0^\pm)$.
\end{corollary}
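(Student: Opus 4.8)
The plan is to recover this pointwise (local) identity from the integral form of the Generalized Gauss theorem (Lemma \ref{generalized_gauss}) by a standard ``pillbox'' argument, exploiting the fact that $\mathbb{P}$ is concentrated as a surface density on the plane $z=0$ (i.e.\ it appears as $\mathbb{P}(\mathbf{x}_q)\delta(z)$ in the extended space $\mathbb{R}^{D+1}$). In classical electrostatics the normal component of the field jumps across a surface charge by exactly the surface density; here the role of the surface density is played by $\Phi_0\,\mathbb{P}(\widetilde{\mathbf{x}})$, and the corollary is the corresponding jump relation.

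First I would fix a point $\widetilde{\mathbf{x}}=(\mathbf{x},0)$ in the support of $\mathbb{P}$ and build a small cylindrical region $M=M_{\rho,\varepsilon}$ centered at $\widetilde{\mathbf{x}}$: its axis is aligned with $\mathbf{e}_z$, its two flat faces are the $D$-dimensional balls $B_\rho$ of radius $\rho$ lying in the planes $z=+\varepsilon$ and $z=-\varepsilon$, and its lateral boundary is the cylinder of height $2\varepsilon$. For small $\rho$ this $M$ contains a patch of $\mathbb{P}$ and none of $\mathbb{Q}$, so Lemma \ref{generalized_gauss} applies and gives $\iint_{\partial M}\mathbf{E}\cdot\mathbf{dS}=\Phi_0\int_M \mathbb{P}(\widetilde{\mathbf{x}})\,d\widetilde{\mathbf{x}}$.

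Then I would evaluate both sides as $\varepsilon\to 0^+$ with $\rho$ fixed and small. On the right, because $\mathbb{P}$ carries the factor $\delta(z)$, the volume integral collapses to the surface integral $\int_{B_\rho}\mathbb{P}(\mathbf{x}')\,d\mathbf{x}'$ over the disk, which by continuity of $\mathbb{P}$ equals $\mathbb{P}(\widetilde{\mathbf{x}})\,A+o(A)$, where $A=\mathrm{vol}(B_\rho)$. On the left, the outward normals on the top and bottom faces are $+\mathbf{e}_z$ and $-\mathbf{e}_z$, so these two faces contribute $\big(E_z^+(\widetilde{\mathbf{x}})-E_z^-(\widetilde{\mathbf{x}})\big)A+o(A)$ in the limit, recalling the definition $E_z^\pm=E_z(\widetilde{\mathbf{x}}\pm\varepsilon\mathbf{e}_z)$. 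Equating the two sides, dividing by $A$, and finally letting $\rho\to 0^+$ yields the claimed identity $E_z^+(\widetilde{\mathbf{x}})-E_z^-(\widetilde{\mathbf{x}})=\Phi_0\,\mathbb{P}(\widetilde{\mathbf{x}})$.

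The main obstacle is controlling the flux through the lateral surface of the pillbox and justifying the order of limits. I would argue that the lateral area scales like $O(\varepsilon\rho^{D-1})$, so as long as the in-plane (transverse) components of $\mathbf{E}$ stay bounded near $\widetilde{\mathbf{x}}$, this contribution is $O(\varepsilon)$ and vanishes first as $\varepsilon\to 0^+$, leaving only the two face contributions; the compact support and the regularity of a proper interaction field away from coincident quark--antiquark locations supply this boundedness. Care is also needed because $\mathbb{P}$ is a genuine surface (delta) density, so the ``volume'' integral must be read distributionally, and it is precisely this that makes the finite jump $E_z^+-E_z^-$ appear rather than a continuous normal component.
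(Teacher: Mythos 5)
Your proposal is correct and follows essentially the same route as the paper: a pillbox (cylinder) straddling the plane $z=0$, the generalized Gauss theorem of Lemma \ref{generalized_gauss} applied to it, and the observation that the lateral flux vanishes as the height shrinks, leaving the jump $E_z^+ - E_z^-$ on the faces. Your version merely makes the paper's infinitesimal language rigorous by tracking the $\varepsilon\to 0^+$ and $\rho\to 0^+$ limits explicitly and by spelling out the boundedness needed to kill the lateral term.
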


    \textit{Proof.} Consider a volume $\mathbf{dS}\in\text{supp }\mathbb{P}(\cdot)$. Consider a closed surface, a cylinder with infinitesimal indentation in different directions in the plane $z=0$, see Fig. \ref{fig:IFM_app1_Corollary}.

    The flux through this surface consists of three summands: the $d\Phi^+$ flux in the positive direction of the $z$-axis, the $d\Phi^-$ flux in the negative direction of the $z$-axis, and the $d\Phi_{\text{lat}}$ flux through the lateral surface:

    \begin{equation}
        d\Phi_{\text{full}} = d\Phi^++d\Phi^-+d\Phi_{\text{lat}} = E_z^+dS-E^-_zdS+0.
    \end{equation}

\begin{wrapfigure}[15]{r}{60mm}
\raggedleft
\includegraphics[width=60mm]{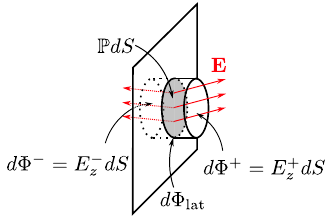}
\caption{Considered area.}
\label{fig:IFM_app1_Corollary}
\end{wrapfigure}

    Here $d\Phi_{\text{lat}}=0$ since the height of the cylinder under consideration can be made as small as we want (infinitesimal of higher order than $dS$). $d\Phi^-=-E^-_zdS$ has a negative sign due to the fact that the normal to the closed surface is directed outward, i.e. in the opposite direction from the axis $z$.

    Then, due to the generalized Gauss's theorem:

    \begin{equation}
        d\Phi_{\text{full}} = (E_z^+-E^-_z)dS = \Phi_0\cdot \mathbb{P}dS
    \end{equation}

    which proves the corollary.

\begin{lemma}[On field lines]
\label{lines_lemma}
    Let $\mathbb{P}(\cdot)$ and $\mathbb{Q}(\cdot)$ be two (discrete or continuous) distributions corresponding to quarks and antiquarks. Let these distributions have compact support and satisfy $\int\mathbb{P}(\mathbf{x}_q)d\mathbf{x}_q = \int\mathbb{Q}(\mathbf{x}_{\bar{q}})d\mathbf{x}_{\bar{q}}$. Let the field of the unit quark-antiquark pair $\mathbf{E}_{\mathbf{x}_q,\mathbf{x}_{\bar{q}}}(\widetilde{\mathbf{x}})\equiv \mathbf{E}_{q\bar{q}}(\widetilde{\mathbf{x}})$ start at $\mathbf{x}_q$ and end at $\mathbf{x}_{\bar{q}}$ (Property 1), and conserve flux along the current tube (Property 2). Then the total field (\ref{Strong_superposition}) from all quarks and antiquarks satisfies: 
    
    (a) Its lines start at $\mathbb{P}(\cdot)$ and end at $\mathbb{Q}(\cdot)$, except perhaps for the number of lines of zero flux 
    
    (b) It conserves flux along the current tubes.
\end{lemma}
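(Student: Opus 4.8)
The plan is to derive both claims from the divergence structure of the aggregate field $\mathbf{E}_\pi$, which the generalized Gauss theorem (Lemma~\ref{generalized_gauss}) together with Corollary~\ref{corollary_flux} already pin down: off the two plates the field is source-free, on $z=0$ it emits an outgoing flux density $\Phi_0\mathbb{P}$, and on $z=L$ it absorbs an incoming flux density $\Phi_0\mathbb{Q}$. A key modeling decision is that I would \emph{not} attempt to build the lines of the total field from the individual $q\bar{q}$-pair lines: the superposition in~\eqref{Strong_superposition} averages the pair fields and destroys the per-pair line structure. Instead the whole argument runs through flux balance applied to $\mathbf{E}_\pi$ itself, so that Properties 1--2 enter only indirectly, via the Gauss-type statement they imply for the superposed field.

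For part (b) I would isolate a short segment of a current tube lying strictly between the plates, so the enclosed region $M$ contains neither $\mathbb{P}$ nor $\mathbb{Q}$. Its boundary $\partial M$ decomposes into two caps $\Sigma_1,\Sigma_2$ and a lateral wall woven out of field lines. Since $\mathbf{E}_\pi$ is tangent to that wall, its contribution to $\iint_{\partial M}\mathbf{E}_\pi\cdot\mathbf{dS}$ vanishes. Because $M$ encloses no charge, Lemma~\ref{generalized_gauss} (with $\int_M\mathbb{P}=0$, and its remark giving $\int_M\mathbb{Q}=0$) forces the total boundary flux to be zero, so once orientations are matched the flux through $\Sigma_1$ equals that through $\Sigma_2$. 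As the caps are arbitrary cross-sections of the tube, the flux is constant along it, which is exactly (b).

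For part (a) I would trace an integral curve of $\mathbf{E}_\pi$ and show it can be born only at $\mathbb{P}$ and die only at $\mathbb{Q}$. By Corollary~\ref{corollary_flux} the net $z$-flux emitted at a point of the source plane is $E_z^+-E_z^-=\Phi_0\,\mathbb{P}(\widetilde{\mathbf{x}})\ge 0$, so points of $\mathbb{P}$ act as sources and, by the analogous statement in the remark after Lemma~\ref{generalized_gauss}, points of $\mathbb{Q}$ act as sinks; everywhere else the field is source-free, so no line may originate or terminate in the open slab except where $\mathbf{E}_\pi=0$. Following a generic line backward along $-\mathbf{E}_\pi$, it cannot close into a loop, cannot escape to infinity (compact support and field decay confine it), and cannot terminate at an interior regular point, so it must reach $z=0$ on $\mathrm{supp}(\mathbb{P})$; tracing it forward along $\mathbf{E}_\pi$ it likewise arrives at $\mathrm{supp}(\mathbb{Q})$. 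This gives the start/end claim.

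The main obstacle is the exceptional set flagged in (a): integral curves of a merely source-free field may limit onto the critical set $\{\mathbf{E}_\pi=0\}$ or onto separatrix sheets, and controlling these rigorously—rather than by the physical picture—is delicate. The saving observation, which I would make precise, is that every such anomalous line carries zero flux: the critical set and the separatrices have vanishing transverse flux, so by the balance argument of (b) they constitute a $\Phi_0$-null family. This is precisely the clause ``except perhaps for the number of lines of zero flux,'' and since a null-flux family transports no mass it does not disturb the overall correspondence, so (a) and (b) together establish the lemma.
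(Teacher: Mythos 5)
Your route for part (b) is sound and, if anything, tighter than the paper's. The paper proves (b) by a one-line superposition computation: it swaps the surface integral with the plan integral in \eqref{Strong_superposition} and invokes Property~2 for each pair, which tacitly treats cross-sections of the \emph{aggregate} field's tube as if every pair field had constant flux through them (the pair tubes and the aggregate tubes do not coincide). Your argument — a tube segment of $\mathbf{E}_\pi$ between the plates, lateral wall tangent to the field, zero enclosed charge, Lemma~\ref{generalized_gauss} forcing zero total boundary flux — is the cleaner way to make the same point, and the chargeless case of Lemma~\ref{generalized_gauss} you need is indeed covered verbatim by its proof with $\int_M\mathbb{P}=0$.

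The genuine gap is in part (a), precisely at the infinity case, and it traces back to your deliberate refusal to use the per-pair line structure. You claim a generic line ``cannot escape to infinity (compact support and field decay confine it)''; this is a non sequitur — pointwise decay of a field does not confine its integral curves (the Coulomb field of a single charge decays, yet all of its lines reach infinity), and nothing in the aggregate divergence structure alone rules out escape. The paper does not claim confinement: it \emph{allows} lines to escape and shows they carry zero flux, and this is exactly where Property~1 of the pair field is indispensable. Since each pair field's lines begin at $\mathbf{x}_q$ and end at $\mathbf{x}_{\bar q}$, the pair flux $\Phi_{q\bar q}$ through any remote surface tends to zero; superposing over a plan with compactly supported marginals, the aggregate flux through remote surfaces tends to zero as well, so by your own conservation argument (b) any escaping tube is a null-flux family. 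Your Gauss-type lemma cannot substitute for this step: it controls only \emph{closed} surfaces, and net-zero flux through large closed surfaces is compatible with balanced nonzero escaping families, so it does not bound the flux through an open cross-section far away. (Your assertion ``cannot close into a loop'' is likewise unjustified for a merely source-free, non-gradient field, though loops never start at $\mathbb{P}$ and are tangential to claim (a); the escape case is the real defect.) The repair is small and brings you in line with the paper: drop the confinement claim and fold escaping tubes into your zero-flux exceptional family via the per-pair vanishing-at-infinity argument; your treatment of interior stopping points, where $\mathbf{E}_\pi=0$ and hence the transverse flux vanishes, already matches the paper's proof.
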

\begin{proof}
    Let us begin with the proof of the second property. Let the field $\mathbf{E} = \int \mathbf{E}_{q\bar{q}}\pi_{q\bar{q}}d\mathbf{x}_qd\mathbf{x}_{\bar{q}}$ and $\int\mathbf{E}_{q\bar{q}}\cdot\mathbf{dS} = \text{const}$. Then:

    \begin{equation}
        \int \mathbf{E}\cdot\mathbf{dS} = \int \int \pi_{q\bar{q}}d\mathbf{x}_qd\mathbf{x}_{\bar{q}}\int \mathbf{E}_{q\bar{q}}\cdot\mathbf{dS} = \int \pi_{q\bar{q}}d\mathbf{x}_qd\mathbf{x}_{\bar{q}}\cdot \mathrm{const} = \mathrm{const}\cdot 1 = \mathrm{const}.
    \end{equation}

    Let us now prove the first property. Suppose the opposite. Let there be lines starting at $\mathbb{P}$ but not ending at $\mathbb{Q}$ (the case of lines not starting at $\mathbb{P}$ but ending at $\mathbb{Q}$ can be considered similarly). 

    Consider a $q\bar{q}$-pair. The field $\mathbf{E}_{q\bar{q}}$ of this pair by assumption of the lemma starts on $q$ and ends on $\bar{q}$, is continuous, and therefore cannot go to infinity. Therefore, the flux through any $\Sigma$-area infinitely distant from $q\bar{q}$-pair must be zero \footnote{The difference $\Phi^\Sigma_{q\bar{q}}$ with the $\Phi_0$ here is that $\Sigma$ is an open surface infinitely distant from the $q\bar{q}$-pair, while $\Phi_0$ is the flux through a closed surface $\partial M$ containing charge $q$ and not containing charge $\bar{q}$. In other words, $\Phi_0$ denotes the total field flux between a quark-antiquark pair of unit charge, while $\Phi^\Sigma_{q\bar{q}}$ defines the field flux through some infinitely distant surface.}:

    \begin{equation}
        \Phi^\Sigma_{q\bar{q}} = \int_\Sigma\mathbf{E}_{q\bar{q}}\cdot\mathbf{dS} \to 0, ||\mathbf{x}||\to\infty.
    \end{equation}

    Then, if $\mathbb{P}(\cdot)$ and $\mathbb{Q}(\cdot)$ have a compact support, the field flux from all quarks will also be zero:

    \begin{equation}
        \Phi_{\infty} = \int_\Sigma\mathbf{E}\cdot\mathbf{dS} = \int_\Sigma\int \mathbf{E}_{q\bar{q}}\pi_{q\bar{q}}d\mathbf{x}_qd\mathbf{x}_{\bar{q}} \cdot\mathbf{dS} = \int \pi_{q\bar{q}}\Phi^\Sigma_{q\bar{q}}d\mathbf{x}_qd\mathbf{x}_{\bar{q}}\to0, ||\mathbf{x}||\to\infty.
    \end{equation}

    Therefore, if the lines under consideration go to infinity, their flux is zero. 

    Let us now consider lines ending not at infinity and not at the target distribution (i.e., at some intermediate points)\footnote{Note that there are only two possible options - either the field line goes to infinity, or does not go to infinity, that is, it stops somewhere at an intermediate point.}. At the stopping point $\mathbf{E}=0$ (otherwise it is not a stopping point). But then the total flux produced by such lines:

    \begin{equation}
        \Phi = \int\mathbf{E}\cdot\mathbf{dS} = 0
    \end{equation}
    which proves the statement.
    
\end{proof}

\subsection{Definition of stochastic map $T$}
\label{T_def}

Movement from one distribution to the other is carried out along the field strength lines. In this section, we rigorously define this movement using a stochastic map $T$, taking into account the existence of two series of lines - forward-oriented and backward-oriented.

We define the \textit{stochastic} \textbf{forward map} $T_F$ from $\text{supp}(\mathbb{P})$ to $\text{supp}(\mathbb{Q})$ through forward-oriented field lines. For this, we consider a point $\widetilde{\textbf{x}}_q = (\textbf{x}_q,\varepsilon), \varepsilon\to 0^+$ slightly shifted in the direction of the second plate. Let us move along the corresponding field line by integrating $d\widetilde{\textbf{x}}(t) = \textbf{E}(\widetilde{\textbf{x}}(t))dt$.

Sooner or later, such movement leads to the intersection of the plane $z=0$ or $z=L$. At this moment, the question arises of whether to continue or stop the movement. If the movement continues, it will proceed until the plane $z=0$ or $z=L$ is intersected again. Then, a decision must again be made - to stop or to go further. This procedure must be continued until we reach the final stopping point. Let us denote the intersection points as follows:

\begin{equation}
    \widetilde{\textbf{x}}_q \to \widetilde{\textbf{x}}^{(1)}_F \to \widetilde{\textbf{x}}^{(2)}_F \to ... \to \widetilde{\textbf{x}}^{(N)}_F
\end{equation}

At each of these points, it is necessary to stop with probability $\nu(\widetilde{\textbf{x}}^{(i)}_F)$ and continue movement with probability $1-\nu(\widetilde{\textbf{x}}^{(i)}_F)$, where

\begin{equation}
\label{nu_def}
    \nu(\widetilde{\textbf{x}}^{(i)}_F) =
\begin{cases}
    1,& \text{if }E_z^{(\text{before})}(\widetilde{\textbf{x}}^{(i)}_F) \text{ and }E_z^{(\text{after})}(\widetilde{\textbf{x}}^{(i)}_F) \text{ have opposite signs}\\
    0,& \text{if } |E_z^{(\text{after})}(\widetilde{\textbf{x}}^{(i)}_F)|\geq |E_z^{(\text{before})}(\widetilde{\textbf{x}}^{(i)}_F)|\\
    \frac{|E_z^{(\text{before})}(\widetilde{\textbf{x}}^{(i)}_F)| - |E_z^{(\text{after})}(\widetilde{\textbf{x}}^{(i)}_F)|}{|E_z^{(\text{before})}(\widetilde{\textbf{x}}^{(i)}_F)|} & \text{if }|E_z^{(\text{after})}(\widetilde{\textbf{x}}^{(i)}_F)|< |E_z^{(\text{before})}(\widetilde{\textbf{x}}^{(i)}_F)|
\end{cases},
\end{equation}

where $E_z^{(\text{before})}(\widetilde{\textbf{x}}^{(i)}_F)$ and $E_z^{(\text{after})}(\widetilde{\textbf{x}}^{(i)}_F)$ are the values of the $z$-component of the field immediately before and immediately after intersecting the plane at point $\widetilde{\textbf{x}}^{(i)}_F$, respectively.

To understand the meaning of this probability, note that if $E_z^{(\text{before})}(\widetilde{\textbf{x}}^{(i)}_F)$ and $E_z^{(\text{after})}(\widetilde{\textbf{x}}^{(i)}_F)$ have opposite signs, then further movement along the field lines is impossible (and thus we have arrived at the final point $\widetilde{\textbf{x}}^{(N)}_F$ on the target distribution). If they have the same sign, then further movement along the field lines is possible.

If further movement is possible, two situations arise: either $|E_z^{(\text{after})}(\widetilde{\textbf{x}}^{(i)}_F)|\geq |E_z^{(\text{before})}(\widetilde{\textbf{x}}^{(i)}_F)|$ or $|E_z^{(\text{after})}(\widetilde{\textbf{x}}^{(i)}_F)|< |E_z^{(\text{before})}(\widetilde{\textbf{x}}^{(i)}_F)|$. In the first case, after crossing the plane, the field magnitude (and consequently the flux) increases - therefore the stopping probability is set to zero. If the flux magnitude becomes smaller after crossing, it means that one must stop with some probability and continue with another. Lemma X explains why this particular probability value is chosen for the latter situation.

The \textit{stochastic} \textbf{backward map} $T_B$ is constructed similarly using left limit ${\varepsilon\to 0^-}$ and backward-oriented field lines.

The complete transport $T$ is then described by the \textit{random variable}:
\begin{equation}
T(\mathbf{x}_q) = 
\begin{cases}
    T_F(\mathbf{x}_q) \text{ with probability } \mu(\textbf{x}_q),\\
    T_B(\mathbf{x}_q) \text{ with probability } 1 - \mu(\textbf{x}_q),
\end{cases}
\end{equation}
capturing both forward and backward trajectory endpoints for each $\mathbf{x}_q\in\text{supp}(\mathbb{P})$ with probabilities $\mu(\textbf{x}_q)$ and $1-\mu(\textbf{x}_q)$, where
\begin{equation}
\label{mu_def}
    \mu(\textbf{x}_q) = 
\begin{cases}
    0,& E_z^+<0\\
    1,& E_z^->0\\
    \frac{E_z^+}{E_z^++|E_z^-|} & \text{otherwise}.
\end{cases}
\end{equation}

Here $E_z^\pm = E_z(\widetilde{\mathbf{x}}\pm \varepsilon\mathbf{e}_z)$, $\varepsilon\to0^+$ are the left and right limits of the field value at the point $\widetilde{\mathbf{x}}$ in $z$ direction. The meaning of this probability is as follows. Value $\mu(\textbf{x}_q)$ allows one to choose a forward or backward sets of lines with a probability proportional to the field flux in the corresponding direction (i.e., proportional to $E_z^+$ and $|E_z^-|$, respectively). At the same time, if it is impossible to move forward ($E_z^+<0$), the map $T_B$ is chosen ($\mu(\textbf{x}_q) =0$), and if it is impossible to move backward ($E_z^->0$), the map $T_F$ is chosen ($\mu(\textbf{x}_q) = 1$).

\subsection{IFM main theorem proof}
\label{app2}

\begin{lemma}[First lemma on the flow]
\label{first_flow_lemma}
    Let $\mathbb{P}(\cdot)$, $\mathbb{Q}(\cdot)$ be two $D$-dimensional data distributions having a compact support, located on the planes $z=0$ and $z=L$ in $\mathbb{R}^{D+1}$, respectively. Let $\{\widetilde{\textbf{x}}_{q_i}\}_{i=1}^n$ be a sample of points distributed over $\mathbb{P}$. Let $dS$ be an element of $D$-dimensional area on the distribution of $\mathbb{P}$ ($dS\in\text{supp }\mathbb{P}$). Let the field near the element $dS$ have different signs: $E_z^+>0$ and $E_z^-<0$. Let $dn$ be the number of points from the sample that fall in the volume $dS$. Let $dn = dn_F+dn_B$, where $dn_F$ is the number of points from $dS$ that correspond to the mapping $T_F$ (i.e., movement along forward-oriented lines), and $dn_B$ corresponds to $T_B$. Then:
\begin{equation}
    \begin{split}
        & \frac{dn_F}{n}\xrightarrow[n\rightarrow\infty]{\text{a.s.}}\frac{E_z^+dS}{\Phi_0} = \frac{d\Phi_F}{\Phi_0}, \\
        & \frac{dn_B}{n}\xrightarrow[n\rightarrow\infty]{\text{a.s.}}\frac{|E_z^-|dS}{\Phi_0} = \frac{d\Phi_B}{\Phi_0},
    \end{split}
\end{equation}
\end{lemma}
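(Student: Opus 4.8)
The plan is to combine the strong law of large numbers (SLLN) with the flux identity of Corollary~\ref{corollary_flux}. First I would observe that, since the sample $\{\widetilde{\mathbf{x}}_{q_i}\}_{i=1}^n$ is i.i.d.\ from $\mathbb{P}$, the count $dn$ of points landing in the infinitesimal cell $dS$ is a sum of i.i.d.\ indicators with common success probability $p = \mathbb{P}(\widetilde{\mathbf{x}})\,dS$ (up to higher-order corrections in $dS$). By the SLLN, $dn/n \to p$ almost surely. Invoking Corollary~\ref{corollary_flux} together with the sign hypotheses $E_z^+>0$, $E_z^-<0$, I would rewrite this limit purely in terms of the two directional fluxes:
\[
p = \mathbb{P}\,dS = \frac{(E_z^+ - E_z^-)\,dS}{\Phi_0} = \frac{(E_z^+ + |E_z^-|)\,dS}{\Phi_0} = \frac{d\Phi_F + d\Phi_B}{\Phi_0},
\]
which already pins down the total mass in the cell.

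Next I would resolve the forward/backward split. Recall that the transport map $T$ is \emph{stochastic}: independently for each point it follows a forward-oriented line with probability $\mu(\widetilde{\mathbf{x}})$ and a backward-oriented line with probability $1-\mu(\widetilde{\mathbf{x}})$, where by \eqref{mu_def} the present configuration ($E_z^+>0$, $E_z^-<0$) lands in the \emph{otherwise} branch and yields $\mu = E_z^+/(E_z^+ + |E_z^-|)$. I would then introduce, for each sampled point $i$, the indicator $A_i^F = \mathbf{1}\{\widetilde{\mathbf{x}}_{q_i}\in dS \text{ and point } i \text{ goes forward}\}$. Over the joint randomness of sampling and the internal randomness of the map, the $A_i^F$ are i.i.d.\ Bernoulli with success probability $\mu\,p$ (treating $\mu$ as constant across the infinitesimal cell), and $dn_F = \sum_i A_i^F$. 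A second application of the SLLN then gives $dn_F/n \to \mu\,p$ almost surely, and analogously $dn_B/n \to (1-\mu)\,p$.

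Finally I would substitute the expressions for $\mu$ and $p$, so that the $(E_z^+ + |E_z^-|)$ factors cancel cleanly:
\[
\frac{dn_F}{n} \xrightarrow[n\to\infty]{\text{a.s.}} \mu\,p = \frac{E_z^+}{E_z^+ + |E_z^-|}\cdot\frac{(E_z^+ + |E_z^-|)\,dS}{\Phi_0} = \frac{E_z^+\,dS}{\Phi_0} = \frac{d\Phi_F}{\Phi_0},
\]
and symmetrically $dn_B/n \to |E_z^-|\,dS/\Phi_0 = d\Phi_B/\Phi_0$, which is the claim.

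The main obstacle I anticipate is the rigorous handling of the two coupled limiting operations---the infinitesimal cell size $dS\to 0$ and the sample size $n\to\infty$---together with the tacit assumption that $\mu$ and $\mathbb{P}$ are effectively constant over $dS$. A fully careful treatment would fix a small but strictly positive cell, control the $O(dS^2)$ error arising from the variation of $\mu$ and $\mathbb{P}$ across it, and confirm that the SLLN convergence is uniform enough to commute with shrinking the cell. I would also need to verify measurability of the forward/backward event so that the $A_i^F$ are genuinely i.i.d.\ under the product of the sampling law and the map's internal randomness; the remaining steps are then routine bookkeeping.
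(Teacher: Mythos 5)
Your proof is correct and follows essentially the same route as the paper's: the paper likewise combines the multiplication rule of probability (probability of choosing $T_F$ times probability of landing in $dS$) with the law of large numbers, the definition of $\mu$ in \eqref{mu_def}, and Corollary~\ref{corollary_flux} to get $\mu(\widetilde{\mathbf{x}})\,\mathbb{P}(\widetilde{\mathbf{x}})\,dS = E_z^+\,dS/\Phi_0$. Your explicit indicator-variable bookkeeping and the closing remarks about the coupled $dS\to 0$, $n\to\infty$ limits are a more careful rendering of the same argument, which the paper states at the informal infinitesimal level.
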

where $\left(...\xrightarrow[n\rightarrow\infty]{\text{a.s.}}...\right)$ denotes the almost sure convergence.
\begin{proof}
    According to the multiplication rule of probability and the law of large numbers:
    \begin{equation}
    \begin{split}
        & \frac{dn_F}{n}\xrightarrow[n\rightarrow\infty]{\text{a.s.}}(\text{probability of choosing }T_F)\cdot(\text{probability of falling in }dS) = \\
        & = \mu(\widetilde{\textbf{x}})\cdot \mathbb{P}(\widetilde{\textbf{x}})dS = \frac{E_z^+}{E_z^++|E_z^-|}\cdot\frac{(E_z^++|E_z^-|)dS}{\Phi_0} = \frac{E_z^+dS}{\Phi_0} = \frac{d\Phi_F}{\Phi_0}.\\
    \end{split}
    \end{equation}
    In the second equality, the definitions of probability $\mu(\cdot)$, see Eq. (\ref{mu_def}), and Corollary \ref{corollary_flux} were used. The case $dn_B$ is proved similarly.
\end{proof}

\begin{lemma}[Second lemma on the flow]
\label{second_flow_lemma}
    Let $\mathbb{P}, \mathbb{Q}, \{\widetilde{\textbf{x}}_{q_i}\}_{i=1}^n, dS, dn$ have the same meaning as in the Lemma \ref{first_flow_lemma}. Let $E_z^+$ and $E_z^-$ have the same sign near $dS$ (i.e., either simultaneously $E_z^\pm>0$ or simultaneously $E_z^\pm<0$). Then

    \begin{equation}
        \frac{dn}{n} \xrightarrow[n\rightarrow\infty]{\text{a.s.}} \frac{d\Phi_{\text{after}}}{\Phi_0} - \frac{d\Phi_{\text{before}}}{\Phi_0},
    \end{equation}
    where $d\Phi_{\text{before}}$ is the field flux through the current tube supported on $dS$ immediately before crossing the plane $dS \in \text{supp}\,\mathbb{P}(\cdot)$, and $d\Phi_{\text{after}}$ is the flux after crossing.
\end{lemma}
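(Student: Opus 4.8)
The plan is to reduce the statement to Corollary \ref{corollary_flux} combined with the strong law of large numbers, and then to identify the two field components $E_z^{\pm}\,dS$ with the fluxes through the current tube on the two sides of the plane. First I would count the points landing in $dS$. Since the sample $\{\widetilde{\mathbf{x}}_{q_i}\}_{i=1}^n$ is i.i.d.\ from $\mathbb{P}$, the events $\{\widetilde{\mathbf{x}}_{q_i}\in dS\}$ are i.i.d.\ Bernoulli with success probability $\mathbb{P}(\widetilde{\mathbf{x}})\,dS$ up to higher-order terms in the infinitesimal $dS$. Applying the law of large numbers to their indicators yields $\frac{dn}{n}\xrightarrow[n\to\infty]{\text{a.s.}}\mathbb{P}(\widetilde{\mathbf{x}})\,dS$, exactly as in Lemma \ref{first_flow_lemma}; this step is routine and requires no new idea.

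Next I would invoke Corollary \ref{corollary_flux}, which gives $\mathbb{P}(\widetilde{\mathbf{x}})\,dS=\frac{(E_z^+-E_z^-)\,dS}{\Phi_0}$, so that $\frac{dn}{n}$ converges almost surely to $\frac{(E_z^+-E_z^-)\,dS}{\Phi_0}$. It then remains only to reinterpret the two summands $E_z^{\pm}\,dS$ as the fluxes $d\Phi_{\text{after}}$ and $d\Phi_{\text{before}}$ through the current tube. This is precisely where the same-sign hypothesis enters: because $E_z^+$ and $E_z^-$ share a common sign, the field does not split into forward- and backward-oriented lines at $dS$ (the splitting case is exactly the setting of Lemma \ref{first_flow_lemma}); instead the current tube supported on $dS$ crosses the plane coherently in a single direction. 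Measuring flux in the direction of traversal, the flux just past the plane is $d\Phi_{\text{after}}$ and the flux just before it is $d\Phi_{\text{before}}$, and these equal $E_z^+\,dS$ and $E_z^-\,dS$ respectively in the both-positive subcase; in the both-negative subcase the roles and signs flip consistently, leaving the difference $d\Phi_{\text{after}}-d\Phi_{\text{before}}$ unchanged. Substituting then gives the claimed limit $\frac{dn}{n}\xrightarrow[n\to\infty]{\text{a.s.}}\frac{d\Phi_{\text{after}}}{\Phi_0}-\frac{d\Phi_{\text{before}}}{\Phi_0}$.

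The hard part will not be any analytic estimate but the careful bookkeeping of orientations: one must check that the flux jump $(E_z^+-E_z^-)\,dS$ supplied by Corollary \ref{corollary_flux} genuinely coincides with $d\Phi_{\text{after}}-d\Phi_{\text{before}}$ under the convention that flux is measured along the line's direction of motion, and verify this in both the both-positive and both-negative subcases so no stray sign is lost. Conceptually the identity is the statement that the number of field lines created by the quarks inside $dS$ equals the flux jump across the plane divided by the per-pair flux $\Phi_0$; I would cross-check this charge-counting picture against the Generalized Gauss theorem (Lemma \ref{generalized_gauss}) to confirm that the enclosed charge $\Phi_0\,\mathbb{P}(\widetilde{\mathbf{x}})\,dS$ is indeed what accounts for the difference between the outgoing and incoming fluxes.
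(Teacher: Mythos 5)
Your proposal is correct and takes essentially the same route as the paper's proof: apply the strong law of large numbers to get $\frac{dn}{n}\xrightarrow{\text{a.s.}}\mathbb{P}(\widetilde{\mathbf{x}})\,dS$, convert via Corollary \ref{corollary_flux} to $\frac{(E_z^+-E_z^-)\,dS}{\Phi_0}$, and identify $E_z^+\,dS$ and $E_z^-\,dS$ with $d\Phi_{\text{after}}$ and $d\Phi_{\text{before}}$. The only cosmetic difference is that the paper treats just the both-positive subcase explicitly (noting $\mu(\mathbf{x}_q)=1$ so all points in $dS$ move via $T_F$), while you additionally spell out the orientation bookkeeping in the both-negative subcase.
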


\textbf{Remark.} This statement implies that when the field crosses the plane $\mathbb{P}$ containing a charge (proportional to $dn/n$), the field flux must \textit{increase} by $\Phi_0 \cdot dn/n$.

\begin{proof}
    For clarity, consider the case $E_z^+ > 0,\ E_z^- > 0$ when $\mu(\mathbf{x}_q) = 1$ and motion processes only along the forward-oriented lines, corresponding to the mapping $T_F$. 
    
    By the probability product theorem, the strong law of large numbers, Corollary A.2, and the definition of flux:
    
    \begin{equation}
        \frac{dn}{n} \to \mu(\mathbf{x})\mathbb{P}(\mathbf{x})dS = 1 \cdot \mathbb{P}(\mathbf{x})dS = \frac{E_z^+ - E_z^-}{\Phi_0}dS = \frac{d\Phi_{\text{after}}}{\Phi_0} - \frac{d\Phi_{\text{before}}}{\Phi_0}.
    \end{equation}
\end{proof}

Lemmas \ref{first_flow_lemma} and \ref{second_flow_lemma} address the behavior near the distribution $\mathbb{P}$. Similar statements are valid for the behavior near $\mathbb{Q}$. When moving along field lines, we eventually reach the plane $z = L$. At this point two different scenarios may occur:

\begin{enumerate}
    \item $E_z^+(L)$ and $E_z^-(L)$ have opposite signs. Then the field line motion terminates in this case.
    \item $E_z^+(L)$ and $E_z^-(L)$ have the same sign. Then a portion $dn'$ of lines must terminate, while others continue.
\end{enumerate}

This portion $dn'$ can be found from the line termination property in $\mathbb{Q}$.

\begin{lemma}[Line Termination] 
\label{lemma_termination}
If $E_z^+(L)$ and $E_z^-(L)$ have the same sign upon crossing $z = L$, the number of lines terminating on $z = L$ satisfies:

\begin{equation}
    \frac{dn'}{n} \to -\frac{d\Phi_{\text{after}}}{\Phi_0} + \frac{d\Phi_{\text{before}}}{\Phi_0},
\end{equation}
\end{lemma}

\textbf{Remark.} When the field crosses the plane $\mathbb{Q}$ containing a charge (proportional to $dn'/n$), the field flux must \textit{decrease} by $\Phi_0 \cdot dn'/n$.

\begin{proof}. Consider the current tube before it intersects the plane $z=L$. Let us denote the number of lines inside $dn_{\text{before}}$. As a result of the intersection $z=L$, some of the lines $dn'$ stop moving, while some of the lines $dn_{\text{after}}$ continue moving. In view of the first Lemma \ref{first_flow_lemma} on flow, as well as the conservation of flow inside the current tube (Property 2 in \wasyparagraph\ref{lines_properties}):

    \begin{equation}
        \frac{dn_{\text{before}}}{n}\xrightarrow[n\rightarrow\infty]{\text{a.s.}}d\Phi_\text{before}.
    \end{equation}

    Then, by virtue of the law of large numbers and the fact that $dn_{\text{before}} = dn'+dn_{\text{after}}$, we have:

    \begin{equation}
    \begin{split}
        & \frac{dn'}{n}\xrightarrow[n\rightarrow\infty]{\text{a.s.}}(\text{probability of termination})\cdot\frac{dn_{\text{before}}}{n}\xrightarrow[n\rightarrow\infty]{\text{a.s.}}\nu(\textbf{x}^-)\cdot d\Phi_\text{before} \\
        & \frac{E_z^- - E_z^+}{E_z^-}\cdot E_z^-dS' = (E_z^- - E_z^+)dS' = -d\Phi_{\text{after}} + d\Phi_{\text{before}}
    \end{split}
    \end{equation}
    
\end{proof}

We now proceed to prove the main theorem.

\begin{theorem}[Interaction Field Matching]
    Let $\mathbb{P}(\textbf{x}_q)$ and $\mathbb{Q}(\textbf{x}_{\bar{q}})$ be two data distributions that have compact support. Let $\textbf{x}_q$ be distributed over $\mathbb{P}(\textbf{x}_q)$. Then $\textbf{x}_{\bar{q}}=T(\textbf{x}_q)$ is distributed over $\mathbb{Q}(\textbf{x}_{\bar{q}})$ almost surely:
    \begin{equation}
    \label{SFM_main_theorem_app}
        \text{If}\;\; \textbf{x}_q\sim \mathbb{P}(\textbf{x}_q) \Rightarrow T(\textbf{x}_q) = \textbf{x}_{\bar{q}}\sim \mathbb{Q}(\textbf{x}_{\bar{q}}).
    \end{equation}
\end{theorem}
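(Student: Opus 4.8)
The plan is to prove the theorem by a \emph{flux-balance} argument, tracking how the field flux (which plays the role of a conserved ``particle count'' by Property~2) is created at $\mathbb{P}$, transported unchanged through the intermediate region $0<z<L$, and absorbed at $\mathbb{Q}$. The central idea is to show that the push-forward density of $T(\mathbf{x}_q)$ at any area element $dS'\in\text{supp}(\mathbb{Q})$ equals $\mathbb{Q}(\mathbf{x}_{\bar q})$, and the natural bookkeeping variable for this is the flux $\Phi_0\cdot(\text{fraction of lines})$, since each field line carries flux $\Phi_0$ and the lines are what realize the transport map $T$.

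First I would set up the correspondence between sample fractions and flux at the source plane. By Lemma~\ref{first_flow_lemma} and Lemma~\ref{second_flow_lemma}, the fraction $dn/n$ of sampled points landing in an area element $dS\in\text{supp}(\mathbb{P})$ converges almost surely to $\mathbb{P}(\mathbf{x}_q)\,dS$, and the total new flux \emph{created} upon crossing the charged plane $z=0$ is exactly $\Phi_0\cdot\mathbb{P}(\mathbf{x}_q)\,dS$, by Corollary~\ref{corollary_flux}. The two lemmas cover the complementary sign regimes of $E_z^{\pm}$: in the opposite-sign case the forward and backward maps $T_F,T_B$ split the created flux into $d\Phi_F=E_z^+dS$ and $d\Phi_B=|E_z^-|dS$ with probabilities $\mu$ and $1-\mu$ respectively, while in the same-sign case the entire increment $d\Phi_{\text{after}}-d\Phi_{\text{before}}=\Phi_0\,\mathbb{P}\,dS$ is the created flux. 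In every case the definition of $\mu$ in \eqref{mu_def} is precisely engineered so that the probability of each branch is proportional to the $z$-flux it carries.

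The second step is the transport and absorption stage. Between the plates the field has no charges, so by Property~2 (flux conservation along stream tubes, as lifted to the full system in Lemma~\ref{lines_lemma}(b)) the flux carried by any bundle of lines is invariant as it flows from $z=0$ to $z=L$; moreover by Lemma~\ref{lines_lemma}(a) every line with nonzero flux does terminate on $\mathbb{Q}$ (lines that fail to do so carry zero flux and hence contribute zero probability mass to $T$). At the target plane I would invoke the Remark following Lemma~\ref{generalized_gauss} together with the Line Termination assumption: the flux that must be \emph{absorbed} at an area element $dS'\in\text{supp}(\mathbb{Q})$ is exactly $\Phi_0\cdot\mathbb{Q}(\mathbf{x}_{\bar q})\,dS'$. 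Equating the flux arriving at $dS'$ (conserved from the source) with the flux absorbed there, and dividing by $\Phi_0$, yields that the fraction of the $n$ trajectories terminating in $dS'$ converges almost surely to $\mathbb{Q}(\mathbf{x}_{\bar q})\,dS'$, which is precisely the statement that $T(\mathbf{x}_q)\sim\mathbb{Q}$.

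The main obstacle, and the step requiring the most care, is the global accounting across the entire space rather than a single stream tube: one must argue that the forward- and backward-oriented bundles emanating from \emph{all} of $\mathbb{P}$ reassemble, after their (possibly curved) journeys through $z<0$ and back, to deposit exactly the right local flux density at \emph{each} element of $\mathbb{Q}$, with no leakage to infinity. This is where Lemma~\ref{lines_lemma}(a) is essential --- it guarantees that the only lines not accounted for carry zero flux --- but the delicate point is that the \emph{pointwise} matching of $d\Phi_{\text{absorbed}}=\Phi_0\,\mathbb{Q}\,dS'$ at each $dS'$ relies on the Line Termination assumption holding tube-by-tube, and one must verify that the stochastic choice encoded in $\mu$ at the source is consistent with this local absorption requirement at the target for every area element simultaneously. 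I would handle this by treating $dS'$ arbitrary and showing the flux-in equals flux-absorbed identity holds for each such element independently, so that the marginal of $T(\mathbf{x}_q)$ matches $\mathbb{Q}$ everywhere on its support.
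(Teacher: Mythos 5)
Your skeleton coincides with the paper's actual proof: the same ingredients (Lemmas \ref{first_flow_lemma} and \ref{second_flow_lemma} together with Corollary \ref{corollary_flux} at the source plane, flux conservation from Lemma \ref{lines_lemma}, and the Line Termination assumption with the $\mathbb{Q}$-side Remark of Lemma \ref{generalized_gauss} at the target) are assembled into the same flux-balance conclusion $dn'/n \to \mathbb{Q}\,dS'$. However, there is a genuine gap at exactly the step you yourself flag as ``the main obstacle,'' and your proposed handling does not close it. The phrase ``equating the flux arriving at $dS'$ (conserved from the source) with the flux absorbed there'' presupposes that the flux reaching $dS'$ is transported in a single conserved sweep from $\mathbb{P}$ to $\mathbb{Q}$. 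For a general field satisfying only Properties 1--3 this is false: a line contributing to $dS'$ may have crossed the charged planes $z=0$ and $z=L$ several times before terminating (this is precisely why the Line Termination assumption speaks of lines that continue past $z=L$), and at each such crossing the line count in the tube changes. Saying you would ``show the flux-in equals flux-absorbed identity for each $dS'$ independently'' restates the goal rather than supplying an argument for it.

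The paper's mechanism, which your proposal is missing, is a per-tube backward trace with a signed telescoping sum. One decomposes $dn' = dn'_F + dn'_B$, follows the flux tube through $dS'$ backwards along the field until it lands on $\mathbb{P}$, records the successive plane crossings $\mathbf{x}_0 \to \cdots \to \mathbf{x}_N$ with area elements $dS_i$ and point counts $dn_i$, and observes the exact bookkeeping identity $\sum_{i=0}^{N}(-1)^{f_i}dn_i = 0$, where $f_i=0$ at $z=0$ (lines created) and $f_i=1$ at $z=L$ (lines absorbed). Passing to the almost-sure limit term by term --- Lemma \ref{first_flow_lemma} at the final crossing on $\mathbb{P}$, Lemma \ref{second_flow_lemma} and the termination assumption at the intermediate crossings, and the tube-wise conservation $d\Phi_{\text{after},i} = d\Phi_{\text{before},i-1}$ between consecutive crossings --- telescopes the sum to $dn'_F/n \to d\Phi'_F/\Phi_0$, and analogously for $dn'_B$; only then does the Remark after Lemma \ref{generalized_gauss} convert $d\Phi'_F + d\Phi'_B$ into $\Phi_0\,\mathbb{Q}\,dS'$. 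Without this device your argument is valid only in the special case where every line runs monotonically from $z=0$ to $z=L$ and terminates at first contact (as happens for the concrete realization of \wasyparagraph\ref{SFM_realization}), whereas the theorem is stated, and must be proved, for arbitrary fields satisfying Properties 1--3.
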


\textit{Proof}. Let $\{\mathbf{x}_{q_i}\}_{i=1}^n$ be points distributed according to $\mathbb{P}$. Moving along the field lines via mapping $T$, we obtain points $\mathbf{x}_{\bar{q}_i} = T(\mathbf{x}_{q_i})$ in distribution $\mathbb{Q}$.
    
    Consider a $D$-dimensional area element $dS' \subset \text{supp}\,\mathbb{Q}$. Let $dn'$ be the number of points $\mathbf{x}_{\bar{q}_i}$ in this area. Define the sample:
    \begin{equation}
        \hat{\mathbb{Q}}_n dS' = \frac{dn'}{n}.
    \end{equation}

\begin{wrapfigure}[15]{r}{40mm}
\raggedleft
\includegraphics[width=40mm]{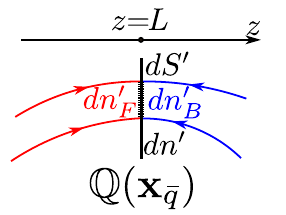}
\caption{Points corresponding to forward and backward lines.}
\label{fig:IFM_app2_Corollary}
\end{wrapfigure}
    
    The aim is to prove    
    \begin{equation}
        \hat{\mathbb{Q}}_n \to \mathbb{Q}.
    \end{equation}    
    The points $dn'$ arrive via forward or backward directions:  
    \begin{equation}
        dn' = dn'_F + dn'_B.
    \end{equation}
    
    Consider $dn'_F$ and its associated flux tube. Traverse this tube inversely along the field lines until stopping at $\mathbb{P}$.  During this motion, multiple crossings of $z=0$ and/or $z=L$ may occur. Denote the intersection points:
    \begin{equation}
        \mathbf{x}_{\bar{q}} = \mathbf{x}_0 \to \mathbf{x}_1 \to \cdots \to \mathbf{x}_{N-1} \to \mathbf{x}_N = \mathbf{x}_q.
    \end{equation}
    
    Their corresponding area elements are
    
    \begin{equation}
        dS' = dS_0 \to dS_1 \to \cdots \to dS_{N-1} \to dS_N = dS.
    \end{equation}
    
    Point counts in these areas read
    
    \begin{equation}
        dn' = dn_0 \to dn_1 \to \cdots \to dn_{N-1} \to dn_N = dn,
    \end{equation}

    where $dn_k$ ($k = 0, ..., N+1$) is number of points from sample $\{\mathbf{x}_{q_i}\}_{i=1}^n$ or from map $\{T(\mathbf{x}_{q_i})\}_{i=1}^n$ inside the volume $dS_k$ near point $\mathbf{x}_k$ that corresponds to considered motion inside current tube.
    
    The $dn_k$ are not arbitrary but related by flux conservation. Only the charged planes ($z=0$ or $z=L$) can alter the count:
    
    \begin{itemize}
        \item At $z_i=0$: Line count increases by $dn_i$
        \item At $z_i=L$: Line count decreases by $dn_i$
    \end{itemize}
    
    Mathematically:
    \begin{equation}
        \sum_{i=0}^N (-1)^{f_i}dn_i = 0,
    \end{equation}
    
    where
    
    \begin{equation}
        f_i = \begin{cases}
            0 & \text{if } z_i=0, \\
            1 & \text{if } z_i=L.
        \end{cases}
    \end{equation}

Due to the first Lemma on flow \ref{first_flow_lemma}:

\begin{equation}
    \frac{dn_N}{n}\equiv\frac{dn}{n}\xrightarrow[n\rightarrow\infty]{\text{a.s.}} \frac{d\Phi_N}{\Phi_0}\equiv\frac{d\Phi}{\Phi_0},
\end{equation}

Due to the second Lemma \ref{second_flow_lemma} on the flow , and because of the line termination Lemma \ref{lines_lemma}:
\begin{equation}
    (-1)^{f_i}\cdot\frac{dn_i}{n}\xrightarrow[n\rightarrow\infty]{\text{a.s.}} \frac{d\Phi_{\text{after},i}}{\Phi_0} - \frac{d\Phi_{\text{before},i}}{\Phi_0}.
\end{equation}

According to the law of conservation of flux along the tube (Lemma \ref{lines_lemma}):

\begin{equation}
d\Phi_{\text{after},i} = d\Phi_{\text{before},i-1}.
\end{equation}

Whence we obtain a chain of equalities:
\begin{equation}
    \begin{split}
        & 0 = \sum_{k=0}^{N+1} (-1)^{f_k}dn_k = -\frac{dn'_F}{n}+ (-1)^{f_1}\frac{dn_1}{n}+...+(-1)^{f_{N}}\frac{dn_{N}}{n}+\frac{dn}{n}\Rightarrow\\
        & \frac{dn'_F}{n} = (-1)^{f_1}\frac{dn_1}{n}+...+(-1)^{f_{N}}\frac{dn_{N}}{n}+\frac{dn}{n} \xrightarrow[n\rightarrow\infty]{\text{a.s.}}\\
        & \xrightarrow[n\rightarrow\infty]{\text{a.s.}}  -d\Phi_{\text{after},1} - d\Phi_{\text{before},1} + ... + d\Phi_{\text{after},N} - d\Phi_{\text{before},N} + d\Phi_{N+1} =\\
        & =d\Phi_{\text{after},1} + 0 + ... + 0 = d\Phi'_F.
    \end{split}
\end{equation}

Consequently,

\begin{equation}
    \frac{dn'_F}{n}\xrightarrow[n\rightarrow\infty]{\text{a.s.}}\frac{d\Phi'_F}{\Phi_0}.
\end{equation}

Similarly, it can be proven that
\begin{equation}
    \frac{dn'_B}{n}\xrightarrow[n\rightarrow\infty]{\text{a.s.}}\frac{d\Phi'_B}{\Phi_0}.
\end{equation}
Then, by virtue of the generalized Gauss's theorem (Lemma \ref{generalized_gauss}), we finally have
\begin{equation}
    \hat{\mathbb{Q}}_ndS' = \frac{dn'}{n} = \frac{dn'_F}{n} + \frac{dn'_B}{n}\xrightarrow[n\rightarrow\infty]{\text{a.s.}}\frac{d\Phi'_F}{\Phi_0}+\frac{d\Phi'_B}{\Phi_0} = \mathbb{Q}dS.
\end{equation}
This completes the proof.

\subsection{Interaction field realization}
\label{app3}

Here we formulate an algorithm for computing our constructed field which is inpired by strong interaction in physics at an arbitrary point $\widetilde{\mathbf{x}}\in\mathbb{R}^{D+1}$ with the quark $q$ and the antiquark $\bar{q}$ being at $\widetilde{\mathbf{x}}_q$ and $\widetilde{\mathbf{x}}_{\bar{q}}$ (see Fig. \ref{fig:IFM_realization}).

\textbf{Symmetric case}. Let a quark $q$ be located at the origin: $\widetilde{\mathbf{x}}_q = (\mathbf{0}, 0)\in\mathbb{R}^{D+1}$, and the antiquark $\bar{q}$ at the point $\widetilde{\mathbf{x}}_{\bar{q}} = (\mathbf{0}, L)\in\mathbb{R}^{D+1}$. The arbitrary point of space can be written as $\widetilde{\mathbf{x}} = (\mathbf{x}_\perp, z) = \widetilde{\mathbf{x}}_{\perp} + z \mathbf{e}_z,$
where $\widetilde{\mathbf{x}}_{\perp}\in\mathbb{R}^{D+1}$ is the component of the vector $\widetilde{\mathbf{x}}$ orthogonal to the $z$-axis. We introduce the following string hyperparameters (Fig. \ref{fig:IFM_realization_sym}):
\begin{itemize}[leftmargin=*]
    \item $\sigma_0$ is the effective width of the string in the cross section.
    \item $d$ is the size of the region of the string in which the field lines will curve toward the quark (antiquark). Thus, in the interval $z\in[d,L-d]$ the field lines are straight, and in the regions $z\in[0,d]$ and $z\in[L-d, L]$ the lines will be curved. Value $k = \pi/2d$ is also introduced.
\end{itemize}

We define the dependence of the effective string width $\sigma(z)$ on the coordinate $z$ as follows:

\begin{equation}
\label{sigma_z}
    \sigma(z) = 
    \begin{cases}
        \sigma_0\sin(kz), &z\in [0,d],\\
        \sigma_0,& z\in [d, L-d],\\
        \sigma_0\sin(k(L-z)), & z\in [L-d, d],\\
        0, & \text{otherwise}.
    \end{cases}
\end{equation}

The field direction $\mathbf{n}(\widetilde{\mathbf{x}})$ at the point $\widetilde{\mathbf{x}}$ is defined as:
\begin{equation}
    \mathbf{n}(\widetilde{\mathbf{x}}) = \cos\alpha(x_\perp,z)\cdot \mathbf{e}_z + \sin\alpha(x_\perp,z)\cdot \mathbf{e}_\perp\in\mathbb{R}^{D+1},
\end{equation}
where $\mathbf{e}_z, \mathbf{e}_\perp$ are the unit vectors along the $z$-axis and along the vector $\widetilde{\mathbf{x}}_\perp$, respectively, i.e., $\mathbf{e}_\perp = \widetilde{\mathbf{x}}_\perp/||\widetilde{\mathbf{x}}_\perp||$. $\alpha = \alpha(x_\perp,z)$ is the angle between the field direction at a given point and the $z$-axis. This angle is determined from the following considerations. Let $\widetilde{\mathbf{x}}'(z')$ be the field line parallel to the level $\sigma(z)$ (i.e. $\forall z':x'_\perp(z')/\sigma(z') = \text{const }$) which passes through the point $(\mathbf{x}_\perp, z)$, i.e., $\widetilde{\mathbf{x}}'(z')|_{z'=z}=\widetilde{\mathbf{x}} = (\mathbf{x}_\perp, z)$. Then $\alpha = \alpha(x_\perp,z)$ is determined by $\tan \alpha=\frac{dx'_\perp}{dz'}\Big|_{z'=z}$:
\begin{equation}
\label{alpha}
    \alpha= \alpha(x_\perp,z) = \begin{cases}
        \arctan(kx_\perp\cot(kz)),&z\in[0,d],\\
        0,&z\in[d,L-d],\\
        \arctan(kx_\perp\cot(k(L-z)), & z\in [L-d,L].
    \end{cases}
\end{equation}

We define the field strength value as the product of the Gaussian distribution in the radial direction and a normalization factor that keeps the interaction field flux invariant along the tube:

\begin{equation}
\label{field_realization_value}
    E(x_\perp,z) = \exp\left(-\frac{x_\perp^2}{2\sigma(z)^2}\right)\cdot\frac{1}{\sigma(z)^D\cos\alpha(x_\perp,z)}.
\end{equation}

\begin{algorithm}[t]
         
        \textbf{Input:} Positions of quark and antiquark: $\widetilde{\mathbf{x}}_q, \widetilde{\mathbf{x}}_{\bar{q}}\in\mathbb{R}^{D+1},\text{with } z_q = 0, z_{\bar{q}} = L$ \\
        \hspace*{5mm} Arbitrary point $\widetilde{\mathbf{x}}\in\mathbb{R}^{D+1}$\\
        \hspace*{5mm} String hyperparameters: $\sigma_0, d, k=\pi/2d$ \\
        \textbf{Output:} The interaction field $\mathbf{E}_{q\bar{q}}(\widetilde{\mathbf{x}})$\\
        { \textbf{Algorithm:} }{
        
            \hspace*{5mm} Calculate the vector connecting the quarks: $\widetilde{\mathbf{r}} = \widetilde{\mathbf{x}}_{\bar{q}}-\widetilde{\mathbf{x}}_q\in\mathbb{R}^{D+1}$\\
            \hspace*{5mm} Calculate the unit direction vector corresponding to it: $\mathbf{e}'_z = \frac{\widetilde{\mathbf{r}}}{||\widetilde{\mathbf{r}}||}\in\mathbb{R}^{D+1}$\\
            \hspace*{5mm} Calculate the vector of shift of the point $\widetilde{\mathbf{x}}$ from the axis of the string: \\
            $$\widetilde{\mathbf{\rho}} = \widetilde{\mathbf{x}} - \widetilde{\mathbf{x}}_q + (\widetilde{\mathbf{x}}_{q} - \widetilde{\mathbf{x}}_{\bar{q}})\frac{z}{L}\in\mathbb{R}^{D+1}$$, \\
            \hspace*{5mm} where $z$ is corresponding coordinate of point $\widetilde{\mathbf{x}}$ \\
            \hspace*{5mm} Calculate $x_\perp = ||\widetilde{\mathbf{\rho}}||, \mathbf{e}_\perp = \frac{\widetilde{\mathbf{\rho}}}{||\widetilde{\mathbf{\rho}}||}$\\
            \hspace*{5mm} Calculate the string width $\sigma(z)$ according to (\ref{sigma_z})\\
            \hspace*{5mm} Calculate the angle $\alpha(x_\perp, z)$ according to (\ref{alpha})\\
            \hspace*{5mm} Calculate the value of field $E(x_\perp, z)$ according to (\ref{field_realization_value})\\
            \hspace*{5mm} Calculate the direction $\mathbf{n}(\widetilde{\mathbf{x}}) = \cos\alpha(x_\perp,z)\cdot \mathbf{e}'_z + \sin\alpha(x_\perp,z)\cdot \mathbf{e}_\perp\in\mathbb{R}^{D+1}$ \\
            \hspace*{5mm} Return: $\mathbf{E}_{q\bar{q}}(\widetilde{\mathbf{x}}) = E(x_\perp, z)\mathbf{n}(x_\perp, z)$
           
        }
        \caption{Interaction field calculation}
        \label{algorithm:IFM}
\end{algorithm}

\textbf{Shifted case}. In the case where the quarks are in the shifted positions $\widetilde{\mathbf{x}}_q$ and $\widetilde{\mathbf{x}}_{\bar{q}}$, we use a field shift parallel to the planes $z=0$ and $z=L$, as shown in Fig. \ref{fig:IFM_realization_shifted}. We use the shift and not the rotation of the string with the aim of not generating backward-oriented lines and lines traversing the region $z>L$. The detailed algorithm for \underline{calculating the field} is formulated in Algorithm \ref{algorithm:IFM}

\subsection{Proof of properties of interaction field realization}
\label{app4}

\begin{theorem}[Properties of our interaction field realization]
\label{thm:field_properties_app}
Our realization of the interaction field $\mathbf{E}(\widetilde{\mathbf{x}})$ satisfies the fundamental Properties 1-2 in \wasyparagraph\ref{lines_properties}, with following additional characteristics:

\begin{itemize}[leftmargin=6pt]
    \item \textbf{Z-Axis caging}: Field lines never extend beyond $z > L$.
    \item \textbf{Unidirectional Flow}: No backward-oriented field lines exist.
    \item \textbf{Centrosymmetrical arrangement}: $\mathbf{E}(\widetilde{\mathbf{x}}) = \mathbf{E}(r_\perp, z)$.
    \item \textbf{Radial Decay}: Monotonic decrease in field strength away from axis:
        \[
        \frac{\partial \|\mathbf{E}\|}{\partial r_\perp} \leq 0 \quad \text{with} \quad \lim_{r_\perp \to \infty} \mathbf{E}(\widetilde{\mathbf{x}}) = \mathbf{0}.
        \]
    \item \textbf{Axial Alignment}: Field becomes parallel to the string axis in middle region:
        \[
        \mathbf{E}(r_\perp, z) \parallel \mathbf{e}'_z \quad \text{for} \quad z \in [d, L-d].
        \]
\end{itemize}
\end{theorem}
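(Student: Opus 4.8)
The plan is to prove everything first in the \textbf{symmetric frame} (quark at the origin, antiquark at $(\mathbf{0},L)$) and then observe that the general case reduces to it. Indeed, every ingredient of the realization — $\sigma(z)$ in \eqref{sigma_z}, $\alpha(x_\perp,z)$ in \eqref{alpha}, and $E(x_\perp,z)$ in \eqref{field_realization_value} — depends on position only through the transverse distance $x_\perp=r_\perp$ and the height $z$, while the direction $\mathbf{n}$ lies in the plane spanned by $\mathbf{e}_z$ and the radial unit vector $\mathbf{e}_\perp$; this gives the \textbf{Centrosymmetrical arrangement} at once. The shifted case is obtained from the symmetric one by the $z$-dependent translation $\widetilde{\mathbf{\rho}}$ of Algorithm \ref{algorithm:IFM}, which is a transverse isometry and therefore transports all the properties below verbatim, so it suffices to argue in the symmetric frame.

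Next I would identify the field lines explicitly. From $d\widetilde{\mathbf{x}}/d\tau=\mathbf{n}$ one reads off $dx_\perp/dz=\tan\alpha$, and by the very construction of $\alpha$ in \eqref{alpha} this is precisely the equation of the curves $x_\perp(z)=c\,\sigma(z)$ with $c$ constant. Since $\sigma(0)=\sigma(L)=0$ by \eqref{sigma_z}, each such line obeys $x_\perp\to 0$ as $z\to 0^+$ and as $z\to L^-$; hence all lines emanate from the quark on the axis at $z=0$ and converge to the antiquark at $z=L$, which is \textbf{Property 1}. Because $\alpha=\arctan(\cdot)\in(-\tfrac{\pi}{2},\tfrac{\pi}{2})$ we have $\cos\alpha>0$, so $E_z=E\cos\alpha>0$ throughout $0<z<L$; thus $z$ strictly increases along every line, no line ever reverses (\textbf{Unidirectional Flow}), and together with $\sigma(z)=0$ for $z\notin[0,L]$ no line leaves the slab (\textbf{Z-Axis caging}). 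On $z\in[d,L-d]$ one has $\alpha\equiv0$, so $\mathbf{n}=\mathbf{e}_z$ and $\mathbf{E}\parallel\mathbf{e}'_z$, giving \textbf{Axial Alignment}.

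The core computation is \textbf{flux conservation} (Property 2). I would evaluate the flux through a cross-section of a stream tube taken in a plane $z=\text{const}$, whose normal is $\mathbf{e}_z$: there $\mathbf{E}\cdot\mathbf{e}_z=E\cos\alpha=\exp\!\big(-x_\perp^2/2\sigma(z)^2\big)/\sigma(z)^D$, the $1/\cos\alpha$ factor in \eqref{field_realization_value} being designed precisely to cancel the obliquity. The substitution $u=x_\perp/\sigma(z)$ turns the transverse integral into $\int \exp(-u^2/2)\,d^Du=(2\pi)^{D/2}$, independent of $z$; moreover a stream tube is exactly the set $\{x_\perp/\sigma(z)\in\Omega\}$ for a fixed region $\Omega$ in $u$-space, so its flux equals $\int_\Omega \exp(-u^2/2)\,d^Du$ at every height. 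Since the tube walls are field lines carrying no flux, Property 2 follows.

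Finally, \textbf{Radial Decay}. With $\|\mathbf{E}\|=E$, the limit $\lim_{r_\perp\to\infty}\mathbf{E}=\mathbf{0}$ is immediate, since the Gaussian $\exp(-r_\perp^2/2\sigma^2)$ overwhelms the algebraic factor $1/\cos\alpha$, and monotonicity is trivial on $[d,L-d]$ where $\alpha=0$. The delicate region is $z\in[0,d]\cup[L-d,L]$: here $1/\cos\alpha=\sqrt{1+a^2x_\perp^2}$ with $a=k\cot(kz)$, so $\|\mathbf{E}\|$ is a product of a decreasing Gaussian and an increasing algebraic factor, and differentiating reduces $\partial_{r_\perp}\|\mathbf{E}\|\le0$ to the inequality $a^2\big(\sigma(z)^2-x_\perp^2\big)\le1$. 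Using $a\,\sigma(z)=k\sigma_0\cos(kz)$ this is guaranteed by the mild hyperparameter condition $k\sigma_0\le1$ (equivalently $\sigma_0\le 2d/\pi$), i.e. the string being thin relative to the curving region. I expect this competition between Gaussian decay and the $1/\cos\alpha$ blow-up near the axis to be the main obstacle, as it is the one place requiring a genuine estimate rather than a direct substitution, and the one place where a constraint on the parameters must be invoked.
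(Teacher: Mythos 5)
Your proposal is correct and, for Properties 1--2 and the caging, unidirectionality, symmetry and axial-alignment bullets, it follows essentially the paper's own route (Appendix \ref{app4}): you identify the field lines as the level curves $x_\perp = c\,\sigma(z)$ of \eqref{sigma_z} (the paper phrases this as ``$\mathbf{n}$ is tangent to $x'_\perp(z')$, which by construction begins at the quark and ends at the antiquark''), and your flux computation via the substitution $u = x_\perp/\sigma(z)$, in which the $\sigma(z)^{-D}$ and obliquity factors cancel, is just the integrated form of the paper's infinitesimal calculation with $\kappa = x_\perp/\sigma(z)$ and $dS \sim x_\perp^{D-1}dx_\perp$. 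Your unidirectionality argument ($\cos\alpha>0$, hence $E_z>0$ throughout the slab) is in fact cleaner than the paper's, which only invokes the vanishing of $\sigma$ outside $[0,L]$ (and contains the typo ``$\sigma(z<L)=0$''). The genuine divergence is \textbf{Radial Decay}: the paper dismisses it in one sentence (``satisfied because of the explicit formula \eqref{field_realization_value}''), whereas your calculation shows the claim is \emph{not} unconditional. With $a = k\cot(kz)$ on $z\in[0,d]$ one has $1/\cos\alpha=\sqrt{1+a^2x_\perp^2}$ and
\begin{equation*}
\frac{\partial \ln E}{\partial x_\perp} \;=\; -\frac{x_\perp}{\sigma(z)^2} + \frac{a^2 x_\perp}{1+a^2x_\perp^2} \;\le\; 0 \quad\Longleftrightarrow\quad a^2\bigl(\sigma(z)^2 - x_\perp^2\bigr)\le 1,
\end{equation*}
and since $a\,\sigma(z)=k\sigma_0\cos(kz)$, near the axis as $z\to0^+$ this holds iff $k\sigma_0\le 1$, i.e. $\sigma_0\le 2d/\pi$; for $\sigma_0>2d/\pi$ the field strength genuinely increases radially near the axis close to the plates, so the bullet as stated needs your hyperparameter constraint. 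This is a real (if benign) gap in the paper's verification that your argument patches; reassuringly, the experimental configuration of Table \ref{table:hyperparams} ($\sigma_0=1$, $d=\pi/2$, hence $k\sigma_0=1$) sits exactly at the boundary, where the inequality still holds since $a^2\sigma(z)^2=\cos^2(kz)\le 1$.

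One small overreach on your side: the reduction of the shifted case to the symmetric one is not an ``isometry transporting everything verbatim''. The shifted direction field is built from the tilted axis vector $\mathbf{e}'_z$ (Algorithm \ref{algorithm:IFM}), so the construction is a shear: relative to the moving string axis the line equation becomes $d\rho/dz = (\|\widetilde{\mathbf{r}}\|/L)\,k\rho\cot(kz)$, so the curves $\rho = c\,\sigma(z)$ are no longer field lines when $\|\widetilde{\mathbf{r}}\|>L$, and the flux-conservation computation does not transfer verbatim (lines do still begin and end at the quarks, and caging and unidirectionality survive since the $z$-component of the direction is $\cos\alpha\cdot L/\|\widetilde{\mathbf{r}}\|>0$). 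The paper's proof is likewise written only in the symmetric frame, so you are at parity with it here, but a fully rigorous treatment of Property 2 in the shifted case would require this extra step.
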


\begin{proof}
    The interaction field starts at a quark and ends at an antiquark (Property 1\wasyparagraph\ref{lines_properties}) due to the fact that the field direction $\mathbf{n}(\widetilde{\mathbf{x}}) = \cos\alpha(x_\perp, z)\cdot \mathbf{e}'_z + \sin\alpha(x_\perp,z)\cdot \mathbf{e}_\perp$ is a tangent to the curve $x'_\perp(z')$, which by construction begins at $\widetilde{\mathbf{x}}'_q$ and ends at $\widetilde{\mathbf{x}}_{\bar{q}}$.

    Consider an infinitesimal current tube connecting a quark and an antiquark. The surface bounding this tube is parallel to the field line $x'_\perp(z')$. Along the field line by construction $x_\perp/\sigma(z) = \kappa= \text{const}$. In $(D+1)$-dimensional space, the area element $dS$ orthogonal to the $z$-axis is $dS \sim x_\perp^{D-1}dx_\perp$.Therefore, due to the definition of the flux and the explicit expression for $E(x_\perp,z)$ (\ref{field_realization_value}) we have
    \begin{equation}
    \begin{split}
        & d\Phi = \mathbf{E}\cdot\mathbf{dS} = E dS\cos\alpha \sim \exp\left(-\frac{r_\perp^2}{2\sigma(z)^2}\right)\cdot\frac{1}{\sigma(z)^D\cos\alpha}\cdot x_\perp^{D-1}dx_\perp\cdot \cos\alpha = \\
        & = \exp\left(-\frac{\kappa^2}{2}\right)\cdot \kappa^{D-1}d\kappa = \mathrm{const}.
    \end{split}
    \end{equation}

    Therefore, Property 2 \wasyparagraph\ref{lines_properties} is satisfied.

    The Z-Axis caging property is satisfied because $\sigma(z>L) = 0$. The Unidirectional Flow property is satisfied due to $\sigma(z<L) = 0$. The Cylindrical Symmetry property is satisfied because $\mathbf{E}(\widetilde{\mathbf{x}}) = \mathbf{E}(x_\perp,z)$. Radial decay property is satisfied because of the explicit formula (\ref{field_realization_value}) for $E(x_\perp,z)$ . Finally, the Axial alignment property is satisfied because $\alpha(z\in[d,L-d]) = 0$.
\end{proof}

\textcolor{black}{\textbf{Remark.} A crucial element in the flux conservation proof is the factor $1/\sigma(z)^D$ in the definition of the IFM field (see (\ref{field_realization_value})). The intuition behind this factor can be explained as follows: any flux tube must narrow to a point as it approaches a charge. Consequently, the cross-section (which is proportional to $\sigma(z)^D$) must also decrease. Flux conservation can only be maintained by a proportional increase in the field strength, see Figure \ref{fig_flux} below.}

\begin{figure}[ht]
    \centering
    \includegraphics[width=0.5\textwidth]{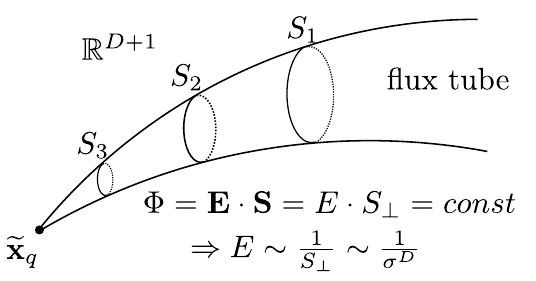}
    \caption{ \centering \textcolor{black}{An illustration to the flux conservation property. To maintain the field flux conservation within a tube as it narrows, a proportional increase in field strength is required.}}
    \label{fig_flux}
\end{figure}

{\color{black}In simpler terms, the closer one is to a charge, the stronger the field must be.}

\section{Experimental details}
\label{app:experiments_details}

We aggregate the hyper-parameters of our Algorithm~\ref{algorithm:EFM} for different experiments in the Table \ref{table:hyperparams}. We base our code for the experiments on EFM's code \url{https://github.com/justkolesov/FieldMatching}. Our code is available at \url{https://github.com/justkolesov/InteractionFieldMatching}.

\begin{table}[!h]
\scriptsize
\centering
\begin{tabular}{|l|l|l|l|l|l|l|l|}
\hline
Experiment            & D    & Batch Size   & $L$    &  $\sigma_{0}$ & $d$   &   LR &  $\pi$ plan \\ \hline
Gaussian   Swiss-roll \wasyparagraph{\ref{3d_exp}} & 2    & 1024 & [6, 40] & 1        & $ [0.1, 0.5]L$ & 2e-4           & [Ind, MB]  \\ \hline
CIFAR-10 Generation \wasyparagraph{\ref{generation_exp}}   & 3072 & 128   & 20   & 1       & $0.5L$   & 2e-4         & Ind  \\ \hline
CelebA 64x64  Generation  \wasyparagraph{\ref{generation_exp})}& 12288 & 128  & 20   & 1       &  $ [0.1, 0.25,  0.4, 0.5]L$ & 2e-4         & Ind \\ \hline
MNIST digits 2$\to$3 Translation  \wasyparagraph{\ref{transfer_exp}}& 3072 & 128   & 20   & 1       &  $0.5L$  & 2e-4         & [Ind, MB]\\ 
\hline
Winter$\to$Summer   Translation  \wasyparagraph{\ref{transfer_exp}} & 12288 & 128   & 20   & 1       & $0.5L$ & 2e-4         & [Ind, MB]\\ 
\hline
\textcolor{black}{CelebA 128x128  Generation}  \textcolor{black}App. \ref{app:128_celeba}& \textcolor{black}{49152} & \textcolor{black}{128}  & \textcolor{black}{20}   & \textcolor{black}{1}       & \textcolor{black}{$0.5L$} & \textcolor{black}{2e-4}         & \textcolor{black}{Ind}\\ 
\hline
\textcolor{black}{Conditional CIFAR-10 generation}  \textcolor{black}{App. \ref{app:conditional_cifar}}& \textcolor{black}{3072} & \textcolor{black}{128}  & \textcolor{black}{20}   & \textcolor{black}{1}       & \textcolor{black}{$0.5L$} & \textcolor{black}{2e-4}         & \textcolor{black}{Ind}\\
\hline
\end{tabular}
\caption{\centering  Hyper-parameters of Alg. ~\ref{algorithm:EFM} for the experiments, where $D$ is the dimensionality of task, $L$ is the distance betwenn plates, $\sigma_{0}$ is the effective width, $d$ is the characteristic distance (see Fig.\ref{fig:IFM_realization_sym}).}
\label{table:hyperparams}
\vspace{-2mm}
\end{table}

In the case of the Image experiments (see \S\ref{transfer_exp} and \S\ref{generation_exp}), we follow \citep[EFM]{kolesov2025field}, \citep[PFGM/PFGM++]{xu2022poisson,xu2023pfgm++}, \citep[DDPM]{ho2020denoising} and \citep[FM]{lipmanflow} and  use Exponential Moving Averaging (EMA) technique with the ema rate decay equals  0.99 to provide smooth solution.  Also, we use linear scheduler, that grows from 0 to $2e-4$
during the first 5000 iterations and decreases 
monotonically. As for the optimizer, we use Adam optimizer \cite{kingma2015adam} with the learning rate $2e-4$ and weight decay equals  $1e-4$.

We compare our method with PFGM/PFGM++ \cite{xu2022poisson,xu2023pfgm++}, whose the source code are taken from \url{https://github.com/Newbeeer/pfgmpp} for running \textbf{PFGM++} and \url{https://github.com/Newbeeer/Poisson_flow/} for \textbf{PFGM}
in our experiments. We follow the proposed values of hyper parameters are appropriate for us: $\gamma=5, \tau=0.3, \epsilon = 1e-3$. The source code for \textbf{DDPM} is taken from  \url{https://github.com/yang-song/score_sde_pytorch} with hyper-parameters $\sigma_{min}=0.01, \sigma_{max}=50, \beta_{min}=0.1$ and  $\beta_{max}=20$.  The source code for \textbf{FM} is taken from \url{https://github.com/facebookresearch/flow_matching} with linear interpolant . The source code for StyleGAN is taken from \url{https://github.com/NVlabs/stylegan2-ada-pytorch}.

\section{\textcolor{black}{Ablation study}}
\label{app:ablation}
\color{black}
Our IFM realization is defined by the following hyperparameters: the distance $L$ between plates, the string width $\sigma_0$, and the distance $d$ over which field lines curve toward the charges. In this Appendix, we address the practical selection of these hyperparameters and present an ablation study on how they affect our model's performance. We choose parameters based on the following ideas: 

\begin{enumerate}[leftmargin=*]
    \item Since we learn the normalized field (see (\ref{NN_loss})), the factor $1/\sigma(z)^D$ cancels out. Indeed, let $\widetilde{\textbf{x}} \in \mathbb{R}^{D+1}$ be a point where we estimate the normalized vector field using $B$ sampled pairs of quarks and anti-quarks. In accordance with the superposition principle (see (\ref{Strong_superposition})), the resulting field is obtained as the average of $B$ independent fields $E(x_{\perp}^{(i)},z)\textbf{n}(x_{\perp}^{(i)},z)$ (see (\ref{field_realization_value})) from each pair:
$$\frac{\textbf{E}(\widetilde{\textbf{x}})}{||\textbf{E}(\widetilde{\textbf{x}})||} =  \sum_{i=1}^{B}\frac{\exp(-\frac{x_{\perp}^{(i)}}{2\sigma(z)^{2}})}{\cancel{\sigma(z)^{D}}}\textbf{n}(x_{\perp}^{(i)},z) \Bigg/ \frac{1}{{\cancel{\sigma(z)^{D}}}}  ||\sum_{i=1}^{B}\exp(-\frac{x_{\perp}^{(i)}}{2\sigma(z)^{2}}) \textbf{n}(x_{\perp}^{(i)},z)||.$$

Therefore, term $1/\sigma(z)^D$ cancels out completely. The practical choice of the hyperparameter $\sigma_0$ is determined solely by numerical considerations and is usually set to $\sigma_0=1$.

\item The distance $d$ should not be chosen too short—this complicates data translation via the ODE due to the high curvature of the field lines in the region $z \in [0, d]\cup [L-d, L]$ . In practice, we usually use $d \in [0.1L,\ 0.5L]$.

\item Finally, the distance $L$ does not significantly impact translation quality in our method (see Fig. \ref{fig:Dippole}). This is different from EFM, where making $L$ too large significantly worsens the results (see \wasyparagraph\ref{EFM_limitations}). Our IFM realization is specifically designed to reduce this dependency through straight field segments for $z \in [d, L-d]$, where ODE integration follows straight lines. In practice, analogously to EFM, we set $L$ to be on the order of the data standard deviation: $L \sim \sqrt{D_{\mathbb{P}}}$ or $\sqrt{D_{\mathbb{Q}}}$.

\end{enumerate}

Figure \ref{fig:four_images} presents a series of experiments with different values of the parameter $d$. It can be seen that the generation quality does not significantly depend on this parameter.

\color{black}

\begin{figure}[ht]
    \centering
    \begin{subfigure}[b]{0.23\textwidth}
        \centering
        \includegraphics[width=\linewidth]{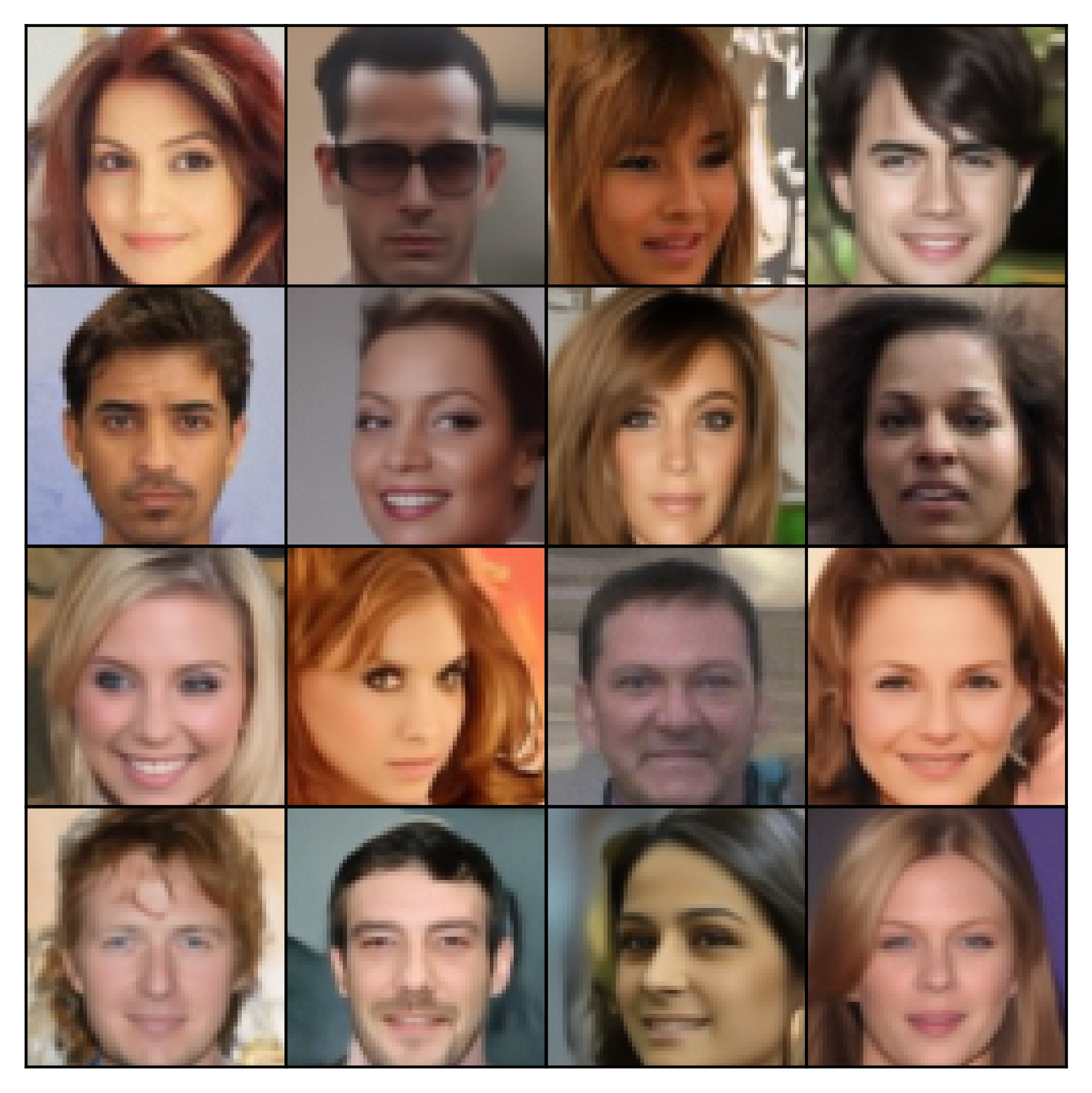}
        \caption{\centering\textcolor{black}{ $L=20,\sigma_{0}=1,d=0.1L$.}}
        \label{fig:sub1}
    \end{subfigure}
    \hfill
    \begin{subfigure}[b]{0.23\textwidth}
        \centering
        \includegraphics[width=\linewidth]{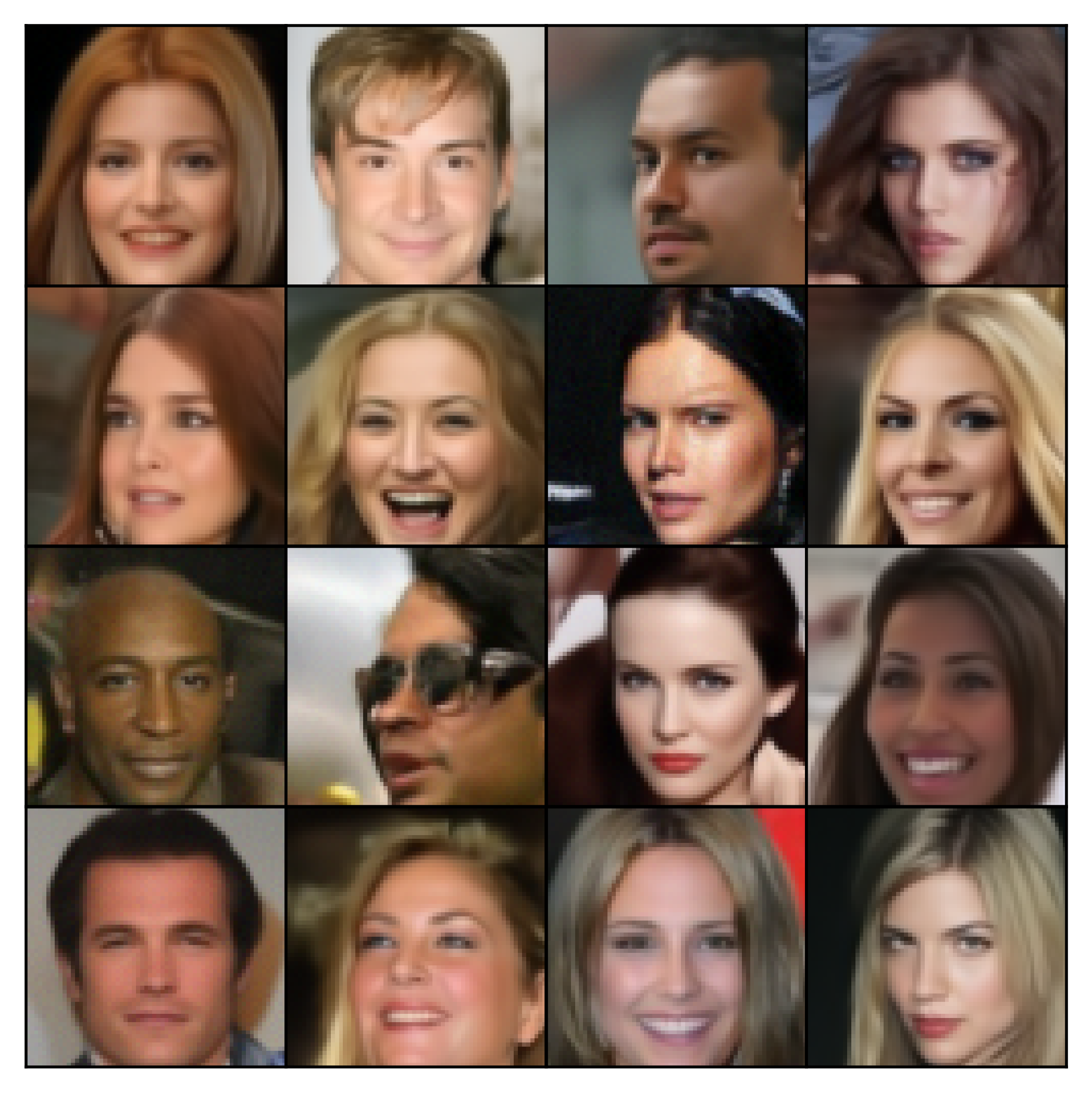}
        \caption{\centering\textcolor{black}{$L=20,\sigma_{0}=1,d=0.25L$.}}
        \label{fig:sub2}
    \end{subfigure}
    \hfill
    \begin{subfigure}[b]{0.23\textwidth}
        \centering
        \includegraphics[width=\linewidth]{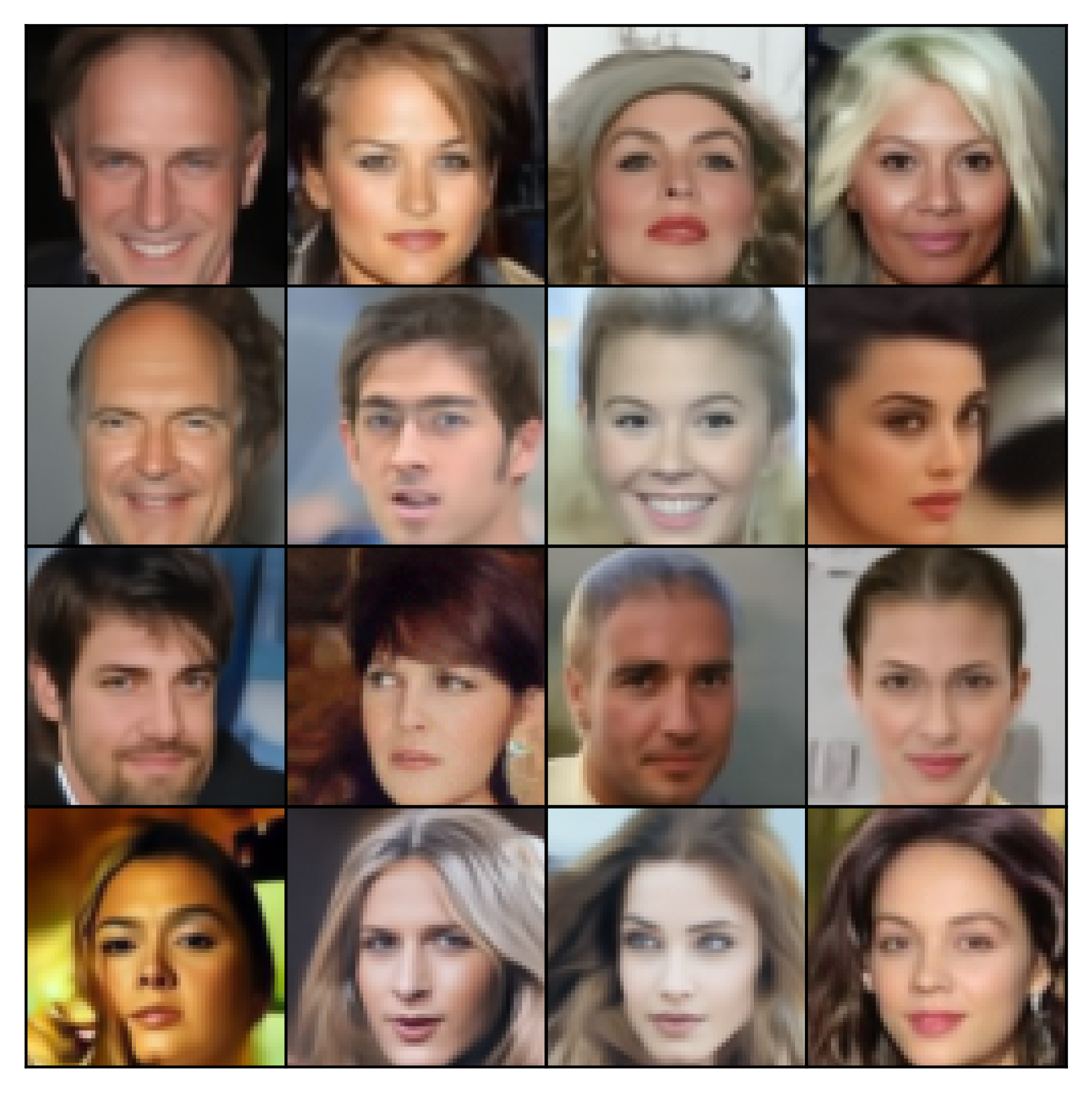}
        \caption{\centering\textcolor{black}{$L=20, \sigma_{0}=1, d=0.4L$.}}
        \label{fig:sub3}
    \end{subfigure}
    \hfill
    \begin{subfigure}[b]{0.23\textwidth}
        \centering
        \includegraphics[width=\linewidth]{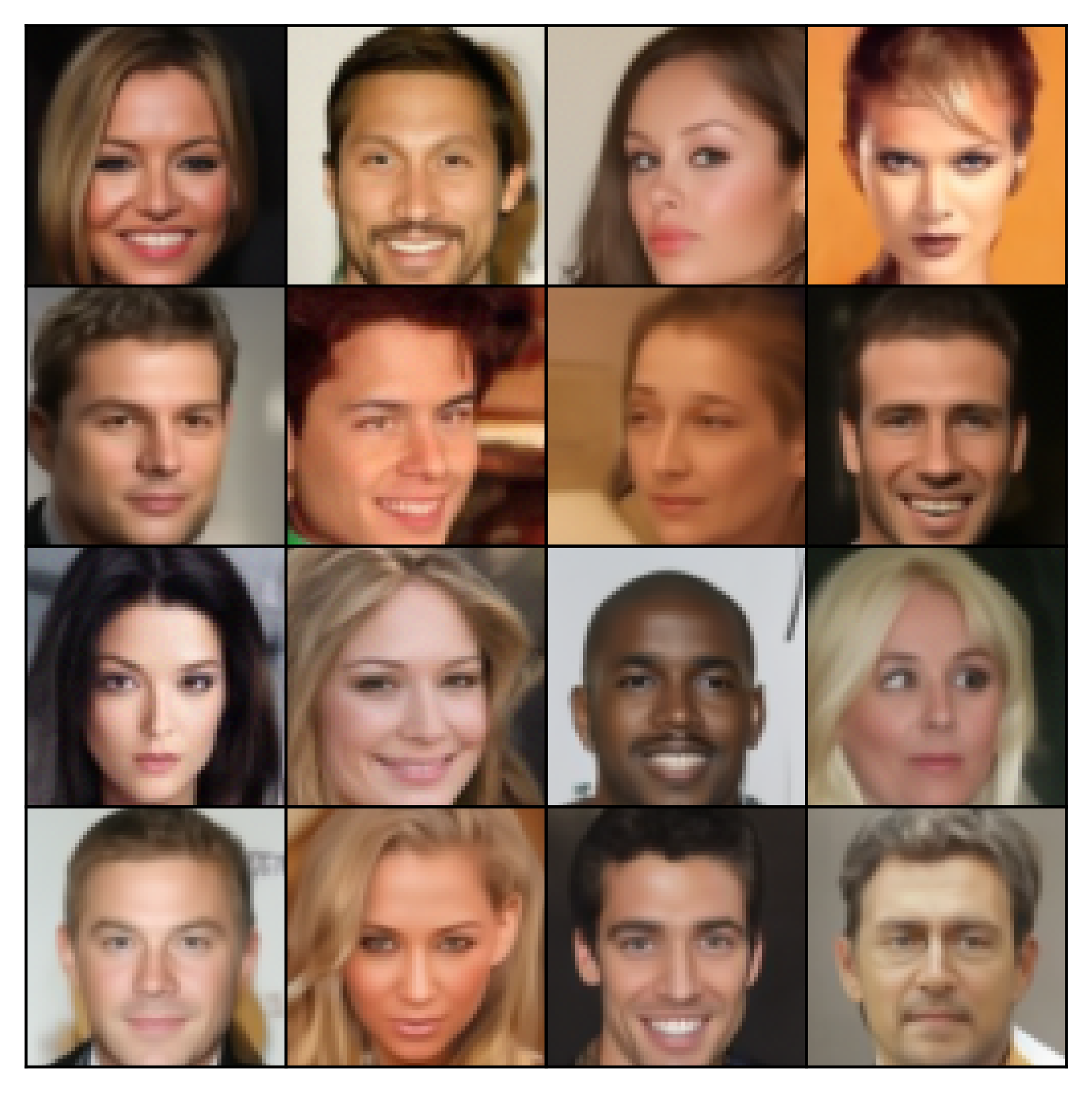}
        \caption{\centering\textcolor{black}{$L=20,\sigma_{0}=1,d=0.5L$.}}
        \label{fig:sub4}
    \end{subfigure}
    \caption{\centering \textcolor{black}{Image Generation on CelebA 64x64: Investigation of generation quality dependence on the model hyperparameter $d$ in our IFM method.}}
    \label{fig:four_images}
\end{figure}

\section{\textcolor{black}{Additional CelebA generation experiment (128x128)}}
\label{app:128_celeba}

\textcolor{black}{We also provide a more challenging image generation task on the 128×128 CelebA dataset. We follow the experimental design from the \wasyparagraph\ref{generation_exp}, placing the CelebA images and the noise from standard multivariate distribution $\mathcal{N}(0,I_{128\times128})$ on the left hyperplane ($z=0$) and the right hyperplane ($z=20$), respectively. We present the qualitative results of our IFM in Fig. \ref{fig:128_celeba}, demonstrating its scalability in high-dimensional spaces.}

\begin{figure}[!h]
    \centering
    \includegraphics[width=0.5\textwidth]{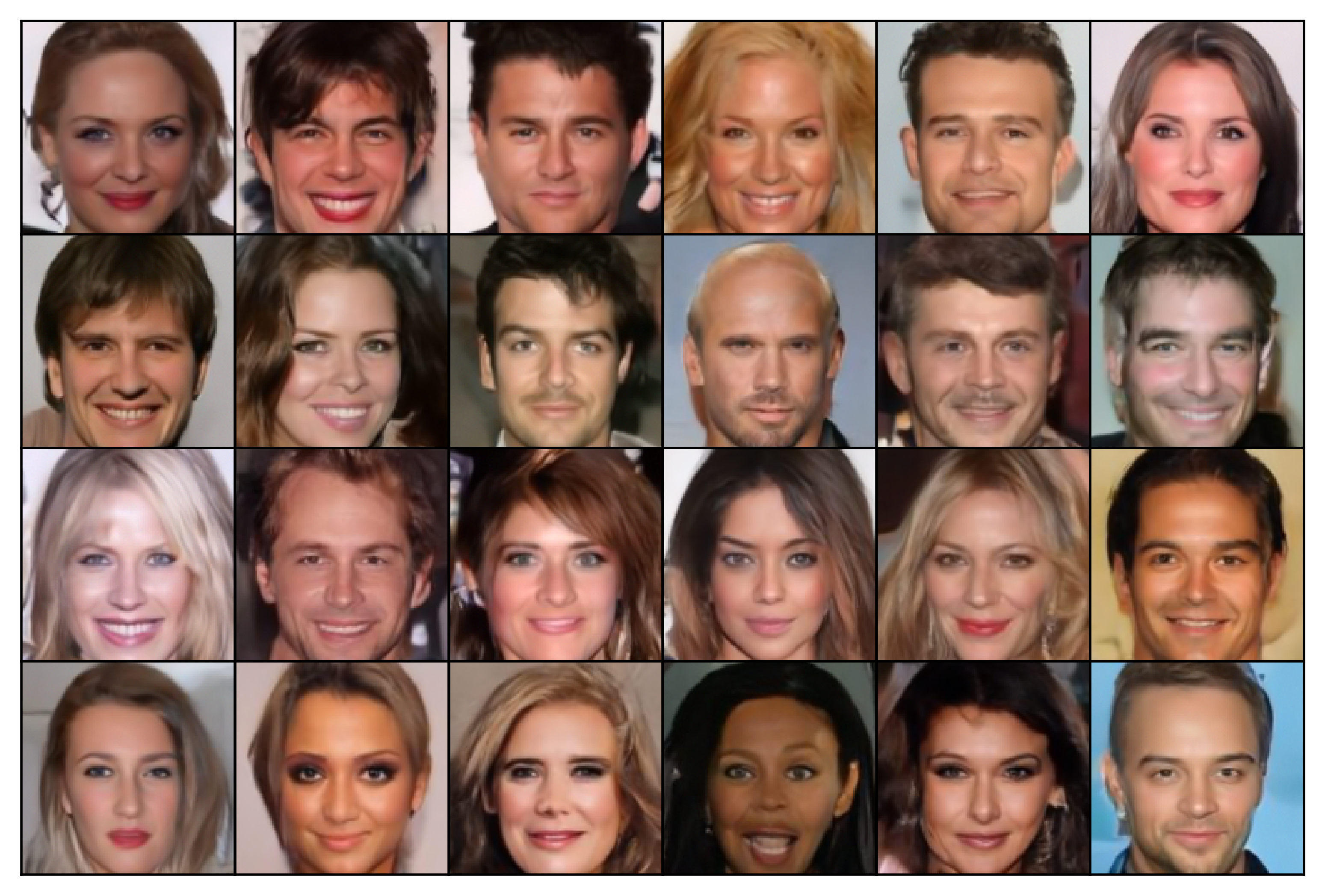}
    \caption{ \centering \textcolor{black}{\textit{Image Generation}: Samples obtained by IFM (ours) method with the independent transport plan on CelebA  dataset with resolution 128x128.}}
    \label{fig:128_celeba}
\end{figure}

\section{\textcolor{black}{Conditional Image Generation on CIFAR-10}}
\label{app:conditional_cifar}

\textcolor{black}{ Our IFM  can be easily adapted to conditional generation tasks. For generating images of a specific class $c$,  we learn a conditional vector field $\textbf{E}(\widetilde{\textbf{x}}| c)$. Specifically, for a data sample  $\widetilde{\textbf{x}}_{q} = (\textbf{x}_{q},0)$ from $c$-th class, we sample noised sample $\widetilde{\textbf{x}}$ via (\ref{middle_point_sample}) and approximate $\textbf{E}(\widetilde{\textbf{x}}| c)$ by a neural network $f_{\theta}(\widetilde{\textbf{x}},c)$ with the following optimization function over parameters $\theta$: }

$$ \textcolor{black}{\mathbb{E}_{c}\mathbb{E}_{\widetilde{\textbf{x}}|c}|| f_{\theta}(\widetilde{\textbf{x}},c) - \frac{\textbf{E}(\tilde{\textbf{x}}| c)}{||\textbf{E}(\tilde{\textbf{x}}| c)||_{2}} ||^{2}_{2} \to \min_{\theta}}.$$

\textcolor{black}{We consider conditional generating task  on the 32x32 CIFAR-10 dataset and demonstrate generated images over each class $c$ in Fig. \ref{fig:cifarcond}}

\begin{figure}[!h]
    \centering
    \includegraphics[width=0.6\textwidth]{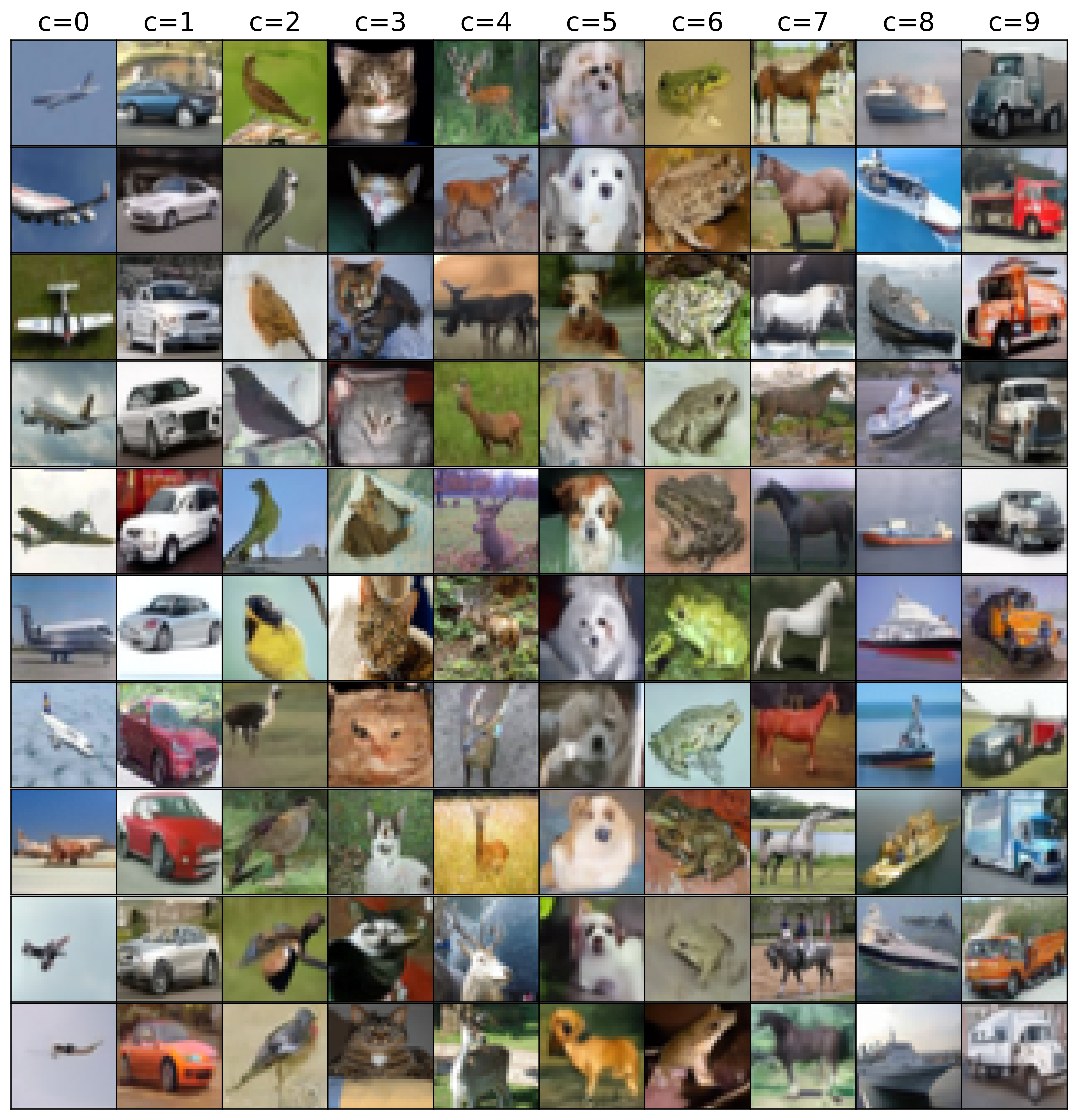}
    \caption{ \centering \textcolor{black}{\textit{Conditional Image Generation}: Samples obtained by conditional vector field $\textbf{E}(\tilde{\textbf{x}}|c)$ of IFM(ours) method on CIFAR-10 dataset  for each class $c$.}}
    \label{fig:cifarcond}
\end{figure}

\section{\textcolor{black}{Comparison with Flow Matching}}
\label{app:comparison_FM}

\textcolor{black}{Our IFM framework offers an important advantage compared to Flow Matching \citep{lipmanflow,liu2022flow,tong2023conditional}: it enables \textbf{multi-sample estimation of the field}.}

\textcolor{black}{In particular, \textbf{our IFM method} approximates the normalized vector field with a neural network $f_{\theta}(\widetilde{\mathbf{x}})$, trained with the loss
\[
\mathcal{L}_{\text{IFM}}
= \mathbb{E}_{\widetilde{\mathbf{x}}} \left\| 
f_{\theta}(\widetilde{\mathbf{x}}) 
- \frac{\mathbf{E}(\widetilde{\mathbf{x}})}{\|\mathbf{E}(\widetilde{\mathbf{x}})\|} 
\right\|_2^{2},
\]
which requires an estimate of the ground-truth vector field $\mathbf{E}(\widetilde{\mathbf{x}})$.  
The distribution over points $\widetilde{\mathbf{x}}$ at which the field is learned serves as a hyperparameter. 
Since the field $\mathbf{E}(\widetilde{\mathbf{x}})$ is represented using the superposition principle (\ref{Strong_superposition}), we can estimate it by averaging over fields induced by $B$ batch samples (quark and anti-quark pairs) $\widetilde{\mathbf{x}}_{q} = (\mathbf{x}_{q},0)$ and $\widetilde{\mathbf{x}}_{\bar{q}} = (\mathbf{x}_{\bar{q}},L)$:}
\[
\textcolor{black}{
\mathbf{E}(\widetilde{\mathbf{x}}) 
\approx \frac{1}{B}\sum_{i=1}^{B} 
\mathbf{E}_{\mathbf{x}_{q_i}\mathbf{x}_{\bar{q}_i}}(\widetilde{\mathbf{x}}),
}
\]
\textcolor{black}{where each $\mathbf{E}_{\mathbf{x}_{q_i}\mathbf{x}_{\bar{q}_i}}(\widetilde{\mathbf{x}})$ admits a closed form (see Appendix~\ref{app3}).  
\textit{Thus, we can use any available number of sample pairs $(\widetilde{\mathbf{x}}_{q}, \widetilde{\mathbf{x}}_{\bar{q}})\sim\pi$—up to the entire dataset—to estimate the ground-truth field and reduce the variance of this Monte Carlo estimator.}}

\textcolor{black}{In contrast, the \textbf{Flow Matching (FM)} loss is
\[
\mathcal{L}_{\text{FM}}
= \mathbb{E}_{t\in[0,1],\,(\mathbf{x}_0,\mathbf{x}_1)\sim\pi}
\left\| 
\mathbf{v}_{\theta}(\mathbf{x}_{t},t) - (\mathbf{x}_1 - \mathbf{x}_0) 
\right\|_2^{2},
\]
where $\mathbf{x}_{t}=t\mathbf{x}_{1}+(1-t)\mathbf{x}_{0}$.  
The optimal vector field is 
$\mathbf{v}^{*}(\mathbf{x}_{t},t) = \mathbb{E}[\mathbf{x}_{1} - \mathbf{x}_{0} \mid \mathbf{x}_{t}]$, 
but this conditional expectation is intractable to estimate via Monte Carlo because one cannot easily sample $\mathbf{x}_{1},\mathbf{x}_{0}$ conditioned on $\mathbf{x}_{t}$.  
Thus, during training, one regresses $\mathbf{v}_{\theta}(\mathbf{x}_{t},t)$ to its single-sample estimate}
\[
\textcolor{black}{
\mathbf{v}_{\theta}(\mathbf{x}_{t},t)\approx \mathbf{x}_{1} - \mathbf{x}_{0}.
}
\]
\textcolor{black}{\textit{Therefore, FM estimates the ground-truth field at each point $\mathbf{x}_{t}$ using only one pair $(\mathbf{x}_{0},\mathbf{x}_{1})\sim\pi$, with no direct way to reduce the variance of this Monte Carlo estimate.}}

\textcolor{black}{We sum up differences between FM and our IFM in Table \ref{tabl: fm_vs_efm} .} 
\newpage
\vspace{-3mm}
\begin{table}[!ht]
\centering
\scriptsize
\arrayrulecolor{black}
\begin{tabular}{|l|l|l|}
\hline                &  \textcolor{black}{IFM (ours)} & \textcolor{black}{FM} \\ \hline
\textcolor{black}{Estimation of a field} & \textcolor{black}{Multi-sample: $\textbf{E}(\widetilde{\textbf{x}})$ over B pairs $(\textbf{x}_{0},\textbf{x}_{1}) \sim \pi$ }   & \textcolor{black}{One-sample: $\textbf{v}(\textbf{x}_{t})$ over one pair $(\textbf{x}_{0},\textbf{x}_{1}) \sim \pi$ }   \\ \hline
\textcolor{black}{Training volume}&  \textcolor{black}{Any: $\widetilde{\textbf{x}} = (\textbf{x},z) : z\in [0,L]$} & \textcolor{black}{Restricted: $\textbf{x}_{t}=t\textbf{x}_{1}+(1-t)\textbf{x}_{0}$}  \\ \hline
\end{tabular}
\caption{\centering \textcolor{black}{The differences between our IFM and Flow Matching (FM).}}
\label{tabl: fm_vs_efm}
\end{table}

\end{document}